\documentclass[]{ant_tech_report}

\usepackage{hyperref}
\usepackage{cleveref}
\usepackage{verbatim}

\usepackage{wrapfig}
\usepackage{graphicx}
\usepackage{floatrow}
\usepackage{subcaption}
\usepackage{listings}
\usepackage{algorithm}
\usepackage{subcaption} % 
\usepackage[toc,page,header]{appendix}
\usepackage{minitoc}

\title{Perceptual Flow Network for Visually Grounded Reasoning}

\author{%
\parbox{\textwidth}{\centering
Yangfu Li$^{*,1,5,\sharp}$, 
Yuning Gong$^{*,2,6,\sharp}$, 
Hongjian Zhan$^{1,\S}$, 
Teng Li$^{3,5,\sharp}$, 
Yuanhuiyi Lyu$^{3,5,\sharp}$\\[2mm]
Tianyi Chen$^{4}$, 
Qi Liu$^{1}$, 
Ziyuan Huang$^{5}$, 
Zhihang Zhong$^{4,\S}$, 
Dandan Zheng$^{5,\S}$, 
Yue Lu$^{1}$
}}

\affiliation{%
\parbox{\textwidth}{\centering\small
$^1$ECNU,
$^2$SCU,
$^3$HKUST,
$^4$SJTU,
$^5$Ant Group,
$^6$Shanghai AI Laboratory
}}

\contribution[*]{Equal contribution}
\contribution[\S]{Corresponding Author}
\contribution[\sharp]{Work done during internship}

\abstract{
Despite the success of Large-Vision Language Models (LVLMs), general optimization objectives (\eg, standard MLE) fail to constrain visual trajectories, leading to language bias and hallucination. To mitigate this, current methods introduce geometric priors from visual experts as additional supervision. However, we observe that such supervision is typically suboptimal: \emph{it is biased toward geometric precision and offers limited reasoning utility}. To bridge this gap, we propose Perceptual Flow Network (PFlowNet), which eschews rigid alignment with the expert priors and achieves interpretable yet more effective visual reasoning. Specifically, PFlowNet decouples perception from reasoning to establish a self-conditioned generation process. Based on this, it integrates multi-dimensional rewards with vicinal geometric shaping via variational reinforcement learning, thereby facilitating reasoning-oriented perceptual behaviors while preserving visual reliability. PFlowNet delivers a provable performance guarantee and competitive empirical results, particularly setting new SOTA records on V* Bench (90.6\%) and MME-RealWorld-lite (67.0\%).
}
\date{\today}
\checkdata[E-mail]{yfli\_cee@stu.ecnu.edu.cn}
\checkdata[Status]{\emph{Accepted to the 43rd International Conference on Machine Learning (ICML 2026).}}

\begin{document}
\maketitle

\section{Introduction}

Large Vision-Language Models (LVLMs) extend pretrained Large Language Models (LLMs) by integrating sophisticated vision encoders~\cite{radford2021learning} and cross-modal alignment~\cite{liu2023llava}, achieving remarkable performance across diverse visual tasks~\cite{bai2511qwen3,bai2025qwen25vl,liu2024llavanext,lyu2026struvis}. However, LVLMs still face challenges with interpretability and hallucination, particularly in complex scenarios, \eg, fine-grained visual understanding. 

To enhance reliability, recent advances~\cite{liu2025look,wang2025traceable,wang2025vgr,sarch2025grounded,liu2025visual} distill geometric priors from visual experts, \eg, GroundingDINO~\cite{liu2024grounding}, into LVLMs via \emph{Reinforcement Learning with Verifiable Reward} (RLVR). By directly maximizing geometric consistency between LVLM predictions and expert priors, these approaches effectively anchor intermediate reasoning processes in visual evidence.
Despite this progress, a critical question remains: 
\begin{insightline}
% \small
\setlength{\parskip}{0.6em}
\setlength{\parindent}{0pt}
The visual experts are initially designed for object detection; thus, are the geometric priors derived from these experts truly optimal for visual reasoning?
\end{insightline}

\textbf{Preliminary Study.} To investigate this, we conduct a probing study using Qwen2.5-VL~\cite{bai2025qwen25vl} family on V*~\cite{wu2024vstar}. This benchmark encompasses \emph{direct attribute} recognition and \emph{spatial relation} reasoning, backed by fine-grained expert annotations. We generate varying geometric priors by isotropically expanding the original annotations from their centers. By feeding the models directly with these evidence crops instead of full images, we measure the reasoning utility of different geometric priors. As illustrated in~\Cref{fig:1}, we observe a counterintuitive result: the most precise geometric prior, \ie, expert annotation, is \emph{not} the most helpful for reasoning. We attribute this to a fundamental mismatch between the design principles of visual experts and LVLMs. While these experts are optimized to localize evidence with strict geometric precision, such an approach may induce a \emph{tunnel vision} effect during reasoning. This effectively excludes context necessary for comprehensive understanding and degrades performance.

A natural intuition is to approximate the \emph{golden} evidence by applying heuristic transformations to expert priors, thereby constructing less biased geometric guidance for LVLMs. However, we find that the optimal geometric prior is highly \emph{instance-specific}, making such strategies intractable.

\begin{figure}[t]
	\centering
	\includegraphics[width=1\linewidth]{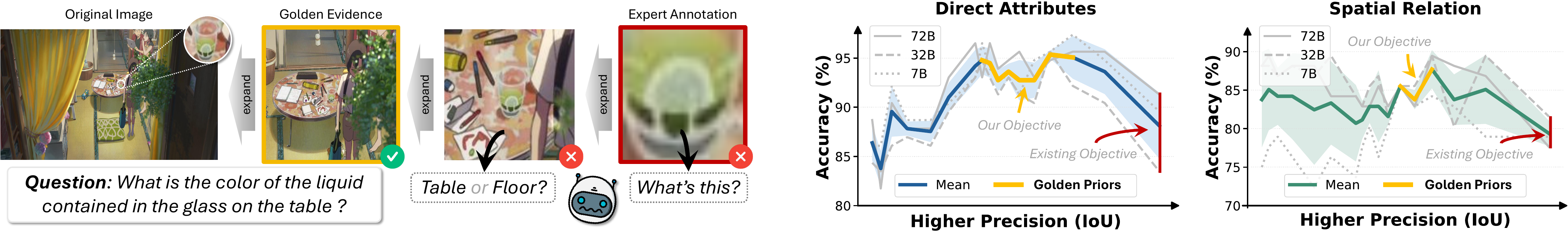}
	\caption{Impact of evidence geometric precision (IoU \emph{w.r.t.} the expert annotations) on reasoning performance (accuracy). The evidence with \emph{minimum} and \emph{maximum} precision is actually the full image and the expert annotation (\emph{outlined in red}), respectively.
    }
	\label{fig:1}
\end{figure}

Motivated by this challenge, we propose Perceptual Flow Network (\textbf{PFlowNet}). Instead of constraining visual rationales via rigid alignment with static geometric priors, PFlowNet employs a \emph{self-parameterized} variational distribution to approximate the posterior of idealized perceptual behaviors. By sampling from the optimized intrinsic distribution, PFlowNet \emph{self-conditions} its subsequent reasoning process, yielding grounded yet more accurate outputs. To realize this, PFlowNet features three key innovations: 
\begin{enumerate}[leftmargin=*,
                  itemsep=2pt,    
                  parsep=0pt,    
                  topsep=2pt,     
                  partopsep=0pt]  
                  
    \item[\ding{182}] \textbf{Perceptual Flow}, \ie, a structured trajectory formulation, designed to effectively characterize perceptual behaviors in LVLMs, facilitating efficient optimization via hierarchical variational objectives, \eg, Sub-Trajectory Balance (SubTB). 
    \item[\ding{183}] \textbf{Decoupled Framework} that separates optimizable perceptual behaviors from model's reasoning process, thereby enabling visually grounded reasoning via a \emph{self-conditioned} autoregressive generation.
    \item[\ding{184}] \textbf{Variational Reinforcement Fine-Tuning Strategy} that integrates a \emph{multi-dimensional reward function} with a \emph{vicinal geometric shaping} scheme to encourage visual-reliable yet reasoning-oriented perceptual behaviors.
\end{enumerate}

Building on these, we provide theoretical analysis that establishes a provable performance guarantee for PFlowNet, as detailed in~\Cref{thm:1,thm:2}. Moreover, comprehensive experimental results demonstrate its superiority across both general-purpose and fine-grained visual tasks from the empirical perspective. Importantly, it achieves substantial improvements of 13.1\%, 10.4\% and 21\% over the base model (\ie, Qwen3-VL 8B) on V* Bench, TreeBench, and MME-RealWorld-lite, respectively. Further analysis highlights its favorable performance-efficiency balance and effective test-time scaling properties.

\section{Background and Motivation}
\subsection{Problem Formulation}
Let $\mathcal{M}_\theta$ denote an LVLM parameterized by $\theta$, built upon a standard transformer architecture~\cite{vaswani2017attention}. Given a multimodal input $X$ (\eg, images and instructions), $\mathcal{M}_\theta$ defines an autoregressive conditional distribution:
\[
p_\theta(Y \mid X) = \prod_{t=1}^{T} p_\theta(y_t \mid X, y_{<t}),    
\]
where $Y = (y_1,y_2\dots,y_T)$ represents the output token sequence conditioned on $X$. Conventionally, $\mathcal{M}_\theta$ is optimized via Maximum-Likelihood Estimation (MLE):
\[
\max_{\theta} \;
\mathbb{E}_{(X,Y)\sim P_{\rm data}}
\big[ \log p_\theta(Y \mid X) \big].
\]
Despite the remarkable efficacy of this paradigm, it remains challenging to mitigate hallucination in $\mathcal{M}_\theta$~\cite{liu2024survey,gunjal2024detecting,chen2024multi}, particularly in visual-centric applications (\eg, fine-grained visual search).  
To formalize this, we consider the visual reasoning trajectory (\eg, the sequence of RoIs) as a latent variable $Z$. In this view, the fundamental cause of hallucination stems from an ill-posed posterior $P(Z \mid X, Y)$ that may assign probability mass to \emph{invalid} trajectories $Z$. Inspired by the success of RLVR in LLMs~\cite{guo2025deepseekr1}, recent works explore incorporating geometric priors as verifiable rewards to constrain $Z$ for Visually Grounded Reasoning (VGR). 
\begin{theorybox}
\begin{definition}[Visually Grounded Reasoning]
\label{def:grounded_reasoning}
Consider an input-output pair $(X,Y)$ and a \emph{golden} visual trajectory $G$ that mediates the inference process $X\xrightarrow{G}Y$. We define $\mathcal{S}_{\rm V}$ as the support of all \emph{valid} visual trajectory $Z$, which is the $\sigma$-neighborhood of $G$ under a deviation metric $d(\cdot,\cdot)$:
\(
\mathcal{S}_{\rm V} \coloneqq \big\{ Z \mid d(Z, G) \le \sigma \big\}.
\)
\emph{Target posterior} $P_V(Z\mid X,Y)\coloneqq P(Z \mid X, Y, Z\in\mathcal{S}_V)$ is given by assigning its probability mass exclusively to this support, and visually grounded reasoning is formulated as
\[
\textstyle
\max_{\theta} \;
\mathbb{E}_{(X,Y)\sim P_{\rm data}}
\Big[\, \log \int_{\mathcal{S}_{\rm V}} p_\theta(Y, Z \mid X)\, dZ\, \Big],
\]
which encourages $\mathcal{M}_\theta$ to both yield the correct answer $Y$ and anchor its latent visual rationales $Z$ to $G$.
\end{definition}
\end{theorybox}

\begin{figure*}[t]
	\centering
	\includegraphics[width=0.95\linewidth]{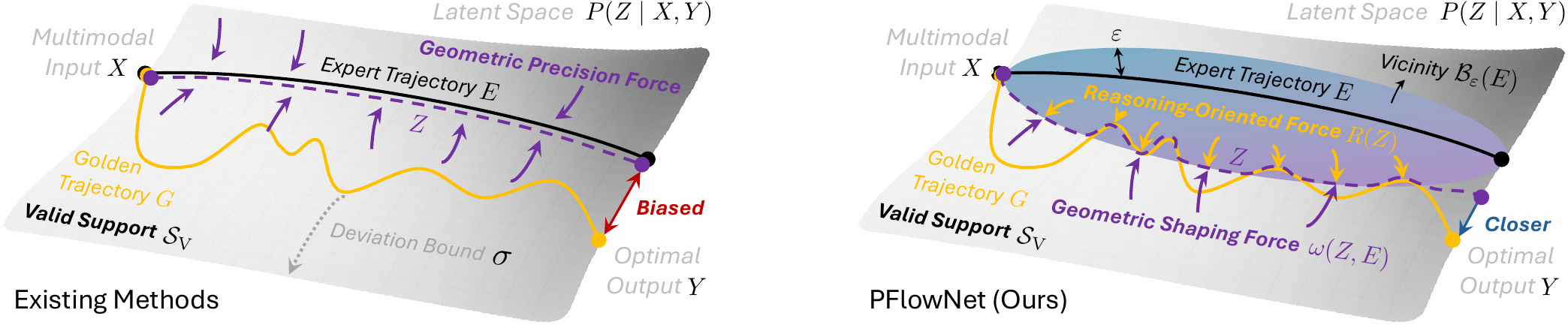}
	\caption{Illustration of feasible regions ($\mathcal{S}_{\rm v}$) and optimization objectives for visually grounded reasoning. Existing methods constrain LVLMs to imitate expert trajectories by maximizing their geometric consistency, whereas PFlowNet integrates a reasoning-oriented reward with vicinal geometric shaping to achieve more sufficient yet controlled exploration, leading to reliable and high-efficacy reasoning.}
	\label{fig:3}
    % \vspace{-4mm}
\end{figure*}

\vspace{-1mm}
\subsection{Revisit VGR as Reasoning over Perceptual Flow}
\vspace{-0.5mm}
The golden trajectory $G$ is generally intractable; thus, previous works typically adopt well-trained visual experts, \eg, GroundingDINO~\cite{liu2024grounding} to synthesize a proxy for $G$. However, these experts are initially optimized for grounding rather than downstream reasoning. As a result, the synthetic trajectory is biased toward high geometric precision rather than reasoning utility, leading to suboptimal performance of the policy $p_\theta(Y|X)$, as revealed in~\Cref{fig:3}. 
To address this misalignment, we apply a \emph{self-parameterized} variational distribution $p_\theta(Z \mid X)$ to approximate the target posterior $P_V(Z\,|\, X,Y)$, achieving VGR via a latent-variable mixture:
\[
\underbrace{p_\theta(Y, Z \mid X)}_{\textrm{VGR}}\ =\ p_\theta(Z \mid X)\ \, p_\theta(Y \mid X, Z)\ =\  \underbrace{P_V(Z\,|\, X,Y)}_{\textrm{Approximated}}\ \underbrace{p_\theta(Y \mid X, Z)}_{\textrm{Grounded Reasoning}},
\]
which is the key insight of the proposed PFlowNet. To more precisely characterize the behaviors of LVLMs and thus facilitate the optimization of $p_\theta(Z \mid X)$, we further introduce the concept of \emph{Perceptual Flow}. % in the following.
\begin{theorybox}
\begin{definition}[Perceptual Flow]
\label{def:1}
Given an input $X$, we define \textbf{Perceptual Flow} $Z = (z_0\rightarrow z_1 \dots  z_K)$ as a structured latent trajectory that explicates the visual thoughts. It comprises two distinct states:
\begin{enumerate}[leftmargin=*, itemsep=0pt, parsep=2pt, topsep=2pt, partopsep=0pt]
    \item[$\diamondsuit$] \textbf{Planning State} ($z_0$): A language sequence enclosed by special tokens $\langle \texttt{\small analyze} \rangle$ and $\langle/ \texttt{\small analyze} \rangle$. This state decomposes the query within $X$ and identifies relevant visual candidates for subsequent exploration.
    \item[$\diamondsuit$] \textbf{Perceptual States} ($z_{\ge1}$): A chain of grounded observations enclosed by $\langle \texttt{\small localize} \rangle$ and $\langle/ \texttt{\small localize} \rangle$. Each state $z_k = \langle r_k, c_k \rangle$ consists of a Region of Interest (RoI) $r_k \in \mathbb{N}^4$ (represented in \emph{relative} coordinates, \eg, from 0 to 1000) and a corresponding descriptive caption $c_k$.
\end{enumerate}
\end{definition}
\end{theorybox}

Leveraging this design, we incorporate Sub-Trajectory Balance (Sub-TB)~\cite{madan2023learning}, a hierarchical variational objective. Unlike PPO-like RL paradigm, this formulation provides dense intermediate supervision, thereby facilitating diverse perceptual behaviors. Formally, given a perceptual flow $Z\!\sim\!p_\theta(Z\,|\,X)$, let $z_{i:j} \subseteq Z$ be any sub-trajectory indexed by $0\!\le\!i\!\le\!j\!\le K$, the Sub-TB objective derived by a divergence metric $D$ is defined as:
\begin{equation}
 \min_\theta\ \sum_{i,j}  D\big(\mathcal{F}(z_i)\, \mathcal{T}_F(z_{i:j})\, \|\, (\mathcal{F}(z_j)\, \mathcal{T}_B(z_{j:i})\big)
\label{equ:6}
\end{equation}
where $\mathcal{T}_F(z_{i:j}) = \prod_{k=i+1}^j p_\theta(z_k \mid z_{k-1})$, $\mathcal{T}_B(z_{j:i}) = \prod_{k=i+1}^j p_\theta(z_{k-1} \mid z_k)$ denotes the forward, backward transitions over the flow $Z$, and $\mathcal{F}(z)$ is the total probability mass of all flows passing through the state $z$.

\begin{figure*}[t]
	\centering
	\includegraphics[width=\linewidth]{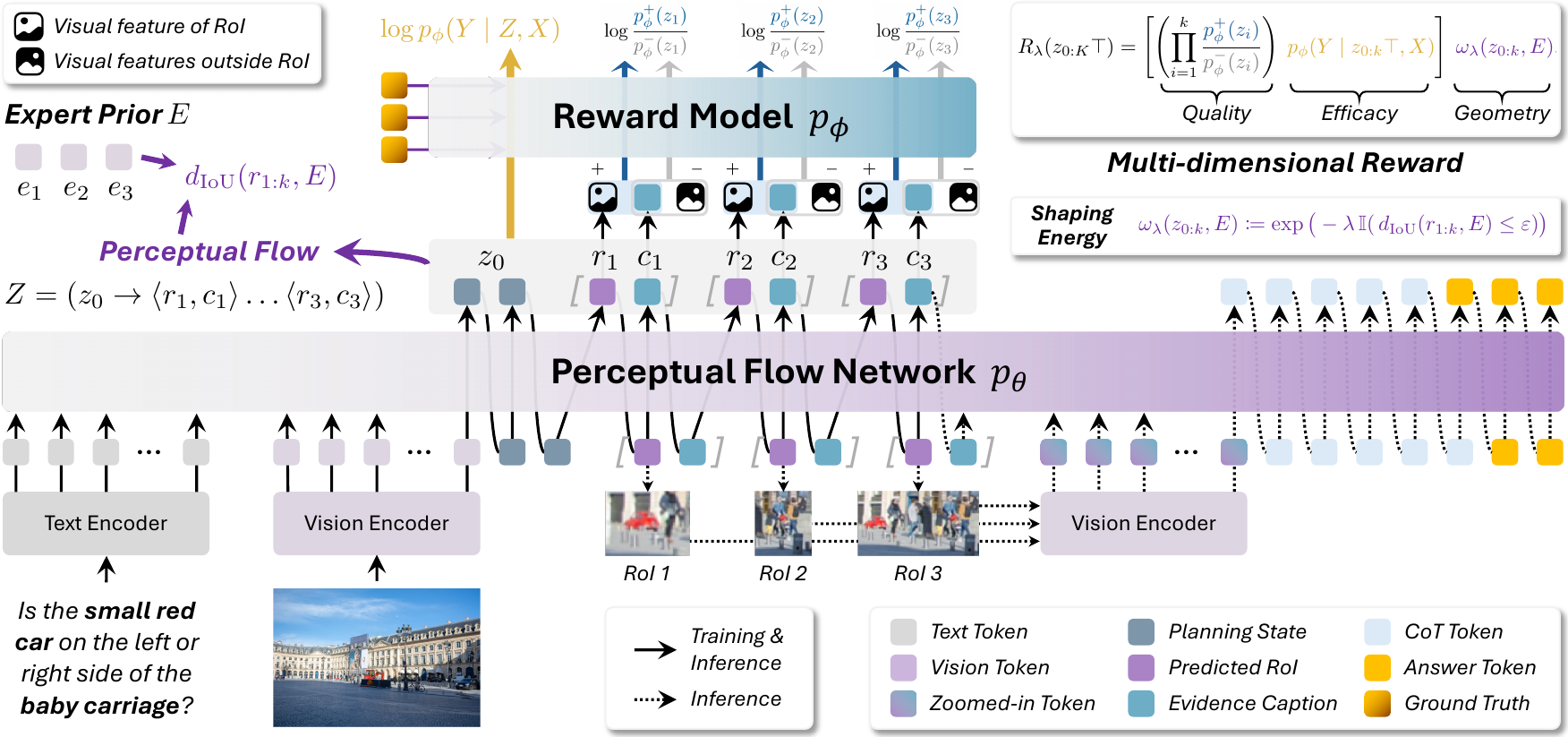}
	\caption{Overview of PFlowNet that consists of two \emph{decoupled} stages: flow generation and flow-guided reasoning. We leverage a frozen reward model with the multi-dimensional reward to guide PFlowNet toward reasoning-oriented yet visually reliable perceptual flows. During reasoning, PFlowNet integrates the textual flow with corresponding visual features to derive interpretable and accurate answers.
    }
	\label{fig:4}
\end{figure*}
\section{Perceptual Flow Network}
The overall architecture of PFlowNet is shown in~\Cref{fig:4}. Formally, let $X\coloneqq \langle I,T \rangle$ denote a multimodal input consisting of an image $I$ and an instruction $T$. PFlowNet first samples the perceptual flow $Z$ from its \emph{intrinsic} distribution and then yields the grounded output $Y$ via a \emph{self-conditioned} generation. The joint distribution is factorized as:
\[
p_\theta(Y,\,Z\mid X)=p_\theta(Z\mid X)\ p_\theta(Y\mid Z,\, \langle X,I_{\rm RoI}  \rangle),
\]
where $I_{\rm RoI}$ denotes the region of interest from the image $I$ conditioned on the perceptual flow $Z$. 
To effectively optimize the parameterized variational distribution $p_\theta(Z \mid X)$, we employ a progressive training paradigm. First, guided by the insights in~\Cref{fig:1}, we design a tailored data pipeline to synthesize fine-grained trajectories, explicitly aimed at preliminarily mitigating the inductive bias inherent in visual experts. Based on this, we bootstrap the model's capability to generate perceptual flows via Supervised Fine-Tuning (SFT). Furthermore, we propose a variational Reinforcement Fine-Tuning (RFT) strategy that integrates a carefully-designed reward and a vicinal geometric shaping to ensure a better approximation of the target posterior. This design liberates the model from the constraints of expert geometric priors, enabling it to extensively explore genuinely effective perceptual behaviors while maintaining visual reliability.

\newpage
\subsection{Training Data Curation \& Cold Start}
\label{sec:data}

\begingroup
\setlength{\columnsep}{0.05\textwidth}
\setlength{\intextsep}{0pt}            

\begin{wrapfigure}{r}{0.45\textwidth}
    \vspace{-1.5\baselineskip}
    \centering
    \includegraphics[width=\linewidth]{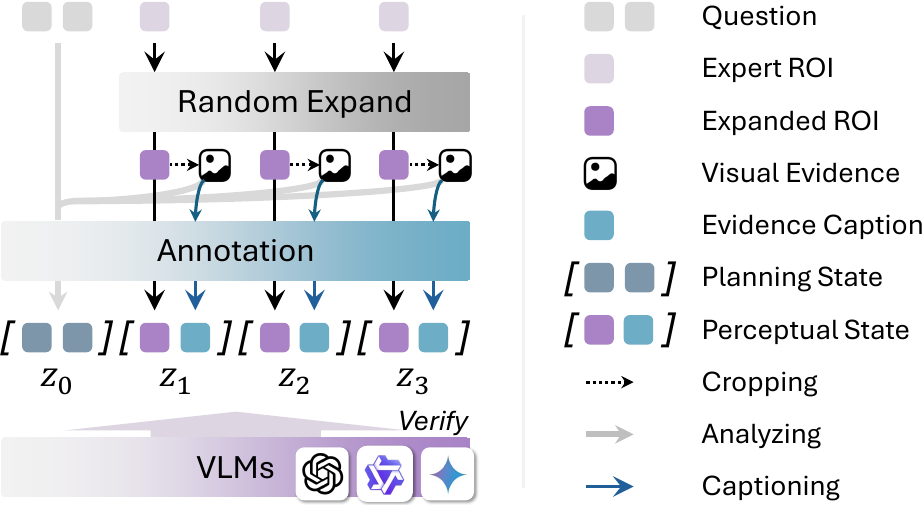}
    \caption{Data pipeline for perceptual flow synthesis.}
    \label{fig:5}
    \vspace{-0.0\baselineskip}
\end{wrapfigure}

\textbf{Data Collection.} We curate high-quality data for cold start and subsequent RFT based on two principles:

\vspace{-1mm}
\par%\smallskip
\noindent
\hangindent=1.6em\hangafter=1
\makebox[1.4em][l]{$\diamondsuit$}\textbf{Diverse Tasks.}
We consider a broad spectrum of visual tasks, spanning both fine-grained understanding and general-purpose scenarios, which ensures the model develops generalizable perceptual behaviors. 

\vspace{-1mm}
\par%\smallskip
\noindent
\hangindent=1.6em\hangafter=1
\makebox[1.4em][l]{$\diamondsuit$}\textbf{Diverse RoIs.}
To prevent overfitting to specific spatial patterns, we perform cross-expert annotation for each sample and preserve the samples whose RoIs have sufficiently broad and diverse spatial coverage.

\par
\endgroup
% \vspace{0.5em}

\begin{figure*}[b]
	\centering
	\includegraphics[width=0.9\linewidth]{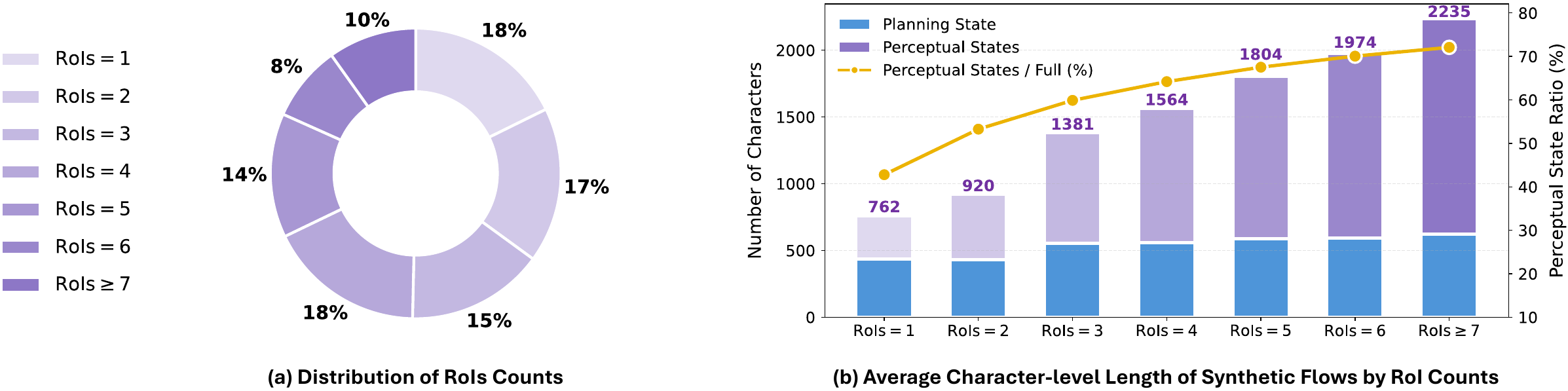}
	\caption{Statistics of the Cold-Start Dataset. Notably, as the number of RoIs increases, the average character length of the \emph{Planning State} remains largely stable, whereas that of the \emph{Perceptual States} grows substantially.
    }
	\label{fig:11}
    \vspace{-1mm}
\end{figure*}

\begin{table}[t]
\centering
\small
\caption{
Training data construction via verifier-based filtering and difficulty-aware splitting.
Here, $Z_{\rm s}$ denotes the synthetic perceptual flow, and $k_{\rm pass}$ denotes the \emph{minimum} sampling budget required for the verifier to produce a correct answer, with $k_{\rm pass}>n$ indicating failure within $n$ decoded responses. In the data tuple, $E$ denotes the original expert RoIs before random expansion, and $Y$ denotes the \emph{accepted response} generated by the verifier conditioned on $Z_{\rm s}$.}

\label{tab:difficulty_control}
\setlength{\tabcolsep}{17pt}
\begin{tabular}{ll | ll}
\toprule
\textbf{Verification} \emph{w/o} $Z_{\rm s}$
& \textbf{Verification} \emph{w} $Z_{\rm s}$ 
& \textbf{Decision}
& \textbf{Data Tuple} \\
\midrule
$k_{\rm pass}=1$ 
& -- 
& \emph{Rejected} as trivial 
& -- \\
-- 
& $k_{\rm pass}>1$ 
& \emph{Rejected} as unverified flow
& -- \\
\midrule
$2 \le k_{\rm pass} \le 16$ 
& $k_{\rm pass}=1$  
& \emph{Accepted} to the RFT dataset 
& $(X,Y,E)$ \\
$k_{\rm pass}>16$ 
& $k_{\rm pass}=1$  
& \emph{Accepted} to the cold-start dataset 
& $(X,Z_{\rm s})$ \\

\bottomrule
\end{tabular}
\end{table}

\vspace{-1mm}
\textbf{Flow Synthesis.}
We construct training datasets by eliciting step-by-step trajectories from teacher models, \eg, Gemini3flash~\cite{gemini-3-flash} and GPT-4o~\cite{gpt4o}. As shown in~\Cref{fig:5}, for each sample equipped with expert RoIs, we first randomly expand each RoI to mitigate the inductive bias introduced by visual experts. The teacher is then prompted to (i) identify the critical visual content conditioned on both the question and the RoIs, which serves as the \emph{Planning State} $z_0$, and (ii) generate detailed captions for each piece of visual evidence. Each expanded RoI, together with its corresponding caption, is treated as a \emph{Perceptual State} $z_{\ge 1}$. The synthetic perceptual flow $Z_{\rm s}$ is formed by composing the planning state with all subsequent perceptual states.

\vspace{-1mm}
\textbf{Verification \& Difficulty Control.}
After synthesizing candidate flows for all collected samples, we perform verifier-based filtering under two settings: 
(i) direct answering without the synthetic flow, \ie, \emph{w/o} $Z_{\rm s}$; and (ii) answering conditioned on $Z_{\rm s}$ and the corresponding zoomed-in evidence, \ie, \emph{w/} $Z_{\rm s}$.
As summarized in~\Cref{tab:difficulty_control}, we first drop trivial samples and samples with unreliable flows, and then assign the remaining samples to either the cold-start set or the RFT set according to the performance gain induced by the synthetic grounding behaviors. Finally, the detailed statistics of the cold-start dataset are provided in~\Cref{fig:11}.

\vspace{-1mm}
\textbf{Cold Start.}
For each sample $(X,Z_{\rm s})$ from the cold-start set, we initialize the policy via \emph{supervised fine-tuning} by minimizing the cross-entropy loss between $p_\theta(Z\mid X)$ and the synthetic flow $Z_{\rm s}$. This teaches the policy to generate perceptual flows that benefit downstream reasoning.

\subsection{Variational Reinforcement Fine-tuning}
While our flow synthesis pipeline applies heuristic strategies to mitigate the inductive bias of visual experts, explicitly determining the golden flow for each sample remains inherently challenging. To address this, we propose variational RFT, which leverages a variational objective coupled with a \emph{tailored reward} function and a \emph{vicinal geometric shaping} scheme to ensure a better approximation of the target posterior $P_V$ by the policy $p_\theta(Z | X)$.
Specifically, given the RFT set $P_{\rm data}(X,Y,E)$ introduced in~\Cref{tab:difficulty_control}, let $R_\lambda(z_{0:i})\coloneqq \mathcal{F}(z_i)=\frac{R_\lambda(z_{0:i}\top)}{p_\theta(\top\mid z_{0:i})}$ be the reward of a trajectory ending at $z_i$, and $\top$ denotes the terminal state (\ie, $\langle \texttt{\small /localize} \rangle$ token), we derive the objective for variational RFT by reformulating~\Cref{equ:6} (see Appendix~\ref{app:A2}) as follows:
\begin{equation}
\mathcal{L}_{\rm vRFT}(\theta)=\mathbb{E}_{\substack{X,Y,E\sim P_{\rm data}\\ \{Z\}_{l=1}^L\sim p_{\theta}(\mathcal{Z}\mid X)}}
\left[\
\sum_{0\le i\le j\le |Z|}\right. 
\left.\left(\log\frac{R_\lambda(z_{0:i}\top)\prod_{k=i+1}^jp_\theta(z_k\mid z_{0:k-1})p_\theta(\top\mid z_{0:j})}{R_\lambda(z_{0:j}\top)\,p_\theta(\top\mid z_{0:i})}\!\right)^2\ \right].
\label{equ:8}
\end{equation}
Notably,~\Cref{equ:8} involves dense computations of rewards and transition probabilities for trajectories sharing the same sub-flow prefixes. For computational efficiency, we develop a parallel strategy to solve this objective, ensuring scalable optimization even for extensive perception chains (detailed in Appendix~\ref{app:B3}).

\textbf{Reward Design.}
To comprehensively characterize the perceptual behaviors, given any sub-flow $z_{0:k}\subseteq Z$, we design a multi-dimensional reward that jointly evaluates its \emph{quality} and \emph{reasoning efficacy}, which is defined as
\[
R(z_{0:k}\top)= \left(\prod_{i=1}^k \frac{p_\phi^+(z_i)}{{p_\phi^-(z_i)}}\right)\, p_\phi(Y\,|\, z_{0:k}\top, X)\ \Longleftrightarrow \
\log (R(z_{0:k}\top))= \sum_{i=1}^k\log \frac{p_\phi^+(z_i)}{{p_\phi^-(z_i)}}+\log p_\phi(Y\,|\, z_{0:k}\top, X),
\label{equ:9}
\tag{3}
\]
where $p_\phi$ is a frozen reward model sharing the same initialization as PFlowNet. The positive and negative visual-context likelihoods are defined as
\(
p_\phi^+(z_i)=p_\phi(c_i\mid I_{r_i}),
\
p_\phi^-(z_i)=p_\phi(c_i\mid I\setminus I_{r_i}),
% \label{equ:10}
% \tag{4}
\)
where $I_{r_i}=\mathrm{Crop}(r_i,I)$ denotes the zoomed-in visual evidence targeted by $r_i$, and $I\setminus I_{r_i}$ denotes the complementary region outside $r_i$. 

\textbf{Key Insights in Reward Design.}
For a sampled flow $z_{0:k}\subseteq Z$, the contrastive term $\prod p_\phi^+(z)/p_\phi^-(z)$ admits an interpretation as \emph{privileged-information distillation} for improving its quality in the reverse-KL sense. Formally, let $q_\theta^i(c)\coloneqq p_\theta(c_i=c\mid X,z_{<i},r_i)$ be the policy-induced caption distribution for the predicted $r_i$. Under the trajectory expectation in~\Cref{equ:8}, the expected contrastive reward over the sub-flow can be written as
\[
\mathbb{E}_{c_{1:k}\sim q_\theta^{1:k}}
\left[
\sum_{i=1}^k
\log \frac{p_\phi(c_i\mid I_{r_i})}
{p_\phi(c_i\mid I\setminus I_{r_i})}
\right] =
\sum_{i=1}^k
\left[
D_{\mathrm{KL}}\!\left(
q_\theta^i \,\|\, p_\phi(\cdot\mid I\setminus I_{r_i})
\right)
-
D_{\mathrm{KL}}\!\left(
q_\theta^i \,\|\, p_\phi(\cdot\mid I_{r_i})
\right)
\right].
\]
Therefore, maximizing the contrastive term $\prod p_\phi^+(z)/p_\phi^-(z)$ encourages each $q_\theta^i$ to be closer to the privileged teacher distribution conditioned on the zoomed-in evidence $I_{r_i}$, while moving it away from the noisy distribution conditioned on the less informative region $I\setminus I_{r_i}$. This facilitates visually grounded and semantically specific captions, while suppressing generic descriptions induced by language priors or reward hacking.

Furthermore, we adopt the \emph{information gain} provided by the sampled flow $z_{0:k}\subseteq Z$ for deriving the target response $Y$ to measure its \emph{reasoning efficacy}. Ideally, this \emph{information gain} can be characterized as:
\[
\log p_\phi(Y\,|\,z_{0:k}\top,X)-\log p_\phi(Y\,|\,X).
\]
For a fixed data $(X,Y)$ and reward model $p_\phi$, the term $\log p_\phi(Y\,|\,X)$ is constant \emph{w.r.t.} the sampled flow. 
Thus, maximizing $\log p_\phi(Y\,|\,z_{0:k}\top,X)$ favors perceptual flows with higher utility for inducing the target response $Y$.

\textbf{Vicinal Geometric Shaping.}
While the designed reward $R(Z)$ characterizes the \emph{utility} of a perceptual flow, it does not encode any geometric bias and may therefore encourage excessive exploration, yielding invalid trajectories outside the support $\mathcal{S}_{\rm V}$. Motivated by \emph{Vicinal Risk Minimization}~\cite{chapelle2000vicinal}, we introduce \emph{vicinal geometric shaping} that constrains variational inference to a vicinity around the expert prior. Distinct from existing methods that enforce strict alignment between the policy and visual prior, our scheme targets only samples outside the vicinity, balancing sufficient exploration with validity to discover high-efficacy perceptual behaviors.
We first define the \emph{directed} Chamfer IoU between two RoI sets $A$ and $B$:
\[
IoU_{A\rightarrow B}\!=\!\frac{1}{|A|}\sum\nolimits_{a\in A}\ \sup_{b\in B}{\rm IoU}(a,b), \ \, {\rm IoU}(a,b)\!=\!\frac{a\cap b}{a\cup b},
\]    
and the \emph{symmetrized} Chamfer-IoU distance:
\[
d_{\rm IoU}(A,B)=1-0.5*(IoU_{A\rightarrow B}+IoU_{B\rightarrow A}).
% \tag{5}
% \label{equ:iou}
\]
For any $(X,Y,E)\!\sim\!P_{\rm data}$, we define an $\varepsilon$-vicinity of the prior $E$ by a ball $\mathcal{B}_\varepsilon(E)\coloneqq \{z_{0:k} \mid d_{\rm IoU}(r_{1:k},E)\le \varepsilon \}$, and thus introduce an energy weight $\omega_\lambda(z_{0:k},E)\coloneqq \exp\!\big(-\lambda\,\mathbb{I}(z_{0:k}\notin \mathcal{B}_\varepsilon(E))\big)$, where $\varepsilon$ and $\lambda$ are a hyper-parameters. Notably, since $z_0$ is defined as the planning state without RoI, we define $\omega_\lambda(z_0,E)=1$. Finally, we shape the reward $R(z_{0:k}\top)$ via the geometric energy $\omega_\lambda(z_0,E)$, which is formulated as follow
\[
R_\lambda(z_{0:k}\top)\coloneqq R(z_{0:k}\top)\,\omega_\lambda(z_{0:k},E)=\left[\left(\prod_{i=1}^k \frac{p_\phi^+(z_i)}{{p_\phi^-(z_i)}}\right)\, p_\phi(Y\,|\, z_{0:k}\top, X)\right]\ \omega_\lambda(z_{0:k},E),
\label{equ:mmreward}
\tag{4}
\]
which penalizes excursions outside the vicinity and encourages $p_\theta$ to concentrate probability mass near $\mathcal{B}_\varepsilon(E)$.

\subsection{Theoretical Analysis}

In this section, we derive an \emph{idealized} performance bound for PFlowNet under strict assumptions (see Appendix~\ref{app:A1}) to characterize the effect of its key hyperparameters. By examining the limiting regimes of this bound, we show that the standard MLE and expert-guided RL arise as \emph{special cases} of PFlowNet, establishing a \emph{guaranteed improvement}.

Let $(X, Y)\!\sim\! P_{\rm data}$ be any data tuple. We denote the expert annotation as $E\!\sim\! P(\cdot \mid X,Y)$ and the \emph{golden} evidence as $G\!\sim\!P(\cdot \mid X,Y)$. Any perceptual flow is denoted by $Z\!\coloneqq\!(z_0, \langle R, C\rangle)$, parametrized by the planning state $z_0$, predicted RoI $R$, and captions $C$. To formalize the relationship between $Z, E, G$, we define the \emph{valid support} $\mathcal{S}_{\rm V}$ and the \emph{expert vicinity} $\mathcal{B}_{\varepsilon}$ based on $d_{\rm IoU}$ with $\sigma, \varepsilon \in[0,1]$:
\[
 \mathcal{S}_{\rm V}\!\coloneqq\!\{ Z\, |\, d_{\rm IoU}(R, G)\!\le\!\sigma \},\   \mathcal{B}_{\varepsilon}\!\coloneqq\!\{ Z\,|\, d_{\rm IoU}(R, E)\!\le\!\varepsilon \}.
\]    
Accordingly, denote $s_{\rm V}$ and $s_{\mathcal{B}}$ as the probability masses associated with the support and the vicinity:
\[
s_{\rm V} \coloneqq P(\mathcal{S}_{\rm V} \mid X,Y), \quad 
s_{\mathcal{B}} \coloneqq P(\mathcal{B}_{\varepsilon} \mid X,Y).
\]
Thereby, we model the learning objective using a $\lambda$-shaped posterior distribution $P_\lambda(Z \mid X,Y,E)$, which re-weights the prior $P(Z \mid X,Y)$ to concentrate density around the expert vicinity via a shaping function $\omega_\lambda$:
\[
 P_\lambda(Z \mid X,Y,E) \coloneqq P(Z \mid X,Y)\,\omega_\lambda(Z,E)\,/\,\mathcal{Z}_\lambda.
\]   
where the partition function $\mathcal{Z}_\lambda$ is given by
\(
\mathcal{Z}_\lambda = \int P(Z \mid X,Y)\,\omega_\lambda(Z,E)\,dZ = s_{\mathcal{B}} + e^{-\lambda}(1-s_{\mathcal{B}}).
\)  
Let $P_{\rm V}(Z\,|\, X,Y)\!\coloneqq\!P(Z\,|\, X,Y)/s_{\rm V}$ be the target posterior for \emph{idealized perceptual
behaviors}. We now establish the \emph{TV distance} bound between $p_\theta(Z\,|\, X)$ and this posterior $P_{\rm V}$.
\begin{theorybox}
\begin{theorem}[Total Variation Distance Bound]
\label{thm:1}
Under Assump.~\ref{assump:1},~\ref{assump:2}, we suppose valid support $\mathcal{S}_{\rm V}$ satisfies $d_{\rm eff}$-regularity, where $d_{\rm eff}$ is its effective dimension; thus, $\exists\kappa \ge 1$ such that $q \coloneqq s_{\mathcal{B}}/s_{\rm V} \ge \kappa(\varepsilon/\sigma)^{d_{\rm eff}}$. Suppose the model $p_\theta$ is expressive and let $\theta^\star$ be the global minimizer of $\mathcal{L}_{\rm vRFT}(\theta)$. The total variation distance between the policy $p_{\theta^\star}(Z\,|\, X)$ and the target posterior $P_{\rm V}(Z\,|\,X, Y)$ is bounded by:
\[
D_{\rm TV}(p_{\theta^\star}(\cdot\mid X),P_{\rm V}(\cdot\mid X,Y))\; \le\; \frac{1}{2\,\mathcal{Z}_\lambda} \cdot
\left(
q\,|s_{\rm V}-\mathcal{Z}_\lambda|
+(1-q)\,|e^{-\lambda}s_{\rm V}-\mathcal{Z}_\lambda|
+e^{-\lambda}(1-s_{\rm V})
\right).    
\]
\end{theorem}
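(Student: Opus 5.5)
The plan is to first identify the global minimizer of $\mathcal{L}_{\rm vRFT}$ with the $\lambda$-shaped posterior $P_\lambda(Z\mid X,Y,E)$. Then I would compute the total variation distance to $P_{\rm V}$ exactly by splitting the trajectory space into three regions on which both densities are simple multiples of $P(Z\mid X,Y)$.

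\textbf{Step 1 (fixed point of the Sub-TB objective).} Since $p_\theta$ is assumed expressive, at $\theta^\star$ every squared log-ratio in~\Cref{equ:8} vanishes. I would invoke the standard Sub-TB/GFlowNet argument (Appendix~\ref{app:A2} and~\cite{madan2023learning}). Setting $i=0$ and $j$ terminal, the balance conditions telescope, so that the terminating distribution satisfies $p_{\theta^\star}(Z\mid X)\propto R_\lambda(Z\top)=R(Z\top)\,\omega_\lambda(Z,E)$. I would then use Assump.~\ref{assump:1},~\ref{assump:2}. These assumptions are idealized; I expect them to state that the multi-dimensional reward in~\Cref{equ:9} is proportional to the posterior $P(Z\mid X,Y)$, since the contrastive term and the likelihood $p_\phi(Y\mid Z,X)$ together act as a calibrated likelihood. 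With this, $p_{\theta^\star}(Z\mid X)=P(Z\mid X,Y)\,\omega_\lambda(Z,E)/\mathcal{Z}_\lambda=P_\lambda(Z\mid X,Y,E)$, and the normalizer $\mathcal{Z}_\lambda=s_{\mathcal{B}}+e^{-\lambda}(1-s_{\mathcal{B}})$ is as given. This identification is the main obstacle: it depends entirely on what the assumptions grant. If they only give proportionality up to an $X$-dependent constant, the constant is absorbed by normalization.

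\textbf{Step 2 (region decomposition).} I would use the nesting $\mathcal{B}_\varepsilon\subseteq\mathcal{S}_{\rm V}$. I expect it to be part of the assumptions; it is also consistent with $q=s_{\mathcal{B}}/s_{\rm V}\le 1$ and with the $d_{\rm eff}$-regularity lower bound. Under the nesting, partition the space into $A_1=\mathcal{B}_\varepsilon$, $A_2=\mathcal{S}_{\rm V}\setminus\mathcal{B}_\varepsilon$ and $A_3=\mathcal{S}_{\rm V}^c$. The masses under $P(\cdot\mid X,Y)$ are $q s_{\rm V}$, $(1-q)s_{\rm V}$ and $1-s_{\rm V}$.
\begin{itemize}
\item On $A_1$: $P_\lambda=P/\mathcal{Z}_\lambda$ and $P_{\rm V}=P/s_{\rm V}$.
\item On $A_2$: $P_\lambda=e^{-\lambda}P/\mathcal{Z}_\lambda$ and $P_{\rm V}=P/s_{\rm V}$.
\item On $A_3$: $P_\lambda=e^{-\lambda}P/\mathcal{Z}_\lambda$ and $P_{\rm V}=0$.
\end{itemize}

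\textbf{Step 3 (integration).} On each region the two densities differ by a constant multiple of $P$, so $\int|p_{\theta^\star}-P_{\rm V}|$ splits into three contributions:
\begin{itemize}
\item $q s_{\rm V}\,|s_{\rm V}-\mathcal{Z}_\lambda|/(\mathcal{Z}_\lambda s_{\rm V})$ from $A_1$,
\item $(1-q)s_{\rm V}\,|e^{-\lambda}s_{\rm V}-\mathcal{Z}_\lambda|/(\mathcal{Z}_\lambda s_{\rm V})$ from $A_2$,
\item $e^{-\lambda}(1-s_{\rm V})/\mathcal{Z}_\lambda$ from $A_3$.
\end{itemize}
After cancelling $s_{\rm V}$ and multiplying by $\tfrac12$, this gives exactly the right-hand side of the theorem. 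With the nesting the bound is in fact an equality; it becomes an inequality if the nesting holds only approximately, in which case I would bound the mismatch region by the triangle inequality.

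The regularity bound $q\ge\kappa(\varepsilon/\sigma)^{d_{\rm eff}}$ plays no role in the bound itself. It is only needed afterwards, to control the right-hand side in terms of $\varepsilon$ and $\sigma$ when discussing the limiting regimes $\lambda\to0$ and $\lambda\to\infty$.
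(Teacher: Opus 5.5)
Your proposal is correct and follows the paper's proof essentially step for step. The paper also zeroes every Sub-TB residual and takes $(i,j)=(0,K)$ to get $p_{\theta^\star}(Z\mid X)\propto R_\lambda(Z)\propto P_\lambda(Z\mid X,Y,E)$, normalizes to equality, and evaluates your three integrals over $\mathcal{B}_\varepsilon$, $\mathcal{S}_{\rm V}\setminus\mathcal{B}_\varepsilon$ and $\mathcal{S}_{\rm V}^c$ under the standing nesting assumption $\mathcal{B}_\varepsilon\subseteq\mathcal{S}_{\rm V}$, so the ``bound'' is indeed an equality; the reward--posterior proportionality you left as an expectation is derived there as a separate lemma. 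In that lemma, Assumption~\ref{assump:2} gives $P(c_i\mid I\setminus I_{r_i})=1/|\mathcal{C}|$, so the contrastive factor becomes $|\mathcal{C}|^K\prod_i P(c_i\mid I_{r_i})$, while Assumption~\ref{assump:1} and the deterministic planning state $z_0=g(X)$ make the remaining prior factors and the telescoped prefactor $p_{\theta^\star}(\top\mid z_0)/R_\lambda(z_0\top)$ trajectory-independent, yielding $R(Z)\propto P(Z\mid X,Y)$.
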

\end{theorybox}
\begin{remark}[Limit Analysis \emph{w.r.t.} $\lambda$] As $\lambda\! \to\! 0$, the bound $D_{\rm TV}\! \to\! (1\!-\!s_{\rm V})$, dominated by the inherent sparsity of the valid support; this implies PFlowNet discards geometric constraints and \emph{degrades to standard MLE}. Conversely, as $\lambda\! \to\! \infty$, the bound $D_{\rm TV}\!\to\! (1-q)$, where $q$ quantifies the discrepancy between the expert and golden priors. Thereby, the performance of PFlowNet is bottlenecked by expert bias, \ie, \emph{degenerating to expert-guided RLVR}. 
\end{remark}
\begin{remark}[Limit Analysis \emph{w.r.t.} $\varepsilon$] As $\varepsilon \!\to\! 0$, the vicinity contracts to a singularity ($q \!\to\! 0$); this forces the shaping energy to act indiscriminately on all trajectories, rendering the reward signal uninformative and ultimately loosening the bound. Conversely, increasing $\varepsilon$ within the valid region ($\mathcal{B}_\varepsilon \subseteq \mathcal{S}_{\rm V}$) monotonically improves coverage ($q\uparrow$) and tightens the bound. However, if $\varepsilon$ exceeds the tolerance $\sigma$, the vicinity inevitably encompasses invalid regions, which \emph{dilutes} the geometric guidance and degrades performance.
\end{remark}

\newpage
\begin{theorybox}
\begin{theorem}[Guaranteed Improvement over Baselines]
\label{thm:2}
Let $D_{\rm TV}(\lambda,\varepsilon)$ be the TV bound in~\Cref{thm:1}. For any $\varepsilon$ satisfying $\mathcal{B}_\varepsilon\subseteq \mathcal{S}_{\rm V}$, there exists an intensity $\lambda^\star$ such that
\[
D_{\rm TV}(\lambda^\star,\varepsilon)
\;\le\;
\min\{\,1-s_{\rm V},\,1-q\,\}.
\]   
For fixed $\lambda=\lambda^\star$, the bound is strictly decreasing in $q$ ($\varepsilon\uparrow$).
\end{theorem}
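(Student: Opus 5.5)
The plan is to reparametrize the bound of \Cref{thm:1} by $t\coloneqq e^{-\lambda}\in(0,1]$ and treat it as an explicit function of $(t,q)$ with $s_{\rm V}$ held fixed. The hypothesis $\mathcal{B}_\varepsilon\subseteq\mathcal{S}_{\rm V}$ gives $s_{\mathcal{B}}\le s_{\rm V}$, hence $q\in(0,1]$ and $s_{\mathcal{B}}=q\,s_{\rm V}$. Then
\[
\mathcal{Z}_\lambda = q s_{\rm V}+t(1-q s_{\rm V}),\qquad
B(t,q)=\frac{q\,|s_{\rm V}-\mathcal{Z}_\lambda|+(1-q)\,|t s_{\rm V}-\mathcal{Z}_\lambda|+t(1-s_{\rm V})}{2\,\mathcal{Z}_\lambda}.
\]
The first step is to remove the absolute values where possible. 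A direct computation gives $t s_{\rm V}-\mathcal{Z}_\lambda=-\big(q s_{\rm V}(1-t)+t(1-s_{\rm V})\big)\le 0$, so the second modulus is explicit. The first modulus satisfies $s_{\rm V}-\mathcal{Z}_\lambda=s_{\rm V}(1-q)-t(1-q s_{\rm V})$. Its sign changes at the threshold $t_0(q)=s_{\rm V}(1-q)/(1-q s_{\rm V})$, so I will split into the two regimes $t\le t_0$ and $t\ge t_0$. In each regime $B$ is a ratio of functions that are affine in $t$ (and polynomial in $q$).

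\textbf{Existence of $\lambda^\star$.} The mapping $t\mapsto B(t,q)$ is continuous on $(0,1]$, because $\mathcal{Z}_\lambda\ge q s_{\rm V}>0$, and it extends continuously to $t=0$. I will evaluate it at the two endpoints, which reproduces the limits stated in the Remark on $\lambda$:
\begin{itemize}
\item At $t=1$ (i.e.\ $\lambda=0$), $\mathcal{Z}_\lambda=1$ and $B=\tfrac12\big(q(1-s_{\rm V})+(1-q)(1-s_{\rm V})+(1-s_{\rm V})\big)=1-s_{\rm V}$.
\item At $t\to0$ (i.e.\ $\lambda\to\infty$), $\mathcal{Z}_\lambda\to q s_{\rm V}$ and $B\to\tfrac{1}{2qs_{\rm V}}\big(q s_{\rm V}(1-q)+(1-q)q s_{\rm V}\big)=1-q$.
\end{itemize}
By continuity on the compact interval $[0,1]$, $B$ attains its minimum at some $t^\star$, and $B(t^\star,q)\le\min\{1-s_{\rm V},1-q\}$. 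If $t^\star>0$, set $\lambda^\star=-\log t^\star$. If the minimum is only reached at $t=0$, I will either state the result with $\lambda^\star=\infty$ as the limiting intensity, or refine the argument: on each regime $B$ is a linear-fractional function of $t$, hence monotone, so it suffices to compare $B$ at $t_0$ with the endpoint values. I expect this comparison to show that a finite $\lambda^\star$ (for instance $t^\star=t_0$, where the first modulus vanishes) already achieves the minimum.

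\textbf{Monotonicity in $q$.} I fix $t=t^\star$ and differentiate $B$ in $q$ on each regime. Since $B=N/(2\mathcal{Z}_\lambda)$ with $N$ and $\mathcal{Z}_\lambda$ polynomial in $q$, the sign of $\partial_q B$ is the sign of $N'\mathcal{Z}_\lambda-N\mathcal{Z}_\lambda'$, where $\mathcal{Z}_\lambda'=s_{\rm V}(1-t)$. I will expand this expression and aim to show it is strictly negative for $t\in[0,1)$ and $q\in(0,1)$. The extra ingredient is that enlarging $\varepsilon$ within $\mathcal{S}_{\rm V}$ increases $s_{\mathcal{B}}$, hence $q$, while leaving $s_{\rm V}$ unchanged. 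This yields the claimed strict decrease as $\varepsilon$ grows.

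\textbf{Main obstacle.} I expect the hardest part to be this sign computation, in particular on the regime where $|s_{\rm V}-\mathcal{Z}_\lambda|$ flips sign. Strictness may also fail in the degenerate case $t=1$, where $B\equiv 1-s_{\rm V}$ does not depend on $q$. To handle this I will argue that $\lambda^\star$ can be chosen with $t^\star<1$ whenever $q>s_{\rm V}$. In the opposite case the $t\to0$ endpoint is already optimal and depends strictly on $q$ through $1-q$.
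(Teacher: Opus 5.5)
\textbf{Existence part.} Your existence argument is sound, and it differs slightly from the paper's. The paper uses the intermediate value theorem to pick $e^{-\lambda^\star}$ with $\mathcal{Z}_{\lambda^\star}=s_{\rm V}$, which is exactly your threshold $t_0(q)$. It then computes the closed form $(1-q)(1-s_{\rm V})/(1-qs_{\rm V})$ and compares it with both baselines. Carrying out your two-branch reduction gives
\[
B(t,q)=\frac{(1-q)\bigl(qs_{\rm V}+t(1-s_{\rm V}-qs_{\rm V})\bigr)}{qs_{\rm V}+t(1-qs_{\rm V})}\quad(t\le t_0),\qquad B(t,q)=\frac{t(1-s_{\rm V})}{qs_{\rm V}+t(1-qs_{\rm V})}\quad(t\ge t_0).
\]
The first branch is strictly decreasing in $t$ and the second strictly increasing. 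So for $q,s_{\rm V}\in(0,1)$, $t_0(q)\in(0,1)$ is the unique minimizer, and your guess $t^\star=t_0$ is right. Your closing case analysis is therefore wrong: an endpoint is never optimal in the interior regime. Moreover, when $q<s_{\rm V}$ the $t\to0$ endpoint is the \emph{worse} one, since $1-q>1-s_{\rm V}$.

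\textbf{The gap: monotonicity in $q$.} You plan to freeze $t=t^\star$ and show $\partial_qB(t,q)<0$ for all $t\in[0,1)$ and $q\in(0,1)$. That inequality is false on the branch $t<t_0(q)$, which is exactly the obstacle you flagged. Take $s_{\rm V}=0.9$ and $t=1/2$; note $t=t_0(8/9)$, the calibrated intensity for $q=8/9$. Then $B(1/2,0.05)\approx0.132<B(1/2,0.1)\approx0.157$, so no sign computation on that branch can succeed. There are two correct routes.
\begin{enumerate}
\item[(i)] Do as the paper does and let $\lambda^\star$ track $q$. Differentiate the calibrated value $(1-q)(1-s_{\rm V})/(1-qs_{\rm V})$; its derivative is $-(1-s_{\rm V})^2/(1-qs_{\rm V})^2<0$.
\item[(ii)] Keep $t^\star=t_0(q_0)$ frozen, and use that $t_0(q)=s_{\rm V}(1-q)/(1-qs_{\rm V})$ is strictly decreasing in $q$. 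Then increasing $q$ from $q_0$ (the direction $\varepsilon\uparrow$) keeps you on the branch $t^\star\ge t_0(q)$. There $B=t^\star(1-s_{\rm V})/\bigl(t^\star+qs_{\rm V}(1-t^\star)\bigr)$ is strictly decreasing in $q$.
\end{enumerate}
Route (ii) proves the literal fixed-$\lambda$ statement in the increasing direction. It also gives strict decrease of the calibrated bound: for $q_1<q_2$,
\[
\min_tB(t,q_2)\le B(t_0(q_1),q_2)<B(t_0(q_1),q_1).
\]
A fixed-$\lambda$ monotonicity claim over all of $q\in(0,1)$, however, cannot be proved, because it is false.
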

\end{theorybox}
\begin{remark}
This confirms that with proper calibration of intensity $\lambda$ and radius $\varepsilon$, PFlowNet strictly tightens the \emph{idealized} TV bound of standard MLE and expert-guided RLVR.
\end{remark}
\begin{proof}
Refer to Appendix~\ref{app:A4} for the proofs.
\end{proof}

\begin{table*}[t!]
    \centering
    \caption{Comparison with competitive alternatives on TreeBench (\emph{left}) and MME-RealWorld-Lite (\emph{right}).}
    \label{tab:merged_results}
    \setlength{\tabcolsep}{3.5pt}
    \renewcommand{\arraystretch}{1.1}
    \small 
    \resizebox{\linewidth}{!}{
    \begin{tabular}{l | c cccccccccc | c ccccccccc}
    \toprule
    & & \multicolumn{5}{c}{\textbf{Perception}} & \multicolumn{5}{c|}{\textbf{Reasoning}} & & \multicolumn{5}{c}{\textbf{Perception}} & \multicolumn{4}{c}{\textbf{Reasoning}} \\
    \cmidrule(lr){3-7} \cmidrule(lr){8-12} \cmidrule(lr){14-18} \cmidrule(lr){19-22} 
    & \rotatebox{90}{Overall}  
    & \rotatebox{90}{Attributes} 
    & \rotatebox{90}{Material} 
    & \rotatebox{90}{Phy. State} 
    & \rotatebox{90}{Obj. Retr.} 
    & \rotatebox{90}{OCR} 
    & \rotatebox{90}{Per. Trans.} 
    & \rotatebox{90}{Ordering} 
    & \rotatebox{90}{Con. \& Oc.} 
    & \rotatebox{90}{Spa. Cont.} 
    & \rotatebox{90}{Comparison}
    & \rotatebox{90}{Overall}  
    & \rotatebox{90}{OCR}
    & \rotatebox{90}{Remote Sen.}
    & \rotatebox{90}{Diag. \& Tab.}
    & \rotatebox{90}{Monitoring} 
    & \rotatebox{90}{Auto. Driv.} 
    & \rotatebox{90}{OCR}
    & \rotatebox{90}{Diag. \& Tab.} 
    & \rotatebox{90}{Monitoring} 
    & \rotatebox{90}{Auto. Driv.} \\
    \midrule
    \rowcolor{gray!15}\multicolumn{22}{c}{\itshape General Large Vision-Language Models} \\
    LLaVA-OV-7B & 37.3 & 55.2 & 53.8 & 56.5 & 50.0 & 32.4 & 21.2 & 22.8 & 41.5 & 72.4 & 36.4 & 43.7 & 80.0 & 40.0 & 56.0 & 31.7 & 39.4 & 65.0 & 33.0 & 38.0 & 32.0 \\
    LLaVA-OV-72B & 40.5 & \cellcolor{blue!5}{62.1} & 53.8 & 65.2 & 62.3 & 36.8 & 12.9 & 28.1 & 53.7 & 65.5 & 47.7 & 48.7 & 79.2 & 50.7 & 67.0 & 37.9 & 40.0 & 76.0 & 41.0 & 38.7 & 39.3 \\
    InternVL3-8B & 38.8 & 51.7 & \cellcolor{blue!15}{69.2} & 56.5 & 56.3 & 33.7 & 21.2 & 24.6 & 39.0 & 72.4 & 43.2 & 47.9 & 83.6 & 49.3 & 75.0 & 34.5 & 36.9 & 70.0 & 44.0 & 40.0 & 37.0 \\
    InternVL3-38B & 42.0 & 51.7 & 61.5 & 52.2 & 68.8 & 51.5 & 12.9 & 33.3 & \cellcolor{blue!5}{56.1} & 65.5 & 38.6 & 51.0 & 85.6 & 56.0 & 71.0 & 42.6 & 40.0 & 77.0 & 45.0 & 47.3 & 35.0 \\
    InternVL3-78B & 46.4 & \cellcolor{blue!5}{62.1} & 61.5 & 52.2 & 68.8 & 52.9 & 16.5 & 33.3 & \cellcolor{blue!15}{61.0} & \cellcolor{blue!15}{86.2} & 45.5 & 52.3 & 87.6 & 54.7 & 77.0 & 42.6 & 36.6 & 76.0 & 56.0 & 46.0 & \cellcolor{blue!5}{40.3} \\
    Qwen2.5-VL-7B & 37.0 & 55.2 & 53.8 & 56.5 & 62.5 & 27.9 & 20.0 & 35.1 & 39.0 & 44.8 & 43.2 & 42.3 & 87.6 & 32.7 & 83.0 & 27.3 & 30.0 & 72.0 & 62.0 & 28.7 & 23.0 \\
    Qwen2.5-VL-32B & 42.5 & 51.7 & 53.8 & 69.6 & 62.5 & 54.4 & 16.5 & 33.3 & 46.3 & 62.1 & 38.6 & 45.6 & 87.2 & 40.7 & 83.0 & 29.5 & 40.7 & 74.0 & 60.0 & 27.3 & 29.5 \\
    Qwen2.5-VL-72B & 42.2 & \cellcolor{blue!15}{65.5} & \cellcolor{blue!15}{69.2} & 56.5 & 56.3 & 48.5 & 11.8 & 33.3 & 51.2 & 72.4 & 38.6 & 43.7 & 90.8 & 34.0 & 87.0 & 27.9 & 30.6 & 74.0 & 61.0 & 26.7 & 25.5 \\
    Qwen3-VL-4B & 42.2 & 48.3 & 61.5 & 65.2 & \cellcolor{blue!5}{81.3} & 35.3 & 18.8 & 31.6 & 46.3 & \cellcolor{blue!15}{86.2} & 43.2 & 47.1 & 90.8 & 44.7 & 87.0 & 34.8 & 32.6 & 72.0 & 64.0 & 43.4 & 24.3 \\
    Qwen3-VL-8B & 44.9 & \cellcolor{blue!15}{65.5} & 53.9 & 65.2 & 75.0 & \cellcolor{blue!5}{64.7} & 12.9 & 24.6 & 48.8 & 72.4 & 43.2 & 48.6 & \cellcolor{blue!5}{92.8} & \cellcolor{blue!5}{57.3} & 87.0 & 36.4 & 31.4 & 73.0 & 70.0 & 39.3 & 25.3 \\
    Qwen3-VL-32B & 45.2 & 60.3 & \cellcolor{blue!5}{63.4} & 58.1 & \cellcolor{blue!15}{83.6} & 30.3 & \cellcolor{blue!15}{24.2} & \cellcolor{blue!5}{39.7} & 47.7 & \cellcolor{blue!5}{85.2} & \cellcolor{blue!5}{51.4} & 52.0 & 91.6 & 47.3 & \cellcolor{blue!15}{96.0} & 36.1 & 42.9 & 76.0 & \cellcolor{blue!15}{77.0} & 42.7 & 30.0 \\
    \midrule
    \rowcolor{gray!15}\multicolumn{22}{c}{\itshape Visually Grounded Reasoning Models} \\
    Pixel-Reasoner & 39.0 & 58.6 & 61.5 & 65.2 & 50.0 & 48.5 & 14.1 & 31.6 & 39.0 & 44.8 & 40.9 & 49.7 & 89.6 & 52.0 & 86.0 & 38.9 & 30.9 & 71.0 & 72.0 & 46.0 & 32.5 \\
    DeepEyes & 37.5 & \cellcolor{blue!5}{62.1} & 53.8 & 65.2 & 68.8 & 51.5 & 11.8 & 24.6 & 36.6 & 51.7 & 47.7 & 53.2 & 90.0 & 52.7 & 89.0 & 43.3 & 33.4 & 76.0 & 69.0 & 44.0 & 35.0 \\
    DeepEyesV2 & 40.7 & \cellcolor{blue!15}{65.5} & \cellcolor{blue!15}{69.2} & 56.5 & 62.5 & 55.9 & 11.8 & 35.1 & 46.3 & 37.9 & 36.4 & 52.4 & 85.6 & 49.3 & 89.0 & 45.8 & 33.4 & 70.0 & \cellcolor{blue!5}{76.0} & 44.0 & 37.0 \\
    Thyme & 38.2 & 48.2 & 46.1 & 69.5 & 50.0 & 51.4 & 22.3 & 21.0 & 41.3 & 44.8 & 34.0 & 54.4 & 90.4 & 56.7 & 86.0 & 46.3 & 38.5 & \cellcolor{blue!5}{78.0} & 71.0 & 48.0 & 36.0 \\
    TreeVGR & \cellcolor{blue!5}{50.4} & \cellcolor{blue!15}{65.5} & 53.8 & \cellcolor{blue!15}{82.6} & 68.8 & 63.3 & \cellcolor{blue!5}{22.4} & 36.8 & \cellcolor{blue!15}{61.0} & 69.0 & 45.5 & \cellcolor{blue!5}{54.9} & 87.6 & 50.7 & 83.0 & \cellcolor{blue!5}{47.0} & \cellcolor{blue!5}{43.4} & 74.0 & 66.0 & \cellcolor{blue!5}{51.3} & 39.0 \\
    \midrule
    \textbf{PFlowNet} (Ours) & \cellcolor{blue!15}{55.3} & \cellcolor{blue!15}{65.5} & \cellcolor{blue!15}{69.2} & \cellcolor{blue!5}{80.2} & 75.0 & \cellcolor{blue!15}{77.9} & 20.0 & \cellcolor{blue!15}{40.4} & \cellcolor{blue!5}{56.1} & 82.8 & \cellcolor{blue!15}{56.8} & \cellcolor{blue!15}{67.0} & \cellcolor{blue!15}{95.6} & \cellcolor{blue!15}{69.3} & \cellcolor{blue!5}{90.0} & \cellcolor{blue!15}{53.6} & \cellcolor{blue!15}{58.2} & \cellcolor{blue!15}{83.0} & \cellcolor{blue!5}{76.0} & \cellcolor{blue!15}{70.0} & \cellcolor{blue!15}{53.5} \\
    $\Delta$ \emph{vs.} Base Model & \up{10.4} & -- & \up{15.3} & \up{15.0} & -- & \up{13.2} & \up{7.1} & \up{15.8} & \up{7.3} & \up{10.4} & \up{13.6} & \up{18.4} & \up{2.8} & \up{12.0} & \up{3.0} & \up{17.2} & \up{26.8} & \up{10.0} & \up{6.0} & \up{30.7} & \up{28.2} \\
    \bottomrule
    \end{tabular}
    }
\end{table*}

\section{Experiment}
We initialize PFlowNet from Qwen3-VL-8B and evaluate it against representative baselines spanning both general-purpose and fine-grained visual tasks. More implementation details and experimental setups are provided in Appendix~\ref{app:B} and~\ref{app:C}.

\subsection{Main Results}
\textbf{General-purpose Tasks.} 
As shown in \Cref{tab:merged_results}, PFlowNet exhibits robust capabilities in both perception and reasoning. It delivers substantial gains over the vanilla Qwen3-VL-8B across all scenarios, achieving overall improvements of 10.4\% on TreeBench and 18.4\% on MME-RealWorld-Lite. Notably, driven by our reasoning-oriented reward design, these gains are particularly pronounced on reasoning-heavy subsets. Furthermore, PFlowNet outperforms both grounded RLVR-based methods, \eg, TreeVGR~\cite{wang2025traceable}, Pixel Reasoner~\cite{su2025pixelreasoner}, and agentic frameworks, \eg, DeepEyes~\cite{zheng2025deepeyes}, Thyme~\cite{zhang2025thyme}. It yields the best average performance, surpassing the nearest competitors by 5.3\% and 12.6\% on TreeBench and MME-RealWorld-Lite, respectively; and sets SOTA records on 89\% ($17/19$) of sub-tasks, underscoring its generalization.

\textbf{Fine-grained Visual Understanding.} 
As presented in \Cref{tab:1}, PFlowNet achieves SOTA results across all benchmarks, outperforming both representative baselines and general LVLMs. Notably, although the Qwen3-VL series incorporates architectural improvements (\eg, DeepStack) to enhance fine-grained capabilities, PFlowNet still delivers clear gains of 13\%, 8\%/8.8\%, and 2.5\%–7\% on V, HR-Bench (4k/8k), and ScreenSpot, respectively. These improvements are primarily concentrated in reasoning-oriented subsets, such as spatial reasoning and cross-objective relationship recognition. This validates our key insight: PFlowNet yields high-utility perceptual results that enhance visual reasoning while ensuring reliability. Consequently, despite being built on Qwen3-VL-8B, PFlowNet matches the performance of the larger Qwen3-VL-32B on these challenging tasks, \ie, \textbf{90.6} \emph{vs.} 87.4 on V*, 80.4 \emph{vs.} 82.1 / \textbf{75.9} \emph{vs.} 74.8 on HR-Bench 4K / 8K, respectively.

\begin{table*}[t]
    \centering
    \vspace{-2mm}
    \caption{Performance comparison on fine-grained visual tasks: visual search, high-resolution VQA, and GUI grounding.}
    \begin{minipage}[t]{0.705\textwidth}\vspace{0pt}
        \centering
        \footnotesize
        \setlength{\tabcolsep}{5pt}
        \renewcommand{\arraystretch}{1.1}
        \resizebox{\linewidth}{!}{%
        \begin{tabular}{l | ccc ccc ccc}
        \toprule
        & \multicolumn{3}{c}{\textbf{V* Bench}} & \multicolumn{3}{c}{\textbf{HR-Bench 4K}} & \multicolumn{3}{c}{\textbf{HR-Bench 8K}} \\
        \cmidrule(lr){2-4}
        \cmidrule(lr){5-7}
        \cmidrule(lr){8-10}
        & \emph{Overall} & \emph{Attribute} & \emph{Spatial} & \emph{Overall} & \emph{Single} & \emph{Cross} & \emph{Overall} & \emph{Single} & \emph{Cross} \\
        \midrule
        \rowcolor{gray!15}\multicolumn{10}{c}{\itshape General Large Vision-Language Models} \\
        GPT-4o-1120~\cite{gpt4o} & 66.0 & -- & -- & 59.0 & 70.0 & 48.0 & 55.5 & 62.0 & 49.0 \\
        LLaVA-OV-72B~\cite{li2024llavaov} & 73.8 & 80.9 & 63.2 & 66.3 & 76.5 & 56.0 & 60.9 & 68.8 & 53.0 \\
        InternVL3-8B~\cite{zhu2025internvl3} & 72.3 & 73.0 & 71.1 & 70.8 & 79.3 & 62.3 & 62.0 & 64.3 & 59.8 \\
        InternVL3-38B & 77.5 & 77.4 & 77.6 & 76.3 & 83.5 & 69.0 & 67.0 & 71.3 & 62.8 \\
        Qwen2.5-VL-7B~\cite{bai2025qwen25vl} & 74.3 & 77.4 & 69.7 & 72.1 & 88.8 & 55.5 & 68.8 & 83.5 & 54.0 \\
        Qwen2.5-VL-32B & 85.9 & 83.5 & \cellcolor{blue!15}{89.5} & 74.8 & 89.3 & 60.3 & 71.6 & 86.5 & 56.8 \\
        Qwen2.5VL-72B & 84.8 & 90.8 & 80.9 & 79.4 & 88.8 & \cellcolor{blue!5}{70.0} & \cellcolor{blue!5}{76.3} & 84.3 & \cellcolor{blue!15}{68.3} \\
        Qwen3-VL-4B~\cite{bai2511qwen3} & 74.9 & 78.3 & 69.7 & 73.5 & 84.8 & 62.3 & 67.1 & 83.5 & 50.7 \\
        Qwen3-VL-8B           & 77.5 & 80.2 & 73.7 
                                                 & 72.4 & 88.5 & 56.3 
                                                 & 68.1 & 82.0 & 54.3 \\
        Qwen3-VL-32B       & 87.4 & 87.0 & \cellcolor{blue!5}{88.2} & \cellcolor{blue!15}{82.1} & \cellcolor{blue!15}{94.0} & \cellcolor{blue!15}{70.2} & 74.8 & \cellcolor{blue!15}{90.1} & 59.5 \\
        \midrule
        \rowcolor{gray!15}\multicolumn{10}{c}{\itshape Visually Grounded Reasoning Models} \\
        PixelReasoner~\cite{su2025pixelreasoner} & 80.6 & 83.5 & 76.3 & 72.9 & 86.0 & 60.3 & 66.9 & 80.0 & 54.3 \\
        DeepEyes~\cite{zheng2025deepeyes}      & \cellcolor{blue!5}{90.0} & \cellcolor{blue!15}{92.1} & 86.8 & 75.1 & 91.3 & 59.0 & 72.6 & 86.8 & 58.5 \\
        DeepEyesV2~\cite{hong2025deepeyesv2}   & 81.8 & 81.7    & 80.3      & 77.9    &  \cellcolor{blue!5}{92.8}      & 63.0    & 73.8 & 88.5    & 59.0 \\
        Thyme~\cite{zhang2025thyme}         & 82.2 & 83.5 & 80.3 & 77.0 & 91.0 & 63.0 & 72.0 & 86.5 & 57.5 \\
        TreeVGR~\cite{wang2025traceable}       & 87.4 & 89.5 & 84.2 & 77.1  & 89.5 &  64.8 & 72.8 & 86.0 & 59.5 \\
        \midrule
        \textbf{PFlowNet} (Ours)                 & \cellcolor{blue!15}{90.6} & \cellcolor{blue!5}{91.4}  & \cellcolor{blue!15}{89.5} & \cellcolor{blue!5}{80.4}  & 91.2 & \cellcolor{blue!5}{69.5} & \cellcolor{blue!15}{76.9} & \cellcolor{blue!5}{89.0} & \cellcolor{blue!5}{64.8} \\
        $\Delta$ \emph{vs.} Base Model          & \up{13}  & \up{11}  & \up{16} 
                                                 & \up{8.0} & \up{2.7} & \up{13.2}  
                                                 & \up{8.8} & \up{7.0} & \up{10.5} \\
        \bottomrule
        \end{tabular}%
        }

        \label{tab:1}
    \end{minipage}
    \hfill
    \begin{minipage}[t]{0.2715\textwidth}\vspace{0pt}
        \centering
        \footnotesize
        \setlength{\tabcolsep}{5pt}
        \renewcommand{\arraystretch}{1.1}
        \resizebox{\linewidth}{!}{%
        \begin{tabular}{l | cc}
        \toprule
         &  \multicolumn{2}{c}{\textbf{ScreenSpot}} \\
        \cmidrule(lr){2-3}
         & \emph{v2} & \emph{Pro}\\
        \midrule
        \rowcolor{gray!15}\multicolumn{3}{c}{\itshape General LVLMs} \\
        GPT-4o-1120~\cite{gpt4o}      & 18.1 & 0.8  \\
        Claude Comp. Use~\cite{hu2024dawn} & -    & 17.1 \\
        OpenAI CUA~\cite{openai2025operator}       & 87.9 & 23.4 \\
        Qwen2-VL-7B     & -    & 1.6  \\
        Qwen2.5-VL-3B~\cite{bai2025qwen25vl}   & 68.4 & 23.9 \\
        Qwen2.5-VL-7B   & 73.6 & 29.0 \\
        Qwen2.5-VL-72B  & 87.1 & 43.6 \\
        Qwen3-VL-8B~\cite{bai2511qwen3}     & \cellcolor{blue!5}{92.7} & \cellcolor{blue!5}{54.6} \\
        Kimi-VL-16B-MoE~\cite{team2025kimi} & \cellcolor{blue!5}{92.8} & 34.5 \\
        SeedVL-1.5~\cite{guo2025seed1}      & \cellcolor{blue!15}{95.0} & \cellcolor{blue!15}{60.9} \\
        \midrule
        \rowcolor{gray!15}\multicolumn{3}{c}{\itshape GUI Grounding Models} \\
        SeeClick~\cite{cheng2024seeclick}      & 55.1    & 1.1  \\
        OS-Atlas-4B~\cite{wuatlas}   & 71.9    & 3.7  \\
        OS-Atlas-7B   & 84.1    & 18.9 \\
        UI-TARS-2B~\cite{qin2025ui}    & 84.7    & 27.7 \\
        ViGoRL-7B~\cite{sarch2025grounded}      & 86.5    & 31.1 \\
        \midrule
        \textbf{PFlowNet} (Ours)                  & \cellcolor{blue!15}{95.1} & \cellcolor{blue!15}{61.8} \\
        $\Delta$ \emph{vs.} Base Model           & \up{2.4} & \up{7.2} \\
        \bottomrule 
        \end{tabular}}  
        \label{tab:combined_web_eval}
    \end{minipage}
    \vspace{-2mm}
\end{table*}
\begin{figure*}[t]
	\centering
	\includegraphics[width=1\linewidth]{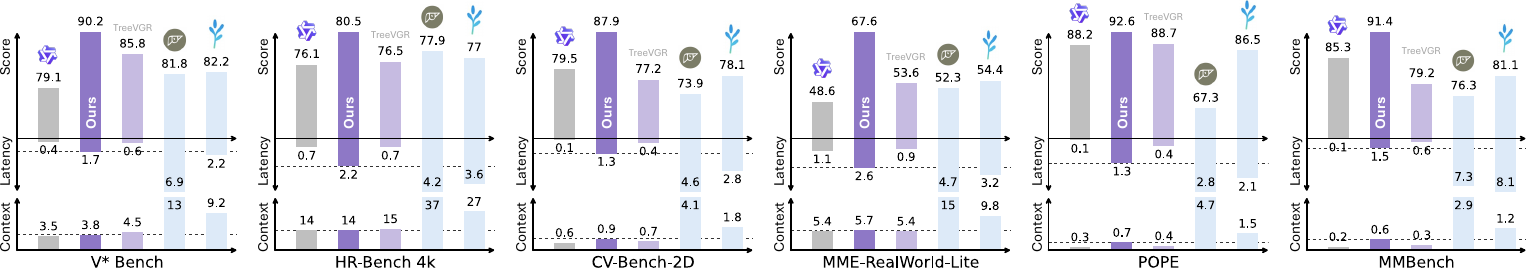}
	\caption{Performance-efficiency trade-offs of Qwen3-VL 8B, PFlowNet, TreeVGR, DeepEyesV2, and Thyme. All models are evaluated on an H200 via VLMEvalKit (see App.~\ref{app:C}). Latency (s) denotes the average inference time per sample, and context (k) is the averaged token-level length. Notably, a smaller occupied region \emph{below the shared axis} indicates lower computational (temp. \& spatial) cost.
    }
	\label{fig:7}
\end{figure*}

\subsection{In-depth Analysis}

\textbf{Performance-Efficiency Trade-off.} As shown in \Cref{fig:7}, PFlowNet exhibits an excellent balance between performance and efficiency. Compared to the agentic frameworks, PFlowNet substitutes complex tool or code executions with carefully designed structured perceptual flows to efficiently encode visual thoughts. This results in significantly shorter context lengths and reduced inference latency without compromising performance. In contrast to TreeVGR, PFlowNet decouples the process into flow generation and flow-guided visual reasoning, which incurs affordable computational costs to substantially improve perceptual quality and utility, thereby significantly boosting visual reasoning performance.

\textbf{Test-Time Scaling.} Theoretically, PFlowNet's variational objective~\eqref{equ:8} ensures a diverse rationale distribution, whereas grounded RLVR often implicitly optimizes a highly sharp distribution due to rigid alignment with sparse expert trajectories. To validate this, we conduct an empirical analysis shown in~\Cref{fig:8} and Appx.~\ref{app:E1}.

\begingroup
\setlength{\columnsep}{0.05\textwidth}  
\setlength{\intextsep}{0pt}              
\setlength{\abovecaptionskip}{2pt}
\setlength{\belowcaptionskip}{-6pt}

\begin{wrapfigure}{r}{0.55\textwidth}
    \vspace{-1.2\baselineskip}
    \centering
    \includegraphics[width=\linewidth]{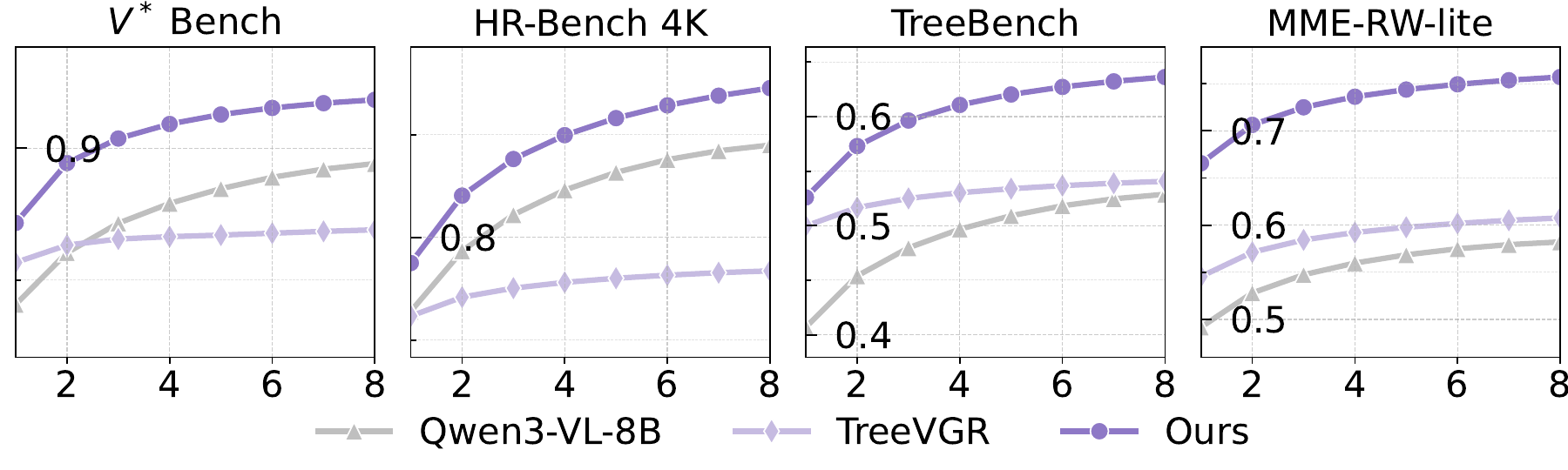}
    \caption{Pass@$k$ curves (\ie, $k\in[1,8]$) of different methods across both fine-grained and general-purpose benchmarks.}
    \label{fig:8}
    \vspace{-0.8\baselineskip}
\end{wrapfigure}

While TreeVGR achieves high Pass@1 accuracy, it yields negligible performance gains as the computational budget ($k$) scales up, particularly in challenging scenarios (\eg, V* Bench, TreeBench). This phenomenon aligns with recent findings by~\cite{yue2025does}. Notably, by incorporating variational inference with tailored reward design and geometric shaping, PFlowNet achieves superior Pass@1 results while demonstrating robust test-time scaling capabilities.
\par
\endgroup

\begin{figure*}[t]
	\centering
	\includegraphics[width=1\linewidth]{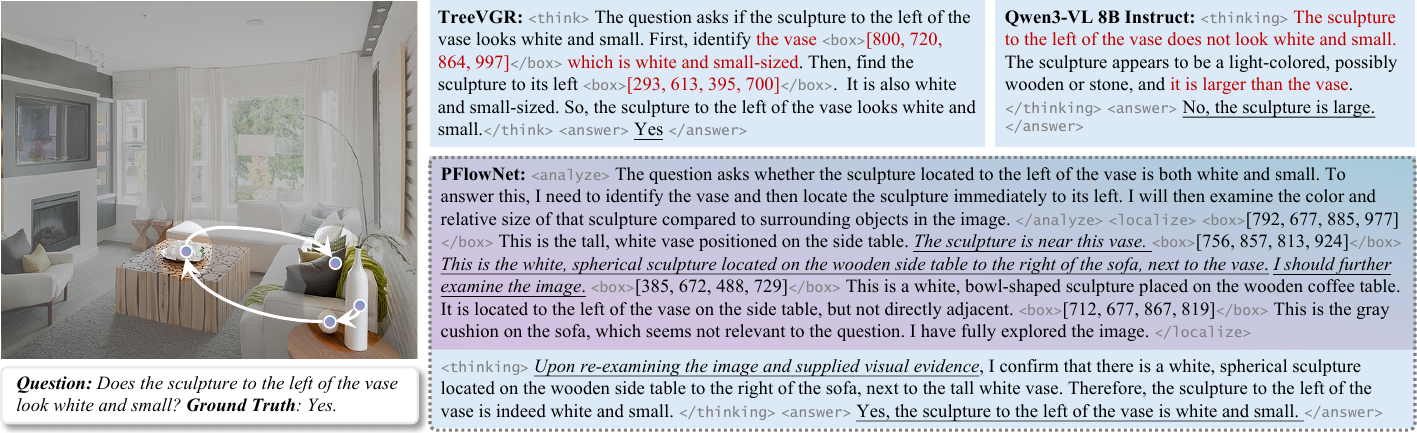}
	\caption{Qualitative comparison of visual reasoning across different methods. PFlowNet enables precise yet comprehensive exploration of visual evidence and effectively anchors the reasoning process to perceptual outcomes, producing the most reliable and accurate answers.
    }
	\label{fig:9}
    % \vspace{-2mm}
\end{figure*}

\textbf{Case Study.}~\Cref{fig:9} qualitatively highlights PFlowNet's superior reliability. Different from TreeVGR, where coupled perception-reasoning often yields geometrically precise yet semantically misaligned boxes due to sparse reward signals, PFlowNet utilizes dense contrastive rewards to enforce strict visual-textual dependency ($c_i$ on $r_i$), ensuring faithful interpretability. Interestingly, we observe that PFlowNet often prioritizes precise localization and then expands its visual scope. We attribute this to the shaping energy $\omega_\lambda(z_{0:k},E)$ derived from the sequence-level metric $d_{\rm IoU}$. Specifically, when $k$ is small, the scarcity of participating RoIs compels the model to maximize the precision of each individual proposal; however, this constraint naturally relaxes as the sequence elongates, facilitating comprehensive reasoning.

\textbf{Character-level Output Length.} We further investigate the effect of RFT on the length of model outputs, as shown in~\Cref{fig:12}. 
Given the same number of RoIs, the generated flow lengths remain highly consistent across benchmarks, indicating that flow length is mainly governed by the amount of required visual evidence rather than benchmark-specific difficulty. Moreover, model-generated flows are generally shorter than the synthetic flows. This is likely due to two factors: (i) the planning state lacks of direct supervision during RFT, making it difficult to match the detailed grounding plans produced by teacher models; and (ii) the contrastive term $\prod p_\phi^+(z)/p_\phi^-(z)$ in the reward encourages grounded captions to be more concise and discriminative.

\begin{figure*}[b]
	\centering
	\includegraphics[width=1\linewidth]{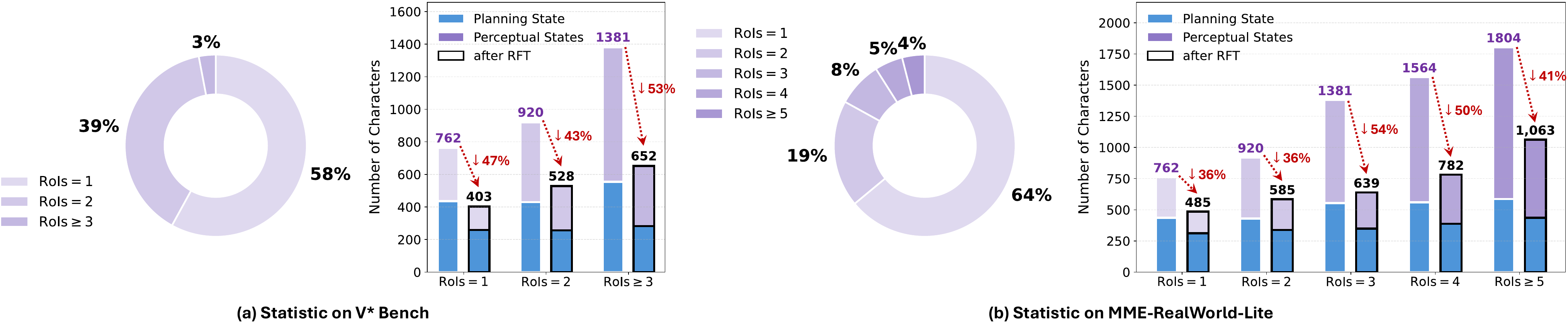}
	\caption{Statistics of RoIs distributions and character-level output length across different type of benchmarks. 
    }
	\label{fig:12}
    \vspace{-1mm}
\end{figure*}

\newpage
\subsection{Ablation Studies}

\begingroup
\setlength{\columnsep}{0.05\textwidth}  
\setlength{\intextsep}{0pt}           
\setlength{\abovecaptionskip}{2pt}
\setlength{\belowcaptionskip}{-6pt}

\begin{wraptable}{r}{0.5\textwidth}
    % \vspace{-2.\baselineskip}
    \centering
    \footnotesize
    \setlength{\tabcolsep}{3pt}
    \renewcommand{\arraystretch}{1.1}
    \caption{Ablation on the training recipe, reward design and reasoning pipeline, where $\pm$ and $+$ denote $P^+/P^-$ and $P^+$.}
    \label{tab:2}
    \resizebox{\linewidth}{!}{
    \begin{tabular}{ll | cc | cc | cccc}
    \toprule
    & &\multicolumn{2}{c|}{Inference} &\multicolumn{2}{c|}{Rewards} & \multicolumn{2}{c}{Treebench} & V* & MME-RW \\
    \cmidrule(lr){3-4}
    \cmidrule(lr){5-6}
    \cmidrule(lr){7-8}
    \cmidrule(lr){9-9}
    \cmidrule(lr){10-10}
    & & $Z$ & $I_{\rm RoI}$ & $R_{\text{cl}}$ & $R_{\text{gain}}$
    & Acc & mIoU & Acc & Acc \\
    \midrule
    (1)& Base Model     & - & - &  -   & -
    & 44.9  &   -   & 77.5   & 46.0 \\
    (2)& + SFT           & \textcolor{gray!50}{\checkmark} & \textcolor{gray!50}{\checkmark}  &  -  &   -
    & 48.3  & 44.2  & 83.7   & 54.2  \\
    \midrule
    (3)& \multirow{4.3}{*}{+ RFT} & \textcolor{gray!50}{\checkmark} & \textcolor{gray!50}{\checkmark} & $\pm$ & -
    & 51.5  & 43.7  & 85.3   & 59.5 \\
    
    (4)& & \textcolor{gray!50}{\checkmark} & \textcolor{gray!50}{\checkmark} & - & \checkmark
    & 52.8  & 40.5  & 87.4   & 62.8 \\
    
    (5)& & \textcolor{gray!50}{\checkmark} & \textcolor{gray!50}{\checkmark} & $\pm$ & \checkmark
    & 55.3  & 38.2  & 90.6   & 67.0 \\
    
    (6)& & \textcolor{gray!50}{\checkmark} & \textcolor{gray!50}{\checkmark} &  $+$
    & \checkmark & 52.2    &  36.4  &  88.1  & 65.5   \\
    \midrule
    (7) &PFlowNet & \checkmark & -  & \textcolor{gray!50}{$\pm$}
    & \textcolor{gray!50}{\checkmark}
    & 54.5    &  \textcolor{gray!50}{38.2}  &  89.4  & 66.4   \\
    (8) &PFlowNet & - & \checkmark & \textcolor{gray!50}{$\pm$}
    & \textcolor{gray!50}{\checkmark}
    & 49.2 & \textcolor{gray!50}{38.2} & 83.8 & 52.1  \\
    \bottomrule
    \end{tabular}%
    }
    \vspace{-0.8\baselineskip}
\end{wraptable}

\textbf{Framework \& Reward Design.}
\Cref{tab:2} validates the effectiveness of SFT and the synthetic flows, while the proposed RFT strategy yields further substantial gains (1, 2, 6). Furthermore, the quality and efficacy rewards exhibit a collaborative effect during the RFT, while the contrastive formulation also plays a crucial role (3 -- 6). Regarding the macro design, we examine the impact of input information during the reasoning stage (7, 8). Interestingly, incorporating external fine-grained visual features yields only marginal improvements; in contrast, removing the perceptual flow leads to severe performance degradation. This result indicates that the perceptual flow functions as more than a localization tool; it serves as a critical explicit semantic anchor. By translating visual thoughts into a structured textual prefix, the flow effectively conditions the LVLM's autoregressive generation, thereby bridging the semantic gap and guiding the reasoning trajectory more directly than raw visual features.

\par
\endgroup

\begingroup
\setlength{\columnsep}{0.05\textwidth}  
\setlength{\intextsep}{0pt}               
\setlength{\abovecaptionskip}{2pt}
\setlength{\belowcaptionskip}{-6pt}

\begin{wrapfigure}{r}{0.50\textwidth}
    \centering
    \includegraphics[width=\linewidth]{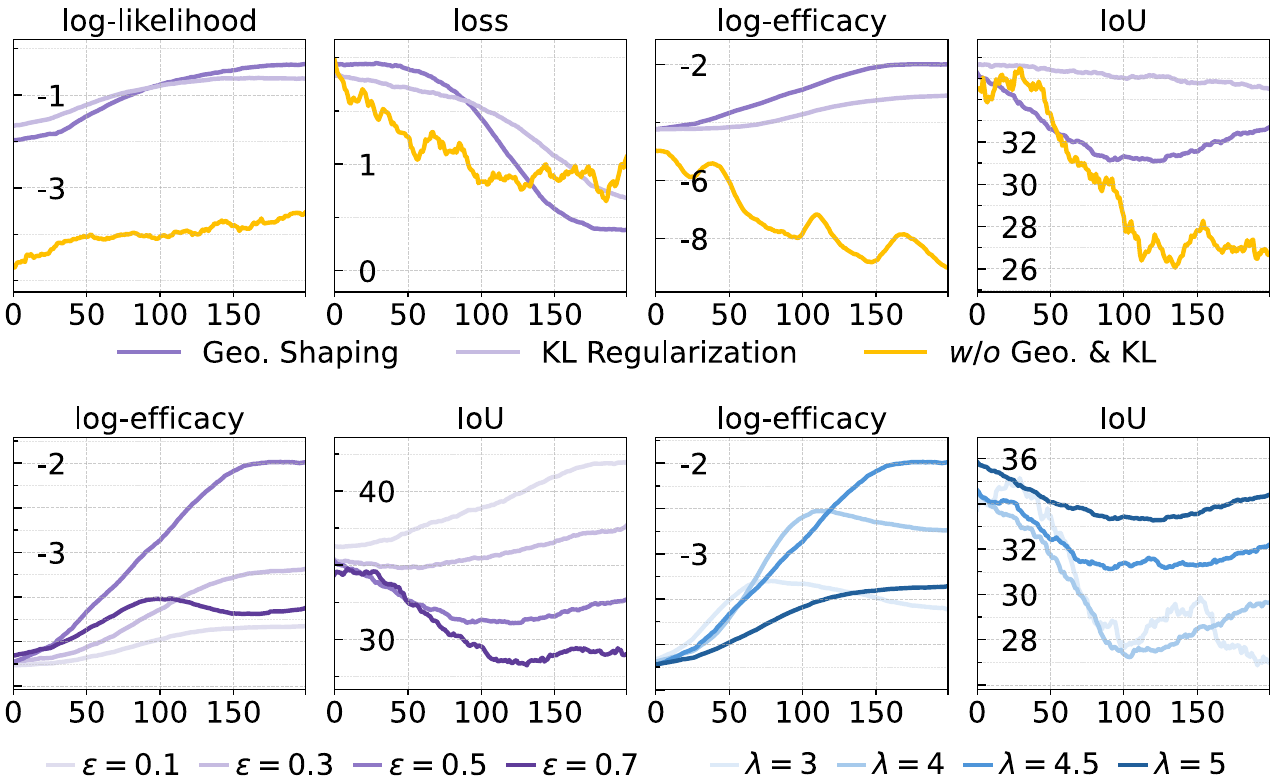}
    \caption{Ablation study on the geometric shaping scheme.}
    \label{fig:10}
    \vspace{-0.5\baselineskip}
\end{wrapfigure}

\textbf{Geometric Shaping.}
As illustrated in~\Cref{fig:10}, the geometric shaping stabilizes training and permits broader exploration than typical KL regularization, effectively mitigating SFT inductive bias. The initial drop and subsequent resurgence in $d_{\rm IoU}$ reflect a healthy transition from early exploration to late exploitation, yielding reliable yet high-efficacy perceptual behaviors. Further ablations of vicinal radius $\varepsilon$ and shaping intensity $\lambda$ show trends consistent with our analysis (\Cref{thm:1}): excessive radius $\varepsilon$ (\eg, 0.7) may lead to over-exploration into invalid supports, while a too small radius $\varepsilon$ enforces expert bias, preventing reasoning-oriented exploration. Likewise, small intensity $\lambda$ causes substantial training instability, whereas large $\lambda$ restricts exploration. Based on the ablation results, we set $\lambda=4.5,\ \varepsilon=0.5$ in this work.

\par
\endgroup

\begingroup
\setlength{\columnsep}{0.05\textwidth}   
\setlength{\intextsep}{0pt}               
\setlength{\abovecaptionskip}{2pt}
\begin{wraptable}{r}{0.5\textwidth}
    \centering 
    \footnotesize
    \caption{Cross-scale gains of PFlowNet across general-purpose visual tasks and fine-grained understanding.}
    \label{tab:b1}
    \setlength{\tabcolsep}{3pt}
    \renewcommand{\arraystretch}{1.1}
    \resizebox{\linewidth}{!}{
    \begin{tabular}{l | ccc | ccc c | cc}
    \toprule
    &\rotatebox{60}{V*} & \rotatebox{60}{HR 4k} & \rotatebox{60}{HR 8k} & \rotatebox{60}{TreeB} 
    & \rotatebox{60}{MME-rw} & \rotatebox{60}{CVB 2D} & \rotatebox{60}{MMB}
    & \rotatebox{60}{AI2D} &  \rotatebox{60}{ChartQA} \\
    \midrule
    Qwen3-VL 4B          & 74.8 & 73.5 & 67.1 & 42.2 
                         & 47.1 & 78.2 & 84.5
                         & 83.8 & 82.1
                         \\
    \midrule                     
    \emph{w} SFT       & 79.1 & 75.9 & 69.3 & 44.1
                         & 50.6 & 79.8 & 85.7
                         & 84.9 & 83.5
                         \\
    $\Delta$ \emph{vs.} Base Model             
                         & \upsft{4.3} & \upsft{2.4} & \upsft{2.2} & \upsft{1.9}
                         & \upsft{3.5} & \upsft{1.6} & \upsft{1.2} 
                         & \upsft{1.1} & \upsft{1.4}
                         
                         \\
    % \midrule
    \emph{w} RFT       & 83.5 & 77.9 & 70.9 & 46.8
                         & 55.3 & 82.0 & 87.2
                         & 90.0 & 84.5 
                         \\
    $\Delta$ \emph{vs.} Base Model             
                         & \up{8.7} & \up{4.4} & \up{3.8} & \up{4.6} 
                         & \up{8.2} & \up{3.8} & \up{2.7}
                         & \up{2.7}  & \up{2.3} \\
                         
    \midrule
    Qwen3-VL 32B         & 87.4 & 82.1 & 74.8 & 45.2   
                         & 52.0 & 81.5 & 87.7
                         & 89.0 & 83.1  \\
    \midrule
    \emph{w} SFT       & 89.1 & 83.5 & 76.3 & 46.7
                         & 54.3 & 82.8 & 88.4
                         & 89.9 & 84.2
                         \\
    $\Delta$ \emph{vs.} Base Model             
                         & \upsft{1.7} & \upsft{1.4} & \upsft{1.3}& \upsft{1.4} 
                         & \upsft{2.3} & \upsft{1.3} & \upsft{0.7}
                         & \upsft{0.9} & \upsft{1.0}
                         \\                      
    % \midrule
    \emph{w} RFT       & 91.6 & 85.5 & 77.5 & 49.2
                         & 58.8 & 85.2 & 89.8
                         & 91.5 & 85.7 
                         \\
    $\Delta$ \emph{vs.} Base Model             
                         & \up{4.2} & \up{3.1} & \up{2.8} & \up{4.0}  
                         & \up{6.8} & \up{3.7} & \up{2.1}
                        
                        & \up{2.5} & \up{2.6} \\
    \bottomrule
    \end{tabular}
    } 
    \vspace{-0.8\baselineskip}
\end{wraptable}

\textbf{Cross-Scale Evaluations.}
While our primary experiments utilize the Qwen3-VL 8B backbone to maintain parameter parity with baselines, we extend our evaluation to the Qwen3-VL 4B and 32B variants to verify the scalability of PFlowNet. We employ the consistent training recipe detailed in Appendix~\ref{app:B}, with specific adjustments for computational efficiency: \emph{the SFT and RFT phases are restricted to 1 and 2 epochs}, respectively, with an SFT global batch size of 128. 
As shown in~\Cref{tab:b1}, even under this computationally efficient training regime, backbones across different scales derive clear margins of improvement from our framework. Specifically, across general-purpose and fine-grained visual understanding tasks, our SFT stage yields average performance gains of 2.2\% and 1.3\% for Qwen3-VL 4B and Qwen3-VL 32B, respectively. Building upon this, our tailored RFT strategy further delivers substantial improvements exceeding 2.3\%, effectively demonstrating the scalability of our approach.

\par
\endgroup

\begin{figure*}[t]
	\centering
    \includegraphics[width=0.98\linewidth]{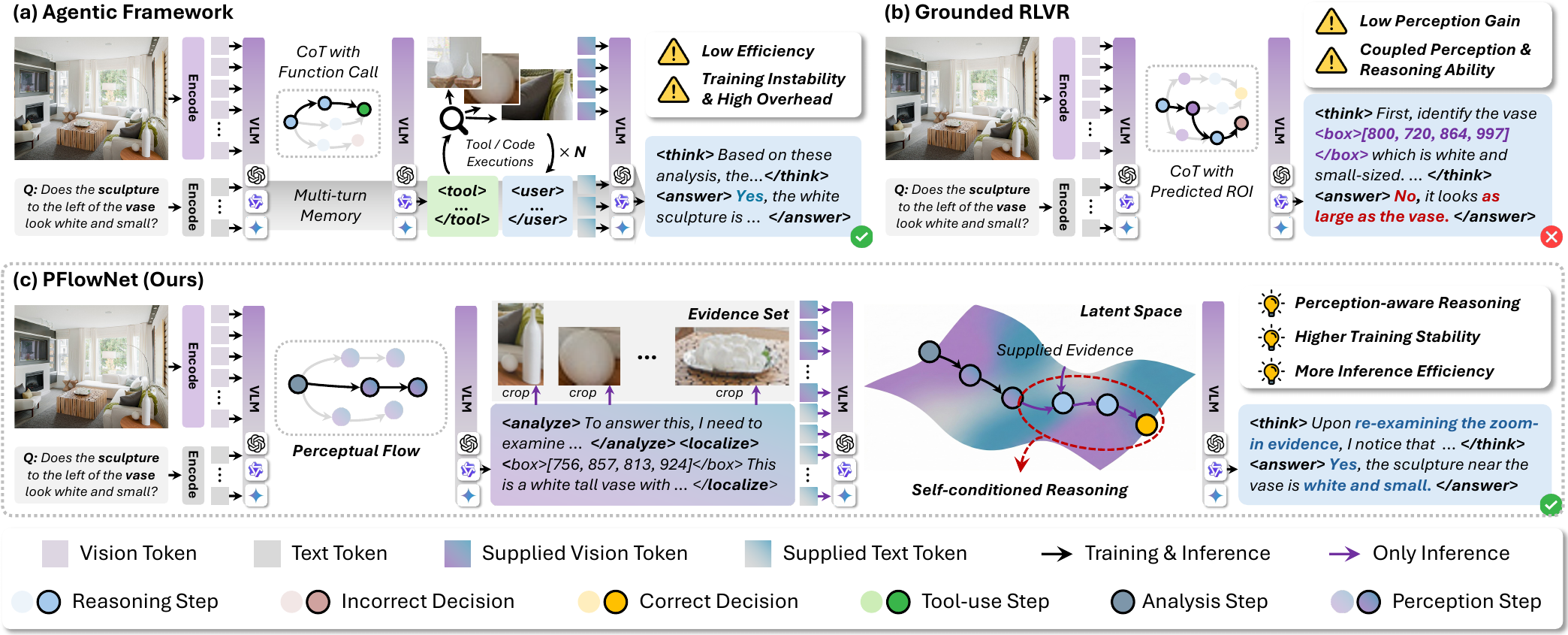}
	\caption{Framework comparisons between different paradigms, where (a) agentic frameworks rely on multi-turn tool executions for a "perceive-then-reason" process, (b) grounded RLVR integrates perception into reasoning in a single turn, and (c) the proposed PFlowNet decouples perception from reasoning via a two-stage perceptual flow, achieving robust yet efficient visually grounded reasoning.
    }
	\label{fig:2}
    \vspace{-2.2mm}
\end{figure*}

\vspace{-3mm}
\section{Related Work}
\vspace{-1mm}
\label{sec:2}

\textbf{Agentic Frameworks} equip LVLMs with dynamic image manipulation capabilities via multi-turn tool/code executions, thereby facilitating reliable visual reasoning, as shown in~\Cref{fig:2}(a). Specifically, several works explore the potential of enhancing visual reasoning by integrating external sandbox tools~\cite{li2026deepscan,hong2025deepeyesv2,su2025pixelreasoner,liu2505visual}. Thyme~\cite{zhang2025thyme} realizes ``\emph{thinking with images}'' by enabling the model to generate and execute code, while Visual Sketchpad~\cite{hu2024visual} empowers MLLMs with a sketching workspace to extend CoT with intermediate visual thoughts. Similarly, VaCoT~\cite{xu2025vacot} leverages visual tools to mitigate performance degradation on low-quality inputs. Despite these advances, CodeDance~\cite{song2025codedance} highlights a critical trade-off between tool utilization and intrinsic reasoning capabilities. Moreover, due to the entanglement of perception with complex invocations, these methods often suffer from excessive context and high latency. Different from these methods, the proposed PFlowNet utilizes \emph{structured text tokens} as the proxy of perceptual behaviors and efficiently achieves comparable high-quality visual reasoning based on the \emph{self-conditioned generation} with supplied fine-grained features (\Cref{fig:2}(c)).

\textbf{Grounded RLVR} maximizes the geometric consistency between intermediate visual rationales and external expert prior via reinforcement learning to regularize the reasoning process, as presented in~\Cref{fig:2}(b). Prior works~\cite{wang2025traceable,wang2025vgr,liu2025look,liu2025visual,shen2025vlm} represent perception via normalized \emph{bounding box coordinates} interleaved with stepwise reasoning, optimizing the policy by maximizing the Intersection over Union (IoU) with the ground truth. In contrast, ViGoRL~\cite{sarch2025grounded} and GUI-R1~\cite{luo2025gui} employ a point-based perceptual proxy, encouraging more precise perception through spatial distance constraints. Moreover, MIRG-RL~\cite{zheng2025mirg} explores Grounded RLVR in multi-image scenarios. However, our experiment results reveal the risk of rigidly enforcing alignment with biased and sparse expert priors. Furthermore, these methods often overlook the semantic coherence between visual rationales and the surrounding textual context, potentially compromising reasoning performance. These limitations are effectively mitigated by the multi-dimensional reward function in PFlowNet.

\vspace{-1mm}
\section{Conclusion} 
\vspace{-1mm}
This paper introduces PFlowNet, a novel framework based on structured perceptual flows that enables high-quality and interpretable visual reasoning. PFlowNet incorporates a carefully designed reinforcement fine-tuning strategy, comprising a tailored reward function with vicinal geometric shaping, which allows LVLMs to explore reasoning-oriented yet valid perceptual behaviors. Formal analysis establishes a provable performance guarantee for PFlowNet. Extensive experiments further demonstrate its superiority across both general-purpose and fine-grained tasks. Notably, the empirical analysis highlights PFlowNet’s excellent performance-efficiency balance and robust test-time scaling property.

\clearpage
\section*{Limitations and Future Work}
\label{app:B5}

The theoretical analysis (\Cref{thm:1,thm:2}) rests on idealized assumptions (\ref{assump:1},~\ref{assump:2}) and regularity conditions on valid support $S_V$ that may not strictly hold in practice; however, these bounds serve as a qualitative guide supported by our empirical validation. The proposed method involves two key hyperparameters ($\varepsilon$ and $\lambda$), whose optimal configurations may vary across base models and domains. Moreover, a comprehensive analysis of representative failure cases is provided in the \emph{Failure Case Analysis}~\ref{app:E2}.

Another inherent limitation of the proposed PFlowNet lies in its lack of \emph{adaptive perception}. Specifically, PFlowNet currently relies on a fixed structured reasoning format across questions of varying types and difficulty. While this design is beneficial for complex visual reasoning, it may be unnecessary for simple questions or certain STEM-oriented tasks, where the additional perceptual flow introduces reasoning overhead with limited marginal benefit due to salient visual evidence. It may also redistribute part of the model's capacity from direct problem solving to following the prescribed structure, which can lead to suboptimal results in some scenarios. 
Enabling adaptive visual reasoning, where the model dynamically adjusts its perceptual process according to the question difficulty and task context, is therefore an important direction for future work.

\bibliographystyle{plainnat}
\bibliography{main}

@String(AAAI = {AAAI})

@article{guo2025deepseekr1,
  title   = {DeepSeek-R1: Incentivizing reasoning capability in LLMs via reinforcement learning},
  author  = {Guo, Daya and Yang, Dejian and Zhang, Haowei and Song, Junxiao and Zhang, Ruoyu and Xu, Runxin and Zhu, Qihao and Ma, Shirong and Wang, Peiyi and Bi, Xiao and others},
  journal = {arXiv preprint arXiv:2501.12948},
  year    = {2025}
}

@article{chapelle2000vicinal,
  title={Vicinal risk minimization},
  author={Chapelle, Olivier and Weston, Jason and Bottou, L{\'e}on and Vapnik, Vladimir},
  journal={Advances in neural information processing systems},
  volume={13},
  year={2000}
}

@article{zheng2025mirg,
  title={MIRG-RL: Multi-Image Reasoning and Grounding with Reinforcement Learning},
  author={Zheng, Lihao and Chen, Jiawei and Shen, Xintian and Ma, Hao and Wei, Tao},
  journal={arXiv preprint arXiv:2509.21788},
  year={2025}
}

@article{luo2025gui,
  title={Gui-r1: A generalist r1-style vision-language action model for gui agents},
  author={Luo, Run and Wang, Lu and He, Wanwei and Chen, Longze and Li, Jiaming and Xia, Xiaobo},
  journal={arXiv preprint arXiv:2504.10458},
  year={2025}
}

@article{shen2025vlm,
  title={Vlm-r1: A stable and generalizable r1-style large vision-language model},
  author={Shen, Haozhan and Liu, Peng and Li, Jingcheng and Fang, Chunxin and Ma, Yibo and Liao, Jiajia and Shen, Qiaoli and Zhang, Zilun and Zhao, Kangjia and Zhang, Qianqian and others},
  journal={arXiv preprint arXiv:2504.07615},
  year={2025}
}

@article{liu2025visual,
  title={Visual-rft: Visual reinforcement fine-tuning},
  author={Liu, Ziyu and Sun, Zeyi and Zang, Yuhang and Dong, Xiaoyi and Cao, Yuhang and Duan, Haodong and Lin, Dahua and Wang, Jiaqi},
  journal={arXiv preprint arXiv:2503.01785},
  year={2025}
}

@article{hu2024visual,
  title={Visual sketchpad: Sketching as a visual chain of thought for multimodal language models},
  author={Hu, Yushi and Shi, Weijia and Fu, Xingyu and Roth, Dan and Ostendorf, Mari and Zettlemoyer, Luke and Smith, Noah A and Krishna, Ranjay},
  journal={Advances in Neural Information Processing Systems},
  volume={37},
  pages={139348--139379},
  year={2024}
}

@article{liu2505visual,
  title={Visual agentic reinforcement fine-tuning},
  author={Liu, Ziyu and Zang, Yuhang and Zou, Yushan and Liang, Zijian and Dong, Xiaoyi and Cao, Yuhang and Duan, Haodong and Lin, Dahua and Wang, Jiaqi},
  journal={URL https://arxiv. org/abs/2505.14246},
  year={2025}
}

@inproceedings{madan2023learning,
  title={Learning gflownets from partial episodes for improved convergence and stability},
  author={Madan, Kanika and Rector-Brooks, Jarrid and Korablyov, Maksym and Bengio, Emmanuel and Jain, Moksh and Nica, Andrei Cristian and Bosc, Tom and Bengio, Yoshua and Malkin, Nikolay},
  booktitle={International Conference on Machine Learning},
  pages={23467--23483},
  year={2023},
  organization={PMLR}
}

@article{vaswani2017attention,
  title={Attention is all you need},
  author={Vaswani, Ashish and Shazeer, Noam and Parmar, Niki and Uszkoreit, Jakob and Jones, Llion and Gomez, Aidan N and Kaiser, {\L}ukasz and Polosukhin, Illia},
  journal={Advances in neural information processing systems},
  volume={30},
  year={2017}
}

@article{song2025codedance,
  title={CodeDance: A Dynamic Tool-integrated MLLM for Executable Visual Reasoning},
  author={Song, Qi and Li, Honglin and Yu, Yingchen and Zhou, Haoyi and Yang, Lin and Bai, Song and She, Qi and Huang, Zilong and Zhao, Yunqing},
  journal={arXiv preprint arXiv:2512.17312},
  year={2025}
}

@article{liu2025look,
  title={Look as You Think: Unifying Reasoning and Visual Evidence Attribution for Verifiable Document RAG via Reinforcement Learning},
  author={Liu, Shuochen and Luo, Pengfei and Zhang, Chao and Chen, Yuhao and Zhang, Haotian and Liu, Qi and Kou, Xin and Xu, Tong and Chen, Enhong},
  journal={arXiv preprint arXiv:2511.12003},
  year={2025}
}

@article{xu2025vacot,
  title={VACoT: Rethinking Visual Data Augmentation with VLMs},
  author={Xu, Zhengzhuo and Sun, Chong and Du, SiNan and Li, Chen and Lyu, Jing and Yuan, Chun},
  journal={arXiv preprint arXiv:2512.02361},
  year={2025}
}

@article{bai2511qwen3,
  title={Qwen3-vl technical report, 2025},
  author={Bai, Shuai and Cai, Yuxuan and Chen, Ruizhe and Chen, Keqin and Chen, Xionghui and Cheng, Zesen and Deng, Lianghao and Ding, Wei and Gao, Chang and Ge, Chunjiang and others},
  journal={URL https://arxiv. org/abs/2511.21631},
  year={2025}
}

@inproceedings{radford2021learning,
  title={Learning transferable visual models from natural language supervision},
  author={Radford, Alec and Kim, Jong Wook and Hallacy, Chris and Ramesh, Aditya and Goh, Gabriel and Agarwal, Sandhini and Sastry, Girish and Askell, Amanda and Mishkin, Pamela and Clark, Jack and others},
  booktitle={International conference on machine learning},
  pages={8748--8763},
  year={2021},
  organization={PmLR}
}

@article{liu2024survey,
  title={A survey on hallucination in large vision-language models},
  author={Liu, Hanchao and Xue, Wenyuan and Chen, Yifei and Chen, Dapeng and Zhao, Xiutian and Wang, Ke and Hou, Liping and Li, Rongjun and Peng, Wei},
  journal={arXiv preprint arXiv:2402.00253},
  year={2024}
}

@inproceedings{gunjal2024detecting,
  title={Detecting and preventing hallucinations in large vision language models},
  author={Gunjal, Anisha and Yin, Jihan and Bas, Erhan},
  booktitle={Proceedings of the AAAI Conference on Artificial Intelligence},
  volume={38},
  pages={18135--18143},
  year={2024}
}

@article{chen2024multi,
  title={Multi-object hallucination in vision language models},
  author={Chen, Xuweiyi and Ma, Ziqiao and Zhang, Xuejun and Xu, Sihan and Qian, Shengyi and Yang, Jianing and Fouhey, David and Chai, Joyce},
  journal={Advances in Neural Information Processing Systems},
  volume={37},
  pages={44393--44418},
  year={2024}
}

@inproceedings{liu2024grounding,
  title={Grounding dino: Marrying dino with grounded pre-training for open-set object detection},
  author={Liu, Shilong and Zeng, Zhaoyang and Ren, Tianhe and Li, Feng and Zhang, Hao and Yang, Jie and Jiang, Qing and Li, Chunyuan and Yang, Jianwei and Su, Hang and others},
  booktitle={European conference on computer vision},
  pages={38--55},
  year={2024},
  organization={Springer}
}

@article{liu2023llava,
  title={Visual instruction tuning},
  author={Liu, Haotian and Li, Chunyuan and Wu, Qingyang and Lee, Yong Jae},
  journal={Advances in neural information processing systems},
  volume={36},
  pages={34892--34916},
  year={2023}
}

@misc{liu2024llavanext,
  title        = {LLaVA-NeXT: Improved reasoning, OCR, and world knowledge},
  howpublished = {\url{https://llava-vl.github.io/blog/2024-01-30-llava-next/}},
  author       = {Liu, Haotian and Li, Chunyuan and Li, Yuheng and Li, Bo and Zhang, Yuanhan and Shen, Sheng and Lee, Yong Jae},
  year         = {2024}
}

@article{bai2025qwen25vl,
  title   = {Qwen2.5-VL Technical Report},
  author  = {Bai, Shuai and Chen, Keqin and Liu, Xuejing and Wang, Jialin and Ge, Wenbin and Song, Sibo and Dang, Kai and Wang, Peng and Wang, Shijie and Tang, Jun and others},
  journal = {arXiv preprint arXiv:2502.13923},
  year    = {2025}
}

@article{li2024llavaov,
  title   = {LLaVA-OneVision: Easy Visual Task Transfer},
  author  = {Li, Bo and Zhang, Yuanhan and Guo, Dong and Zhang, Renrui and Li, Feng and Zhang, Hao and Zhang, Kaichen and Zhang, Peiyuan and Li, Yanwei and Liu, Ziwei and others},
  journal = {arXiv preprint arXiv:2408.03326},
  year    = {2024}
}

@article{zhu2025internvl3,
  title   = {InternVL3: Exploring advanced training and test-time recipes for open-source multimodal models},
  author  = {Zhu, Jinguo and Wang, Weiyun and Chen, Zhe and Liu, Zhaoyang and Ye, Shenglong and Gu, Lixin and Tian, Hao and Duan, Yuchen and Su, Weijie and Shao, Jie and others},
  journal = {arXiv preprint arXiv:2504.10479},
  year    = {2025}
}

@inproceedings{wu2024vstar,
  title     = {V*: Guided Visual Search as a Core Mechanism in Multimodal LLMs},
  author    = {Wu, Penghao and Xie, Saining},
  booktitle = {Proceedings of the Computer Vision and Pattern Recognition Conference},
  pages     = {13084--13094},
  year      = {2024}
}

@inproceedings{wang2025hrbench,
  title={Divide, conquer and combine: A training-free framework for high-resolution image perception in multimodal large language models},
  author={Wang, Wenbin and Ding, Liang and Zeng, Minyan and Zhou, Xiabin and Shen, Li and Luo, Yong and Yu, Wei and Tao, Dacheng},
  booktitle={Proceedings of the AAAI Conference on Artificial Intelligence},
  volume={39},
  pages={7907--7915},
  year={2025}
}

@article{sarch2025grounded,
  title   = {Grounded Reinforcement Learning for Visual Reasoning},
  author  = {Sarch, Gabriel and Saha, Snigdha and Khandelwal, Naitik and Jain, Ayush and Tarr, Michael J and Kumar, Aviral and Fragkiadaki, Katerina},
  journal = {arXiv preprint arXiv:2505.23678},
  year    = {2025}
}

@article{su2025pixelreasoner,
  title   = {Pixel Reasoner: Incentivizing Pixel-Space Reasoning with Curiosity-Driven Reinforcement Learning},
  author  = {Su, Alex and Wang, Haozhe and Ren, Weimin and Lin, Fangzhen and Chen, Wenhu},
  journal = {arXiv preprint arXiv:2505.15966},
  year    = {2025}
}

@article{zheng2025deepeyes,
  title   = {DeepEyes: Incentivizing ``Thinking with Images'' via Reinforcement Learning},
  author  = {Zheng, Ziwei and Yang, Michael and Hong, Jack and Zhao, Chenxiao and Xu, Guohai and Yang, Le and Shen, Chao and Yu, Xing},
  journal = {arXiv preprint arXiv:2505.14362},
  year    = {2025}
}

@article{hong2025deepeyesv2,
  title={DeepEyesV2: Toward Agentic Multimodal Model},
  author={Hong, Jack and Zhao, Chenxiao and Zhu, ChengLin and Lu, Weiheng and Xu, Guohai and Yu, Xing},
  journal={arXiv preprint arXiv:2511.05271},
  year={2025}
}

@article{yu2025zoom,
  title   = {Zoom-Refine: Boosting High-Resolution Multimodal Understanding via Localized Zoom and Self-Refinement},
  author  = {Yu, Xuan and Guan, Dayan and Yang, Michael Ying and Gu, Yanfeng},
  journal = {arXiv preprint arXiv:2506.01663},
  year    = {2025}
}

@inproceedings{li2025dyfo,
  title={Dyfo: A training-free dynamic focus visual search for enhancing lmms in fine-grained visual understanding},
  author={Li, Geng and Xu, Jinglin and Zhao, Yunzhen and Peng, Yuxin},
  booktitle={Proceedings of the Computer Vision and Pattern Recognition Conference},
  pages={9098--9108},
  year={2025}
}

@misc{gpt4o,
  howpublished = {\url{https://openai.com/index/gpt-4o-system-card/}},
  author       = {OpenAI},
  title        = {OpenAI-GPT-4o},
  year         = {2024}
}

@misc{o3,
  howpublished = {\url{https://openai.com/index/introducing-o3-and-o4-mini/}},
  author       = {OpenAI},
  title        = {OpenAI-o3},
  year         = {2025}
}

@misc{gemini-3-pro,
  howpublished = {\url{https://deepmind.google/models/gemini/pro/}},
  author       = {DeepMind},
  title        = {Gemini-3-Pro},
  year         = {2025}
}

@misc{gemini-3-flash,
  howpublished = {\url{https://deepmind.google/models/gemini/flash/}},
  author       = {DeepMind},
  title        = {Gemini-3-Flash},
  year         = {2025}
}

@article{zhang2025thyme,
  title   = {Thyme: Think Beyond Images},
  author  = {Zhang, Yi{-}Fan and Lu, Xingyu and Yin, Shukang and Fu, Chaoyou and Chen, Wei and Hu, Xiao and Wen, Bin and Jiang, Kaiyu and Liu, Changyi and Zhang, Tianke and others},
  journal = {arXiv preprint arXiv:2508.11630},
  year    = {2025}
}

@article{wang2025traceable,
  title   = {Traceable Evidence Enhanced Visual Grounded Reasoning: Evaluation and Methodology},
  author  = {Wang, Haochen and Li, Xiangtai and Huang, Zilong and Wang, Anran and Wang, Jiacong and Zhang, Tao and Zheng, Jiani and Bai, Sule and Kang, Zijian and Feng, Jiashi and others},
  journal = {arXiv preprint arXiv:2507.07999},
  year    = {2025}
}

@article{wang2025vgr,
  title   = {VGR: Visual Grounded Reasoning},
  author  = {Wang, Jiacong and Kang, Zijian and Wang, Haochen and Jiang, Haiyong and Li, Jiawen and Wu, Bohong and Wang, Ya and Ran, Jiao and Liang, Xiao and Feng, Chao and others},
  journal = {arXiv preprint arXiv:2506.11991},
  year    = {2025}
}

@article{hu2024dawn,
  title={The dawn of gui agent: A preliminary case study with claude 3.5 computer use},
  author={Hu, Siyuan and Ouyang, Mingyu and Gao, Difei and Shou, Mike Zheng},
  journal={arXiv preprint arXiv:2411.10323},
  year={2024}
}

@misc{openai2025operator,
  title={Operator: A Computer-Using Agent},
  author={OpenAI},
  year={2025},
  howpublished={\url{https://openai.com/index/operator-system-card/}},
  note={System Card and Technical Report}
}

@article{team2025kimi,
  title={Kimi-vl technical report},
  author={Team, Kimi and Du, Angang and Yin, Bohong and Xing, Bowei and Qu, Bowen and Wang, Bowen and Chen, Cheng and Zhang, Chenlin and Du, Chenzhuang and Wei, Chu and others},
  journal={arXiv preprint arXiv:2504.07491},
  year={2025}
}

@article{guo2025seed1,
  title={Seed1. 5-vl technical report},
  author={Guo, Dong and Wu, Faming and Zhu, Feida and Leng, Fuxing and Shi, Guang and Chen, Haobin and Fan, Haoqi and Wang, Jian and Jiang, Jianyu and Wang, Jiawei and others},
  journal={arXiv preprint arXiv:2505.07062},
  year={2025}
}

@inproceedings{cheng2024seeclick,
  title={Seeclick: Harnessing gui grounding for advanced visual gui agents},
  author={Cheng, Kanzhi and Sun, Qiushi and Chu, Yougang and Xu, Fangzhi and YanTao, Li and Zhang, Jianbing and Wu, Zhiyong},
  booktitle={Proceedings of the 62nd Annual Meeting of the Association for Computational Linguistics (Volume 1: Long Papers)},
  pages={9313--9332},
  year={2024}
}

@inproceedings{wuatlas,
  title={OS-ATLAS: Foundation Action Model for Generalist GUI Agents},
  author={Wu, Zhiyong and Wu, Zhenyu and Xu, Fangzhi and Wang, Yian and Sun, Qiushi and Jia, Chengyou and Cheng, Kanzhi and Ding, Zichen and Chen, Liheng and Liang, Paul Pu and others},
  booktitle={The Thirteenth International Conference on Learning Representations},
  year={2024}
}

@article{gou2024navigating,
  title={Navigating the digital world as humans do: Universal visual grounding for gui agents},
  author={Gou, Boyu and Wang, Ruohan and Zheng, Boyuan and Xie, Yanan and Chang, Cheng and Shu, Yiheng and Sun, Huan and Su, Yu},
  journal={arXiv preprint arXiv:2410.05243},
  year={2024}
}

@article{qin2025ui,
  title={Ui-tars: Pioneering automated gui interaction with native agents},
  author={Qin, Yujia and Ye, Yining and Fang, Junjie and Wang, Haoming and Liang, Shihao and Tian, Shizuo and Zhang, Junda and Li, Jiahao and Li, Yunxin and Huang, Shijue and others},
  journal={arXiv preprint arXiv:2501.12326},
  year={2025}
}

@article{zhang2024mme,
  title={Mme-realworld: Could your multimodal llm challenge high-resolution real-world scenarios that are difficult for humans?},
  author={Zhang, Yi-Fan and Zhang, Huanyu and Tian, Haochen and Fu, Chaoyou and Zhang, Shuangqing and Wu, Junfei and Li, Feng and Wang, Kun and Wen, Qingsong and Zhang, Zhang and others},
  journal={arXiv preprint arXiv:2408.13257},
  year={2024}
}

@article{li2024multimodal,
  title={Multimodal arxiv: A dataset for improving scientific comprehension of large vision-language models},
  author={Li, Lei and Wang, Yuqi and Xu, Runxin and Wang, Peiyi and Feng, Xiachong and Kong, Lingpeng and Liu, Qi},
  journal={arXiv preprint arXiv:2403.00231},
  year={2024}
}

@article{jiang2025vlm,
  title={VLM-R$^3$: Region Recognition, Reasoning, and Refinement for Enhanced Multimodal Chain-of-Thought},
  author={Jiang, Chaoya and Heng, Yongrui and Ye, Wei and Yang, Han and Xu, Haiyang and Yan, Ming and Zhang, Ji and Huang, Fei and Zhang, Shikun},
  journal={arXiv preprint arXiv:2505.16192},
  year={2025}
}

@article{wang2025sota,
  title={Sota with less: Mcts-guided sample selection for data-efficient visual reasoning self-improvement},
  author={Wang, Xiyao and Yang, Zhengyuan and Feng, Chao and Lu, Hongjin and Li, Linjie and Lin, Chung-Ching and Lin, Kevin and Huang, Furong and Wang, Lijuan},
  journal={arXiv preprint arXiv:2504.07934},
  year={2025}
}

@article{zheng2024llamafactory,
  title={Llamafactory: Unified efficient fine-tuning of 100+ language models},
  author={Zheng, Yaowei and Zhang, Richong and Zhang, Junhao and Ye, Yanhan and Luo, Zheyan and Feng, Zhangchi and Ma, Yongqiang},
  journal={arXiv preprint arXiv:2403.13372},
  year={2024}
}

@article{loshchilov2017decoupled,
  title={Decoupled weight decay regularization},
  author={Loshchilov, Ilya and Hutter, Frank},
  journal={arXiv preprint arXiv:1711.05101},
  year={2017}
}

@misc{vonwerra2022trl,
  author = {Leandro von Werra and Younes Belkada and Lewis Tunstall and Edward Beeching and Tristan Thrush and Nathan Lambert and Shengyi Huang and Kashif Rasul and Quentin Gallouédec},
  title = {TRL: Transformer Reinforcement Learning},
  year = {2020},
  publisher = {GitHub},
  journal = {GitHub repository},
  howpublished = {\url{https://github.com/huggingface/trl}}
}

@article{yue2025does,
  title={Does reinforcement learning really incentivize reasoning capacity in llms beyond the base model?},
  author={Yue, Yang and Chen, Zhiqi and Lu, Rui and Zhao, Andrew and Wang, Zhaokai and Song, Shiji and Huang, Gao},
  journal={arXiv preprint arXiv:2504.13837},
  year={2025}
}

@inproceedings{liu2024mmbench,
  title={Mmbench: Is your multi-modal model an all-around player?},
  author={Liu, Yuan and Duan, Haodong and Zhang, Yuanhan and Li, Bo and Zhang, Songyang and Zhao, Wangbo and Yuan, Yike and Wang, Jiaqi and He, Conghui and Liu, Ziwei and others},
  booktitle={European conference on computer vision},
  pages={216--233},
  year={2024},
  organization={Springer}
}

@article{li2023evaluating,
  title={Evaluating object hallucination in large vision-language models},
  author={Li, Yifan and Du, Yifan and Zhou, Kun and Wang, Jinpeng and Zhao, Wayne Xin and Wen, Ji-Rong},
  journal={arXiv preprint arXiv:2305.10355},
  year={2023}
}

@inproceedings{guan2024hallusionbench,
  title={Hallusionbench: an advanced diagnostic suite for entangled language hallucination and visual illusion in large vision-language models},
  author={Guan, Tianrui and Liu, Fuxiao and Wu, Xiyang and Xian, Ruiqi and Li, Zongxia and Liu, Xiaoyu and Wang, Xijun and Chen, Lichang and Huang, Furong and Yacoob, Yaser and others},
  booktitle={Proceedings of the IEEE/CVF Conference on Computer Vision and Pattern Recognition},
  pages={14375--14385},
  year={2024}
}

@inproceedings{kembhavi2016diagram,
  title={A diagram is worth a dozen images},
  author={Kembhavi, Aniruddha and Salvato, Mike and Kolve, Eric and Seo, Minjoon and Hajishirzi, Hannaneh and Farhadi, Ali},
  booktitle={European conference on computer vision},
  pages={235--251},
  year={2016},
  organization={Springer}
}

@inproceedings{masry2022chartqa,
  title={Chartqa: A benchmark for question answering about charts with visual and logical reasoning},
  author={Masry, Ahmed and Do, Xuan Long and Tan, Jia Qing and Joty, Shafiq and Hoque, Enamul},
  booktitle={Findings of the association for computational linguistics: ACL 2022},
  pages={2263--2279},
  year={2022}
}

@article{wang2024measuring,
  title={Measuring multimodal mathematical reasoning with math-vision dataset},
  author={Wang, Ke and Pan, Junting and Shi, Weikang and Lu, Zimu and Ren, Houxing and Zhou, Aojun and Zhan, Mingjie and Li, Hongsheng},
  journal={Advances in Neural Information Processing Systems},
  volume={37},
  pages={95095--95169},
  year={2024}
}

@article{tong2024cambrian,
  title={Cambrian-1: A fully open, vision-centric exploration of multimodal llms},
  author={Tong, Peter and Brown, Ellis and Wu, Penghao and Woo, Sanghyun and IYER, Adithya Jairam Vedagiri and Akula, Sai Charitha and Yang, Shusheng and Yang, Jihan and Middepogu, Manoj and Wang, Ziteng and others},
  journal={Advances in Neural Information Processing Systems},
  volume={37},
  pages={87310--87356},
  year={2024}
}

@inproceedings{li2025screenspot,
  title={Screenspot-pro: Gui grounding for professional high-resolution computer use},
  author={Li, Kaixin and Meng, Ziyang and Lin, Hongzhan and Luo, Ziyang and Tian, Yuchen and Ma, Jing and Huang, Zhiyong and Chua, Tat-Seng},
  booktitle={Proceedings of the 33rd ACM International Conference on Multimedia},
  pages={8778--8786},
  year={2025}
}

@inproceedings{kwon2023efficient,
  title={Efficient Memory Management for Large Language Model Serving with PagedAttention},
  author={Woosuk Kwon and Zhuohan Li and Siyuan Zhuang and Ying Sheng and Lianmin Zheng and Cody Hao Yu and Joseph E. Gonzalez and Hao Zhang and Ion Stoica},
  booktitle={Proceedings of the ACM SIGOPS 29th Symposium on Operating Systems Principles},
  year={2023}
}

@article{li2026deepscan,
  title={Deepscan: A training-free framework for visually grounded reasoning in large vision-language models},
  author={Li, Yangfu and Zhan, Hongjian and Chen, Jiawei and Gong, Yuning and Liu, Qi and Lu, Yue},
  journal={arXiv preprint arXiv:2603.03857},
  year={2026}
}

@article{lyu2026struvis,
  title={StruVis: Enhancing Reasoning-based Text-to-Image Generation via Thinking with Structured Vision},
  author={Lyu, Yuanhuiyi and Lei, Kaiyu and Weng, Ziqiao and Zheng, Xu and Jiang, Lutao and Li, Teng and Li, Yangfu and Huang, Ziyuan and Zhang, Linfeng and Hu, Xuming},
  journal={arXiv preprint arXiv:2603.06032},
  year={2026}
}

\clearpage
\appendix
% \clearpage
\section{Omitted Technical Details}
\label{app:A}

\textbf{Roadmap.} 
\emph{We organize the theoretical analysis as follows.
In \textbf{Appendix~\ref{app:A1}}, we formalize the probabilistic framework and preliminary definitions.
\textbf{Appendix~\ref{app:A2}} provides the rigorous derivation of our variational objective, stemming from the general Sub-Trajectory Balance principle.
Building on this, we introduce necessary regularity assumptions to facilitate tractable analysis and establish two auxiliary lemmas in \textbf{Appendix~\ref{app:A3}}: \textbf{Lemma~\ref{lem:B1}} demonstrates that the shaped reward is proportional to an exponentially tilted posterior $P_\lambda$, \ie, $R(Z)\propto P_\lambda$, while \textbf{Lemma~\ref{lem:B2}} proves that the global optimum of the policy recovers this tilted distribution, \ie, $p_{\theta^\star}(Z\mid X)\propto R(Z)\propto P_\lambda$.
Finally, in \textbf{Appendix~\ref{app:A4}}, we present the complete proofs of the main theorems.
By bridging the optimal policy $p_{\theta^\star}$ and the target valid posterior $P_{\rm V}$ via the tilted distribution $P_\lambda$, we derive the Total Variation (TV) bound in \textbf{Theorem~\ref{thm:1}}.
We conclude with an algebraic analysis of this bound in \textbf{Theorem~\ref{thm:2}}, confirming that PFlowNet provides strictly tighter guarantees compared to limiting baselines.}

\subsection{Preliminaries}
\label{app:A1}

\emph{Basic variables and flow notation.}
Let $(X,Y,E)\sim P_{\rm data}$ denote a sample from the RFT dataset, where
\(X\coloneqq\langle I,T\rangle\) consists of an image $I$ and an instruction $T$,
\(Y=(y_1,\dots,y_L)\) is the response sequence generated by the verifier when given the synthetic flow $Z_{\rm s}(X,E)$, and $E$ is a reference set of RoIs (\eg, expert evidence) used for vicinal geometry shaping. For theoretical
analysis, we assume an intractable target joint distribution
\[
P(X,Y,Z,\top),
\]
defined over inputs $X$, outputs $Y$, latent perceptual flows $Z$, and a terminal symbol $\top$ that marks the end of a flow. Following Definition~\ref{def:1}, a perceptual flow is written as a finite trajectory
\[
Z \;=\; (z_0 \rightarrow z_1 \rightarrow \cdots \rightarrow z_K),
\qquad
z_k \;=\; \langle r_k,c_k\rangle \quad (k\ge 1),
\]
where $z_0$ is the planning state, and each perceptual state $z_k$ consists of a RoI $r_k$ and its caption $c_k$. We write
\[
r_{1:K}\coloneqq (r_1,\dots,r_K),
\qquad
c_{1:K}\coloneqq (c_1,\dots,c_K).
\]
Throughout the analysis, we condition on a fixed flow length $K$ for each input
instance, while allowing the geometric precision and traversal order of the RoIs to
vary across latent realizations.
For mathematical convenience, RoIs are represented by normalized coordinates
\(
r_k\in[0,1]^4,
\)
\eg, \(r_k=(x_1,y_1,x_2,y_2)\) with
\(0\le x_1<x_2\le 1\) and \(0\le y_1<y_2\le 1\). All geometric distances below are
defined on this normalized coordinate system. For any \(k\in\{0,1,\dots,K\}\), define
the prefix or sub-flow
\[
z_{0:k}
\;\coloneqq\;
(z_0 \rightarrow z_1 \rightarrow \cdots \rightarrow z_k),
\]
and use \(z_{0:k}\top\) to denote its terminated prefix.

\emph{Model factorization.}
Given an input $X$, let
\[
\mathcal{R}(X)=\{R_1(X),\dots,R_M(X)\}
\]
denote the support of all admissible unordered RoI bags appearing in valid flows under
the target process \(P(Z\mid X)\). Each
\(
R_j(X)=\{r_i^j\}_{i=1}^K
\)
is a set of RoIs with cardinality $K$ and represents one admissible unordered
realization of the intrinsic visual evidence. An ordered tuple \(r_{1:K}\) is a
permutation of a bag \(R\), denoted by
\(
r_{1:K}\in\mathbb{S}_R,
\)
where \(\mathbb{S}_R\) is the permutation class induced by the elements of \(R\).

PFlowNet parameterizes a variational distribution \(p_\theta(Z\mid X)\) over flows
and a conditional generator \(p_\theta(Y\mid X,Z)\). We do not assume a specific
architecture; it suffices that \(p_\theta(Z\mid X)\) induces a forward transition
kernel over states. For notational brevity, we use the generic autoregressive
factorization
\[
p_\theta(Z\mid X)
\;=\;
p_\theta(z_0\mid X)
\prod_{k=1}^K
p_\theta(z_k\mid z_{0:k-1},X),
\label{equ:F1}
\tag{A1-1}
\]
and define forward/backward sub-trajectory probabilities as in Eq.~\eqref{equ:6}.

\emph{Structural hypotheses.} We impose three hypotheses used in the subsequent analysis. First, the planning state is assumed to be uniquely determined by the input $X$: there exists a deterministic mapping $g$ such that
\[
z_0 \;=\; g(X),
\qquad
P(z_0\mid \cdot,X)
\;=\;
\mathbf{1}\{z_0=g(X)\}.
\tag{A1-2}
\label{equ:C1}
\]
Second, for each RoI $r$, define a deterministic crop operator
\[
I_r \;=\; \mathrm{Crop}(r,I),
\qquad
I_{r_{j:k}}\coloneqq (I_{r_j},\dots,I_{r_k}).
\]
The captioning likelihood is assumed to depend on $I$ only through the cropped visual
content:
\[
P(c_{j:k}\mid I,r_{j:k})
\;=\;
P(c_{j:k}\mid I_{r_{j:k}}).
\tag{A1-3}
\label{equ:C2}
\]
We also use \(I\setminus I_r\) to denote the complement context outside the RoI $r$.
Third, since the contrastive caption term applies only to perceptual states
\(z_{k\ge 1}\), we set the boundary condition at the planning state as
\[
\frac{P^+(z_0)}{P^-(z_0)}
\;\coloneqq\;
1,
\qquad
\text{s.t.}\qquad
R(z_0\top)
=
P(Y\mid z_0\top,X).
\tag{A1-4}
\label{equ:C3}
\]

\emph{Vicinal support.} Assume there exists a conceptual golden RoI trajectory
\(
G_R=(g_1,\dots,g_K)
\)
that captures the intrinsic visual evidence for $X$. For a tolerance parameter
\(\sigma\in[0,1]\), define the support of valid flows as
\[
\mathcal{S}_{\rm V}
\;\coloneqq\;
\Big\{
Z:
d_{\rm IoU}
\big(
\{r_1,\dots,r_K\},
\{g_1,\dots,g_K\}
\big)
\le \sigma
\Big\},
\]
which is assumed to be nonempty. Its posterior mass is denoted by
\[
s_{\rm V}
\;\coloneqq\;
P(\mathcal{S}_{\rm V}\mid X,Y).
\tag{A1-5}
\label{equ:S1}
\]

Given a reference RoI set $E$ and radius \(\varepsilon\in[0,1]\), define the
\(\varepsilon\)-vicinity of $E$ by
\[
\mathcal{B}_\varepsilon(E)
\;\coloneqq\;
\Big\{
z_{0:k}:
d_{\rm IoU}
\big(
\{r_1,\dots,r_k\},
E
\big)
\le \varepsilon
\Big\}.
\]
Given a complete flow $Z\in\mathcal{B}_\varepsilon(E)$, its terminal RoI set \(\{r_1,\dots,r_K\}\) lies within the \(\varepsilon\)-vicinity of
$E$. We define
\[
s_{\mathcal{B}}
\;\coloneqq\;
P(\mathcal{B}_\varepsilon(E)\mid X,Y),
\qquad
q
\;\coloneqq\;
\frac{s_{\mathcal{B}}}{s_{\rm V}}.
\]
We assume \(\varepsilon\) is sufficiently small such that
\[
\mathcal{B}_\varepsilon(E)
\subseteq
\mathcal{S}_{\rm V},
\qquad
\text{hence}
\qquad
0\le s_{\mathcal{B}}\le s_{\rm V}\le 1.
\tag{A1-6}
\label{equ:C4}
\]

\emph{Reward-induced tilted posterior.} For any \(\lambda\ge 0\), the shaping weight $\omega_\lambda$ induces an exponentially tilted posterior
\[
P_\lambda(Z\mid X,Y,E)
\;\coloneqq\;
\frac{
P(Z\mid X,Y)\,
\omega_\lambda(Z,E)
}{
\mathcal{Z}_\lambda
},
\qquad
\mathcal{Z}_\lambda
\;\coloneqq\;
\int
P(Z\mid X,Y)\,
\omega_\lambda(Z,E)
\,dZ.
\tag{A1-7}
\label{equ:P1}
\]
Since \(\omega_\lambda(Z,E)=1\) on \(\mathcal{B}_\varepsilon(E)\) and
\(\omega_\lambda(Z,E)=e^{-\lambda}\) on its complement, the normalizer admits the
closed form
\[
\begin{aligned}
\mathcal{Z}_\lambda
&=
\int_{\mathcal{B}_\varepsilon(E)}
P(Z\mid X,Y)\,dZ
+
e^{-\lambda}
\int_{\mathcal{B}_\varepsilon(E)^c}
P(Z\mid X,Y)\,dZ
\\
&=
P(\mathcal{B}_\varepsilon(E)\mid X,Y)
+
e^{-\lambda}
P(\mathcal{B}_\varepsilon(E)^c\mid X,Y)
\\
&=
s_{\mathcal{B}}
+
e^{-\lambda}
(1-s_{\mathcal{B}}).
\end{aligned}
\]
Using \(s_{\mathcal{B}}=q\,s_{\rm V}\), we equivalently have
\[
\mathcal{Z}_\lambda
=
q\,s_{\rm V}
+
e^{-\lambda}
\bigl(1-q\,s_{\rm V}\bigr).
\tag{A1-8}
\label{equ:Z1}
\]

Finally, define the valid-support posterior, which serves as the \emph{target distribution} in total-variation distance:
\[
P_{\rm V}(Z\mid X,Y)
\;\coloneqq\;
P(Z\mid X,Y,Z\in\mathcal{S}_{\rm V})
\;=\;
\frac{
P(Z\mid X,Y)\,
\mathbf{1}\{Z\in\mathcal{S}_{\rm V}\}
}{
s_{\rm V}
}.
\tag{A1-9}
\label{equ:Z2}
\]
\clearpage
\subsection{Derivation of Variational Objective}
\label{app:A2}
In this section, we provide the detailed algebraic derivation connecting the general Sub-Trajectory Balance (SubTB) objective in~\Cref{equ:6} to our specific variational loss function in~\Cref{equ:8}.
We begin with the squared log-difference form of the SubTB objective for a sub-trajectory $z_{i:j}$ (where $0 \le i < j \le K$), which is formulated as
\[
\mathcal{L}(z_{i:j}) \;=\; \left( \log \frac{\mathcal{F}(z_i) \cdot \mathcal{T}_F(z_{i:j})}{\mathcal{F}(z_j) \cdot \mathcal{T}_B(z_{j:i})} \right)^2.
\label{equ:L1}
\tag{A2-1}
\]

Based on the structural constraints of Perceptual Flow (Definition~\ref{def:1}) and the variational distribution $p_\theta$, we instantiate the terms in~\Cref{equ:L1} as follows. Given the tree-structured autoregressive generation, the backward path from any state is deterministic, effectively collapsing the backward transition to unity, \ie, $\mathcal{T}_B(z_{j:i}) = 1$. Conversely, the forward transition $\mathcal{T}_F(z_{i:j})$ is governed by the product of conditional likelihoods parameterized by $\theta$, yielding $\mathcal{T}_F(z_{i:j}) = \prod_{k=i+1}^j p_\theta(z_k \mid z_{0:k-1})$. Furthermore, we formulate the scalar flow $\mathcal{F}(z_k)$ at state $z_k$ as the terminal reward normalized by the termination policy, formally defined as $\mathcal{F}(z_k)\coloneqq R_\lambda(z_{0:k}\top)/p_\theta(\top \mid z_{0:k})$.

Substituting the above definitions into the ratio term of~\Cref{equ:L1}, denoted as $\Delta_{i,j}$, we have
\[
\begin{aligned}
\Delta_{i,j} \;&=\; \log \left(\frac{\mathcal{F}(z_i) \cdot \mathcal{T}_F(z_{i:j})}{\mathcal{F}(z_j) \cdot 1}\right) \\
&=\; \log \left(\frac{\left( \frac{R_\lambda(z_{0:i}\top)}{p_\theta(\top \mid z_{0:i})} \right) \cdot \prod_{k=i+1}^j p_\theta(z_k \mid z_{0:k-1})}{\left( \frac{R_\lambda(z_{0:j}\top)}{p_\theta(\top \mid z_{0:j})} \right)}\right) \\
&=\; \log \left(\left( \frac{R_\lambda(z_{0:i}\top)}{p_\theta(\top \mid z_{0:i})} \right)\cdot \prod_{k=i+1}^j p_\theta(z_k \mid z_{0:k-1}) \cdot \left( \frac{p_\theta(\top \mid z_{0:j})}{R_\lambda(z_{0:j}\top)} \right)\right) \\
&=\; \log \left(\frac{R_\lambda(z_{0:i}\top) \cdot \prod_{k=i+1}^j p_\theta(z_k \mid z_{0:k-1}) \cdot p_\theta(\top \mid z_{0:j})}{R_\lambda(z_{0:j}\top) \cdot p_\theta(\top \mid z_{0:i})}\right).
\end{aligned}
\]

Then, given a trajectory $Z=(z_0,\dots,z_K)$, we derive the loss by summing the squared log-ratios over all valid sub-flow $z_{i:j}$
\[
\mathcal{L}_{\rm vRFT}(Z,\theta)\;=\sum_{0\le i\le j\le K} \Delta_{i,j}^2   \;= \sum_{0\le i\le j\le K} \left( \log \frac{R_\lambda(z_{0:i}\top) \prod_{k=i+1}^j p_\theta(z_k \mid z_{0:k-1})\,p_\theta(\top\mid z_{0:j})}{R_\lambda(z_{0:j}\top)\,p_\theta(\top\mid z_{0:i})} \right)^2.
\label{equ:L2}
\tag{A2-2}
\]
Finally, we extend this formulation to the empirical data setting $(X, Y, E) \sim P_{\text{data}}$, where $E = \{e_l\}_{l=1}^L \subset \mathbb{N}^4$ denotes the set of expert-annotated Regions of Interest (RoIs). Consequently, the objective in~\Cref{equ:8} is given by 
\[
\mathcal{L}_{\rm vRFT}(\theta)=\mathbb{E}_{\substack{X,Y,E\sim P_{\rm data}\\ \{Z\}_{l=1}^L\sim p_{\theta}(\mathcal{Z}\mid X)}} \left[\sum_{0\le i\le j\le |Z|} \left( \log \frac{R_\lambda(z_{0:i}\top) \prod_{k=i+1}^j p_\theta(z_k \mid z_{0:k-1})\,p_\theta(\top\mid z_{0:j})}{R_\lambda(z_{0:j}\top)\,p_\theta(\top\mid z_{0:i})} \right)^2\right].
\label{equ:L3}
\tag{A2-3}
\]
where $\{Z\}_{l=1}^L\sim\mathcal{Z}$ denotes the group of sampled perceptual flows, and $\theta^\star = \arg\min_{\theta} \mathcal{L}_{\rm vRFT}(\theta)$.

\clearpage
\subsection{Assumptions and Auxiliary Lemmas}
\label{app:A3}

\begin{assumption}[Uniform Prior]
\label{assump:1}
Motivated by the empirical results, we adopt a uniform prior of \emph{admissible} RoI space $\mathcal{R}(X)$. Formally, given $X$, for all $Z(z_0,R,C)\sim P(Z\mid X)$ where $R\in\mathcal{R}(X)$ and $|R|=K$, we have
\[
\textstyle
P(R\mid X)=\frac{1}{|\mathcal{R}(X)|},\quad
P(r_{1:K}\mid R,X)=\frac{1}{K\,!},
\]    
where $r_{1:K}\!\in\!\mathbb{S}_R$ and $\mathbb{S}_R$ is the symmetric group of $\forall r\in R$.
\end{assumption}
\begin{assumption}[Faithful Captioning with Non-Informative Prior]
\label{assump:2}
We assume $\forall c\in C$ is faithful when conditioned on the cropped evidence $I_{r}$, and become non-informative when the evidence is absent (\ie, $I\setminus I_{r}$). Formally, let $\mathcal{C}$ be a finite space containing \emph{all} candidate captions $c$, we have
\[
\textstyle
P(c_{j:k}\mid I_{r_{j:k}})=\prod_{i=j}^k P(c_i\mid I_{r_i}),\ P(c\mid I\setminus I_r)=\frac{1}{|\mathcal{C}|}.
\]    
\end{assumption}

\begin{lemma}[Reward Consistency]
\label{lem:B1}
Under Assumptions~\ref{assump:1} and~\ref{assump:2}, for all $(X,Y,E)\sim P_{\rm data}$, the shaped multi-dimensional reward $R_\lambda(Z)$ satisfy
\[
R_\lambda(Z)\coloneqq R_\lambda(z_{0:k}\top)\propto P_\lambda(Z\mid X,Y,E).
\]
\end{lemma}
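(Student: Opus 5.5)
We will show that, for a complete flow $Z=(z_0,R,C)$ with $|R|=K$, each factor of $R_\lambda(z_{0:K}\top)$ in Eq.~\eqref{equ:mmreward} matches, up to a factor independent of $Z$, the corresponding factor of the unnormalized tilted posterior $P(Z\mid X,Y)\,\omega_\lambda(Z,E)$ from Eq.~\eqref{equ:P1}. We treat the reward model $p_\phi$ as realizing the target process $P$; this is the idealized setting in which the lemma is stated.

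\textbf{Step 1: factorize the target posterior.} By Bayes' rule,
\[
P(Z\mid X,Y)\;\propto\;P(Y\mid Z,X)\,P(Z\mid X),
\]
where the proportionality constant $1/P(Y\mid X)$ does not depend on $Z$. Using the autoregressive structure and \eqref{equ:C1}, we factor further as
\[
P(Z\mid X)=\mathbf{1}\{z_0=g(X)\}\,P(R\mid X)\,P(r_{1:K}\mid R,X)\,P(c_{1:K}\mid I,r_{1:K}).
\]
By Assumption~\ref{assump:1}, the two RoI factors equal $1/(|\mathcal{R}(X)|\,K!)$, a constant on the admissible support. By \eqref{equ:C2} and Assumption~\ref{assump:2}, the caption factor becomes $\prod_{i=1}^K P(c_i\mid I_{r_i})$. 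Together,
\[
P(Z\mid X,Y)\;\propto\;P(Y\mid z_{0:K}\top,X)\prod_{i=1}^K P(c_i\mid I_{r_i}).
\]

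\textbf{Step 2: match the reward.} The contrastive numerator $p_\phi^+(z_i)=p_\phi(c_i\mid I_{r_i})$ reproduces the caption likelihood. By Assumption~\ref{assump:2}, the denominator $p_\phi^-(z_i)=1/|\mathcal{C}|$, so $\prod_i p^+_\phi/p^-_\phi=|\mathcal{C}|^K\prod_i P(c_i\mid I_{r_i})$. Since $K$ is fixed per instance, $|\mathcal{C}|^K$ is a constant. The efficacy term $p_\phi(Y\mid z_{0:K}\top,X)$ matches the likelihood in Step~1. For the degenerate prefix $z_0$, we use the boundary convention \eqref{equ:C3}. Multiplying both sides by the same $\omega_\lambda(Z,E)$ gives
\[
R_\lambda(Z)=c\,P(Z\mid X,Y)\,\omega_\lambda(Z,E)=c\,\mathcal{Z}_\lambda\,P_\lambda(Z\mid X,Y,E),
\]
where $c=|\mathcal{C}|^K\,|\mathcal{R}(X)|\,K!\,P(Y\mid X)$ depends only on $(X,Y,E)$.

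\textbf{Main obstacle.} The delicate step is justifying that all proportionality constants are genuinely independent of $Z$. This needs three things. First, $K$ must be fixed, so that $|\mathcal{C}|^K$ and $K!$ are constant. Second, flows outside the admissible bags $\mathcal{R}(X)$ must be excluded: they carry zero prior mass, so we restrict the claim to the admissible support, or note that both sides are then treated on the same support. Third, the identification $p_\phi\equiv P$ of the frozen reward model with the target process is implicit in the idealized setting and must be stated explicitly. The remainder of the argument is routine algebra.
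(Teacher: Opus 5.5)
Your proposal is correct and takes essentially the same route as the paper's proof. Bayes' rule, Assumption~\ref{assump:1}, the deterministic planning state and the crop/factorization assumptions reduce $P(Z\mid X,Y)$ to $P(c_{1:K}\mid I_{r_{1:K}})\,P(Y\mid Z,X)$ up to a $Z$-independent constant; Assumption~\ref{assump:2} collapses the contrastive denominator to $|\mathcal{C}|^{K}$, and multiplying both sides by $\omega_\lambda$ finishes. Your explicit constant $c=|\mathcal{C}|^K|\mathcal{R}(X)|\,K!\,P(Y\mid X)$ and your three caveats (fixed $K$, restriction to admissible bags, and the identification $p_\phi\equiv P$) spell out what the paper's proof uses only tacitly.
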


\begin{proof}
Leveraging the definition of multi-dimensional reward~\Cref{equ:mmreward}, given any sub-flow $z_{i:j}$, we have
\[
R_\lambda(z_{0:k}\top)\;\coloneqq\;
\left(\prod_{i=1}^k \frac{P(c_i\mid I_{r_i})}{P(c_i\mid I\setminus I_{r_i})}\right)
\ P(Y\mid z_{0:k}\top,X)\ \omega_\lambda(z_{0:k},E).
% \label{equ:A1}
% \tag{A1}
\]
Invoking Assumption~\ref{assump:2}, for all $i\in\{1,\dots,K\}$, we have \( P(c_i\mid I\setminus I_{r_i})=1/|\mathcal{C}|\) where $\mathcal{C}$ is a finite space containing all candidate captions $c$. Plugging this into the above yields
\begin{equation}
\begin{aligned}
R_\lambda(Z)
&=
\Big(\prod_{i=1}^K P(c_i\mid I_{r_i})\Big)\ |\mathcal{C}|^{K}\ P(Y\mid Z,X)\ \omega_\lambda(Z,E),\\
&\propto
\Big(\prod_{i=1}^K P(c_i\mid I_{r_i})\Big)\ P(Y\mid Z,X)\ \omega_\lambda(Z,E),
\end{aligned}
\label{equ:A1}
\tag{A3-1}
\end{equation}
where the proportionality absorbs the factor $|\mathcal{C}|^{K}$, which only depends on $X$.
By the factorization of Assump.~\ref{assump:2}, we have
\[
\prod_{i=1}^K P(c_i\mid I_{r_i})
\;=\;
P(c_{1:K}\mid I_{r_{1:K}}).
\]
Subsequently, \Cref{equ:A1} becomes
\[
\boxed{
R_\lambda(Z)\ \propto\ P(c_{1:K}\mid I_{r_{1:K}})\ P(Y\mid Z,X)\ \omega_\lambda(Z,E).
}
\label{equ:A2}
\tag{A3-2}
\]

For the standard posterior, by Bayes’ rule, we have
\[
P(Z\mid X,Y) \;=\; \frac{P(Y,Z\mid X)}{P(Y\mid X)},
\]
and $P(Y\mid X)$ is a normalizing constant independent of $Z$.
Thus,
\[
\begin{aligned}
P(Z\mid X,Y)\ &\propto\ P(Y,Z\mid X)\ = P(Z\mid X)\ P(Y\mid Z,X)\ \equiv \ P(R, z_0, r_{1:K}, c_{1:K}\mid X)\ P(Y\mid Z,X),\\    
&\propto\ \underbrace{P(R\mid X)\ P(z_0\mid R,X)\ P(r_{1:K}\mid z_0,R,X)\ P(c_{1:K}\mid r_{1:K}, z_0,R,X)}_{P(R, z_0, r_{1:K}, c_{1:K}\mid X)}\ 
P(Y\mid Z,X). \\
\end{aligned}
\label{equ:A3}
\tag{A3-3}
\]
Based on the structural constraint that the traversal order is a uniform permutation over the support set (Assumption~\ref{assump:1}), the specific sequence $r_{1:K}$ is conditionally independent of the deterministic planning state $z_0$ given $X$, yielding $P(r_{1:K}\mid z_0,R,X)=P(r_{1:K}\mid R,X)$. 
Leveraging the deterministic crop operator ${\rm Crop}:\langle I,r\rangle\mapsto I_r$, we have
\[
\begin{aligned}
P(c_{1:K}\mid r_{1:K}, z_0,R,X)&=P(c_{1:K}\mid r_{1:K}, R,I,T), \\
&=P(c_{1:K}\mid I_{r_{1:K}}, z_0,R,T), \\
&=P(c_{1:K}\mid \cdot, I_{r_{1:K}}).    
\end{aligned}
\]
By Assumption~\ref{assump:2} and~\Cref{equ:C2}, the caption $c$ is only conditioned on the attended region $I_{r}$; thus, we have
\[
P(c_{1:K}\mid r_{1:K}, z_0,R,X)=P(c_{1:K}\mid I_{r_{1:K}}).\\
\]
Therefore, substituting the above into the~\Cref{equ:A3}, the standard posterior $P(Z\mid X,Y)$ satisfies
\[
P(Z\mid X,Y)\ \propto\ \underbrace{P(R\mid X)}_{\rm Support\ Prior}\ \underbrace{P(z_0\mid \cdot,X)}_{\rm Deterministic\ State}\ \underbrace{P(r_{1:K}\mid R,X)}_{\rm Ordering\ Prior}\ \underbrace{P(c_{1:K}\mid I_{r_{1:K}})}_{\rm Caption\ Sequence}\ 
\underbrace{P(Y\mid Z,X)}_{\rm Perceptual\ Efficacy}.
\]
By leveraging Assumption~\ref{assump:1} (\ie, uniform support and ordering prior) and \Cref{equ:C1}, we have
\[
\begin{aligned}
P(Z\mid X,Y)\ &\propto\ \frac{1}{|\mathcal{R}(X)|K\,!}\ P(c_{1:K}\mid I_{r_{1:K}})\ P(Y\mid Z,X),\\
&\propto\ P(c_{1:K}\mid I_{r_{1:K}})\ P(Y\mid Z,X).
\end{aligned}
\tag{A3-4}
\label{equ:A4}
\]
Based on~\Cref{equ:P1,equ:A4}, we have
\[
\boxed{
\begin{aligned}
P_\lambda(Z\mid X,Y,E)
\;&\propto\;
P(Z\mid X,Y)\ \omega_\lambda(Z,E),\\
&\propto\; P(c_{1:K}\mid I_{r_{1:K}})\ P(Y\mid Z,X)\ \omega_\lambda(Z,E).
\end{aligned}
}
\tag{A3-5}
\label{equ:A5}
\]
Consequently, combining~\Cref{equ:A2} with~\Cref{equ:A5} gives
\[
R_\lambda(Z)\ \propto\ P_\lambda(Z\mid X,Y,E).
\]
This completes the proof. 
\end{proof}

\begin{lemma}[Posterior Matching Induced by the Variational Objective]
\label{lem:B2}
\emph{Under Assumption \ref{assump:1} and~\ref{assump:2}, suppose the policy $p_\theta$ is expressive and $\theta^\star$ globally minimizes $\mathcal{L}_{\rm vRFT}(\theta)$, for every $(X,Y,E)\sim P_{\rm data}$ we have
\[
p_{\theta^\star}(Z\mid X)\;\propto\; R_\lambda(Z)
\;\propto\; P_\lambda(Z\mid X,Y,E).
\]
}
\end{lemma}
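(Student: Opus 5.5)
We prove Lemma~\ref{lem:B2} by showing that a global minimizer of $\mathcal{L}_{\rm vRFT}$ drives every squared log-ratio in \Cref{equ:L2} to zero. Those zero-residual conditions telescope into the statement that the terminating probability of each complete flow is proportional to $R_\lambda$. The second proportionality then follows directly from Lemma~\ref{lem:B1}.

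\emph{Step 1: attainability of zero loss.} Each summand in \Cref{equ:L3} is a nonnegative square, so $\mathcal{L}_{\rm vRFT}\ge 0$. I would exhibit a policy achieving $0$. The generation process is tree-structured: every prefix $z_{0:k}$ has a unique parent and $\mathcal{T}_B\equiv 1$. Define the target state flow $F^\star(z_{0:k})\coloneqq\sum_{Z\succeq z_{0:k}}R_\lambda(Z)$, the total shaped reward of complete flows extending the prefix, plus its own terminal reward. Then set
\[
p^\star(z_{k+1}\mid z_{0:k})=\frac{F^\star(z_{0:k+1})}{F^\star(z_{0:k})},\qquad p^\star(\top\mid z_{0:k})=\frac{R_\lambda(z_{0:k}\top)}{F^\star(z_{0:k})}.
\]
These are normalized by construction. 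Under this choice $R_\lambda(z_{0:k}\top)/p^\star(\top\mid z_{0:k})=F^\star(z_{0:k})$, so each $\Delta_{i,j}=\log F^\star(z_{0:i})+\log\prod_k p^\star(z_k\mid z_{0:k-1})-\log F^\star(z_{0:j})$, and this telescopes to $0$. Because $p_\theta$ is assumed expressive, $p^\star$ is realizable. Hence the minimum value is $0$, and $\theta^\star$ attains it.

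\emph{Step 2: from zero loss to proportionality.} Conversely, let $\theta^\star$ give zero loss. Since expectations are taken under $p_{\theta}$'s own samples, I need full support; I would argue it from $R_\lambda>0$, using $\omega_\lambda\ge e^{-\lambda}$ and positive likelihoods. Then $\Delta_{i,j}=0$ holds for all prefixes of all flows. Take $i=0$, $j=K$, and use the boundary condition \Cref{equ:C3} together with the deterministic planning state \Cref{equ:C1}, so that $p_{\theta^\star}(z_0\mid X)=1$. This gives
\[
p_{\theta^\star}(Z\top\mid X)=\prod_{k=1}^K p_{\theta^\star}(z_k\mid z_{0:k-1})\,p_{\theta^\star}(\top\mid z_{0:K})=\frac{p_{\theta^\star}(\top\mid z_0)}{R_\lambda(z_0\top)}\,R_\lambda(Z\top).
\]
The prefactor depends only on $X$ (and $Y,E$), so $p_{\theta^\star}(Z\mid X)\propto R_\lambda(Z)$. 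Here I identify $p_{\theta^\star}(Z\mid X)$ with the probability of generating $Z$ and then terminating, which is consistent with the paper's fixed-length-$K$ convention.

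\emph{Step 3: conclusion.} Combining Step~2 with Lemma~\ref{lem:B1}, which gives $R_\lambda\propto P_\lambda(Z\mid X,Y,E)$, completes the chain.

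The main obstacle is Step~2's support issue. A zero-expected-loss minimizer only constrains flows with positive $p_{\theta^\star}$ mass, so a degenerate policy might in principle ignore regions of flow space. I would first handle this by noting that any zero-residual configuration on a reachable prefix forces $p_{\theta^\star}(\top\mid\cdot)>0$ and strictly positive child transitions whenever the children have positive $R_\lambda$, so support propagates down the tree. If that fails, I would fall back on the standard GFlowNet convention of training with a full-support (off-policy) sampler.
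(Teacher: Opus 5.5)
Your Steps~2 and~3 are essentially the paper's proof. You set $(i,j)=(0,K)$ in the exponentiated balance equations. You use the deterministic planning state (A1-2) and the boundary convention (A1-4) to see that $p_{\theta^\star}(\top\mid z_0)/R_\lambda(z_0\top)$ does not depend on $Z$. You then finish with Lemma~\ref{lem:B1}. Your Step~1, the explicit flow $F^\star$ with $p^\star(\top\mid z_{0:k})=R_\lambda(z_{0:k}\top)/F^\star(z_{0:k})$, is something the paper skips: it simply posits an optimum with vanishing residuals. Your telescoping check there is correct. One caveat: $p^\star$ depends on $(Y,E)$ through $R_\lambda$, but the policy only sees $X$. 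So ``expressive'' makes $p^\star$ realizable only if $(Y,E)$ is determined by $X$ on the support of $P_{\rm data}$, or under the per-instance reading the paper adopts (see its remark after (A4-1)).

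The genuine gap is your primary treatment of the support issue. The claim that zero residuals force positive transitions to every child with $R_\lambda>0$, so that support propagates down the tree, is false. The on-policy objective only evaluates $p_\theta(\top\mid\cdot)$ at prefixes of sampled flows and $p_\theta(z_k\mid z_{0:k-1})$ along sampled edges. An unvisited child therefore contributes no term at all, whatever its reward.

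Here is a concrete counterexample:
\begin{itemize}
\item Fix a single chain $z_0\to z_1\to\cdots\to z_K$ and put $F_i=\sum_{k=i}^{K}R_\lambda(z_{0:k}\top)$.
\item At $z_{0:i}$, let the policy terminate with probability $R_\lambda(z_{0:i}\top)/F_i$ and otherwise move to $z_{i+1}$, with probability $F_{i+1}/F_i$.
\item Every sampled flow is a prefix of this chain, and every residual is $\Delta_{i,j}=\log\bigl(F_i\cdot(F_j/F_i)/F_j\bigr)=0$.
\end{itemize}
So this policy attains zero loss and is a global minimizer of $\mathcal{L}_{\rm vRFT}$. Yet it assigns zero mass to every other flow, so it is not proportional to $R_\lambda$.

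Hence, with the expectation taken under $p_\theta$ itself, the lemma does not follow from global minimality alone, and your fallback is not optional. You must assume the balance conditions hold for \emph{every} flow, for example via full-support or off-policy training. This is exactly what the paper assumes implicitly: it requires the per-trajectory loss (A2-2) to vanish for all $(i,j)$ and then reads the resulting identity for all $Z$. State this as a hypothesis rather than trying to derive it from $R_\lambda>0$.
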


\begin{proof}
Based on~\Cref{equ:F1}, the forward trajectory probability induced by policy $p_\theta$ is defined as
\[
p_\theta(Z,\top\mid X)
\;\coloneqq\;
\Big(\prod_{k=1}^{K} p_\theta(z_k\mid z_{0:k-1})\Big)\, p_\theta(\top\mid z_{0:K}).
\label{equ:B1}
\tag{A3-6}
\]
Based on~\Cref{equ:L2}, when policy $p_\theta$ is expressive enough and the optimization reaches a solution with $\mathcal{L}_{\rm VRFT}(Z,\theta)=0$ for all valid $(i,j)$, we have each squared term $\Delta_{i,j}$ equals $0$. Formally, when for every $0\le i\le j\le K$, the following holds
\[
\log\frac{R_\lambda(z_{0:i}\top)\prod_{k=i+1}^j p_{\theta^\star}(z_k\mid z_{0:k-1})\,p_{\theta^\star}(\top\mid z_{0:j})}{R_\lambda(z_{0:j}\top)\,p_{\theta^\star}(\top\mid z_{0:i})}
\;=\;0.
\]
Exponentiating the above yields the exact balance constraints
\[
R_\lambda(z_{0:i}\top)\Big(\prod_{k=i+1}^j p_{\theta^\star}(z_k\mid z_{0:k-1})\Big)\,p_{\theta^\star}(\top\mid z_{0:j})
\;=\;
R_\lambda(z_{0:j}\top)\,p_{\theta^\star}(\top\mid z_{0:i}).
\]
Taking $(i,j)=(0,K)$ in the previous equation gives
\[
R_\lambda(z_{0}\top)\underbrace{\Big(\prod_{k=1}^K p_{\theta^\star}(z_k\mid z_{0:k-1})\Big)\,p_{\theta^\star}(\top\mid z_{0:K})}_{p_{\theta^\star}(Z,\top\mid X)}
\;=\;
R_\lambda(z_{0:K}\top)\,p_{\theta^\star}(\top\mid z_{0}).
\label{equ:B2}
\tag{A3-7}
\]
The product term in~\eqref{equ:B2} is precisely the forward trajectory probability $p_{\theta^\star}(Z,\top\mid X)$ in~\eqref{equ:B1},
hence
\[
p_{\theta^\star}(Z,\top\mid X)
\;=\;
\frac{p_{\theta^\star}(\top\mid z_0)}{R_\lambda(z_0\top)}\; R_\lambda(z_{0:K}\top).
\label{equ:B3}
\tag{A3-8}
\]
Based on~\Cref{equ:C1,equ:C3}, $z_0$ is uniquely determined by $X$; thus, $R_\lambda(z_0\top)$, \ie, $P(Y\mid z_0\top ,X)$, is a normalization choice at the boundary. Hence, the prefactor $\frac{p_{\theta^\star}(\top\mid z_0)}{R_\lambda(z_0\top)}$ does not depend on the particular trajectory realization $Z$. Therefore, ~\Cref{equ:B3} implies the proportionality
\[
p_{\theta^\star}(Z\mid X)\ \coloneqq\ p_{\theta^\star}(Z,\top\mid X)\ \propto\ R_\lambda(Z).
% \label{equ:B4}
% \tag{B4}
\]
\ie, minimizing the variational loss $\mathcal{L}_{\rm VRFT}(Z,\theta)$ yields an optimum policy $p_{\theta^\star}(Z\mid X)$ that is proportional to the reward $R_\lambda(Z)$.
Thereby, based on the conclusion of \textbf{Lemma~\ref{lem:B1}}, \ie, $R_\lambda(z)\propto P_\lambda(Z\mid X,Y,E)$, we have
\[
\boxed{
p_{\theta^\star}(Z\mid X)\ \propto P_\lambda(Z\mid X,Y,E).
}
\label{equ:B4}
\tag{A3-9}
\]
This completes the proof. 
\end{proof}

% \clearpage
\subsection{Proofs}
\label{app:A4}
\textbf{Restatement of~\Cref{thm:1}} [Variation Distance Bound]\textbf{.}
\emph{Under Assumption~\ref{assump:1},~\ref{assump:2}, we suppose the valid support $\mathcal{S}_{\rm V}$ satisfies $d_{\rm eff}$-regularity, where $d_{\rm eff}$ is its effective dimension; thus, $\exists\kappa \ge 1$ such that $q \coloneqq s_{\mathcal{B}}/s_{\rm V} \ge \kappa(\varepsilon/\sigma)^{d_{\rm eff}}$. Suppose the model $p_\theta$ is expressive and let $\theta^\star$ be the global minimizer of $\mathcal{L}_{\rm vRFT}(\theta)$. The total variation distance between the policy $p_{\theta^\star}(Z\,|\, X)$ and the target posterior $P_{\rm V}(Z\,|\,X, Y)$ is bounded by:}
\[
D_{\rm TV}(p_{\theta^\star}(\cdot\mid X),P_{\rm V}(\cdot\mid X,Y))\; \le\; \frac{1}{2\,\mathcal{Z}_\lambda} \cdot \left(
q\,|s_{\rm V}-\mathcal{Z}_\lambda|
+(1-q)\,|e^{-\lambda}s_{\rm V}-\mathcal{Z}_\lambda|
+e^{-\lambda}(1-s_{\rm V})
\right).    
\]
\begin{remark}[Limit Analysis \emph{w.r.t.} $\lambda$]
We analyze the asymptotic behavior of the bound by evaluating the limits of $\lambda$ 
\begin{itemize}[leftmargin=*, nosep]
    \item[$\diamondsuit$] \emph{MLE Regime} ($\lambda \!\to\! 0$): Since $e^{-\lambda} \!\to\! 1$, the partition function $\mathcal{Z}_\lambda \!\to\! s_{\mathcal{B}} + (1-s_{\mathcal{B}}) \!=\! 1$. The bound simplifies to $\frac{1}{2}\big(q(1-s_{\rm V}) + (1-q)(1-s_{\rm V}) + (1-s_{\rm V})\big) = 1 - s_{\rm V}$. Dominated by inherent data variance ($s_{\rm V}$), PFlowNet discards geometric constraints and \emph{degrades to standard MLE}.
    
    \item[$\diamondsuit$] \emph{RLVR Regime} ($\lambda \!\to\! \infty$): Since $e^{-\lambda} \!\to\! 0$, we have $\mathcal{Z}_\lambda \!\to\! s_{\mathcal{B}}$. The bound becomes $\frac{1}{2s_{\mathcal{B}}}\big(q(s_{\rm V}-s_{\mathcal{B}}) + (1-q)s_{\mathcal{B}} + 0\big)$. Using $s_{\mathcal{B}} \!=\! q s_{\rm V}$, the numerator simplifies to $q s_{\rm V}(1-q) + s_{\rm V}q(1-q) = 2s_{\mathcal{B}}(1-q)$, yielding a final bound of $1-q$. Here, performance is bottlenecked by the expert bias ($q$), \emph{degenerating to expert-guided RLVR}.
\end{itemize}
\end{remark}
\begin{remark}[Limit Analysis \emph{w.r.t.} $\varepsilon$]
The vicinity radius $\varepsilon$ affects the bound through the coverage ratio $q$. As $\varepsilon \!\to\! 0$, the vicinity contracts to a singularity, implying $q \!\to\! 0$ and $s_{\mathcal{B}} \!\to\! 0$. Consequently, $\mathcal{Z}_\lambda \!\to\! e^{-\lambda}$. Substituting these into \Cref{thm:1}, the bound approaches $\frac{1}{2e^{-\lambda}}\big(0 + |e^{-\lambda}s_{\rm V}-e^{-\lambda}| + e^{-\lambda}(1-s_{\rm V})\big) = 1-s_{\rm V}$. This algebraic equivalence to the MLE bound confirms that as the reward signal becomes uninformative, the geometric guidance vanishes. Conversely, increasing $\varepsilon$ (where $\mathcal{B}_\varepsilon \subseteq \mathcal{S}_{\rm V}$) increases $q$, monotonically tightening the bound towards $(1-q)$. However, if $\varepsilon > \sigma$, the vicinity encompasses invalid regions, diluting the guidance and degrading performance.
\end{remark}

\begin{proof}
Since $\theta^\star$ acts as a global minimizer of the objective $\mathcal{L}_{\rm vRFT}$, which is formulated as an expectation over $(X,Y,E)\sim P_{\rm data}$~\eqref{equ:L3}, the optimality condition derived in \textbf{Lemma~\ref{lem:B2}} holds for $P_{\rm data}$-almost every data tuple. Thereby, we have:
\[
p_{\theta^\star}(Z\mid X) \;\propto\; P_\lambda(Z\mid X,Y,E),
\]
Since both sides are valid probability distributions normalized over the space of $Z$, this implies strict point-wise equality
\[
p_{\theta^\star}(Z\mid X) \;=\; P_\lambda(Z\mid X,Y,E).
\label{equ:Thm.1}
\tag{A4-1}
\]
\emph{Remark.} While a realizable parametric model $p_\theta(Z\mid X)$ cannot analytically depend on $Y,E$, this equality characterizes the ideal behavior of the policy at the global optimum of the variational objective for a given training instance. 

Then, recall the definition of total variation distance $D_{\rm TV}$:
\[
D_{\rm TV}(P,Q)
\;=\;\sup_{\mathcal{A}} |P(\mathcal{A})-Q(\mathcal{A})|
\;=\;\frac{1}{2}\int \bigl|p(z)-q(z)\bigr|\,dz,
\label{equ:Thm.2}
\tag{A4-2}
\]
where $p,q$ are densities w.r.t.\ a common base measure. Based on~\Cref{equ:Thm.1}, we will bound
\(
D_{\rm TV}(P_\lambda(\cdot\mid X,Y,E),P_{\rm V}(\cdot\mid X,Y)).
\)
Since $\mathcal{B}_\varepsilon(E)\subseteq\mathcal{S}_{\rm V}$~\eqref{equ:C4}, we partition the flow space $\Omega$, \ie, the support of $P(Z\mid X,Y)$, into three \emph{disjoint} measurable regions:
\[
\Omega
=\underbrace{\mathcal{B}_\varepsilon(E)}_{\rm vicinal}\ \sqcup\ \underbrace{\bigl(\mathcal{S}_{\rm V}\setminus \mathcal{B}_\varepsilon(E)\bigr)}_{\rm valid}\ \sqcup\ \underbrace{\mathcal{S}_{\rm V}^c}_{\rm invalid}.
\label{equ:Thm.3}
\tag{A4-3}
\]
Based on the definition of the energy weight $\omega_\lambda$, we have
\[
\omega_\lambda(z_{0:k},E)
\;\coloneqq\;
\exp\!\Big(-\lambda\cdot \mathbb{I}\{z_{0:k}\notin \mathcal{B}_\varepsilon(E)\}\Big)\quad \textrm{and}\quad \omega_\lambda(z_{0},E)\equiv1.
% \label{equ:R2}
% \tag{B2-8}
\]
Thereby, leveraging~\Cref{equ:P1}, for any $Z$:

$\diamondsuit$ If $Z\in \mathcal{B}_\varepsilon(E)$, then $\omega_\lambda(Z,E)=1$, hence
\[
P_\lambda(Z\mid X,Y,E)=\frac{P(Z\mid X,Y)}{\mathcal{Z}_\lambda}.
\label{equ:Thm.4}
\tag{A4-4}
\]
$\diamondsuit$ If $Z\notin \mathcal{B}_\varepsilon(E)$, then $\omega_\lambda(Z,E)=e^{-\lambda}$, hence
\[
P_\lambda(Z\mid X,Y,E)=\frac{e^{-\lambda}P(Z\mid X,Y)}{\mathcal{Z}_\lambda}.
\label{equ:Thm.5}
\tag{A4-5}
\]
Leveraging~\Cref{equ:Z2}, for the \emph{target} posterior:

$\diamondsuit$ If $Z\in\mathcal{S}_{\rm V}$, then
\[
P_{\rm V}(Z\mid X,Y)=\frac{P(Z\mid X,Y)}{s_{\rm V}}.
\label{equ:Thm.6}
\tag{A4-6}
\]
$\diamondsuit$ If $Z\notin\mathcal{S}_{\rm V}$, then
\[
P_{\rm V}(Z\mid X,Y)=0.
\label{equ:Thm.7}
\tag{A4-7}
\]

Based on~\Cref{equ:Thm.2,equ:Thm.3}, we have
\[
\begin{aligned}
2\,D_{\rm TV}(P_\lambda(Z\mid X,Y,E),P_{\rm V}(Z\mid X,Y))
&=\int_{\mathcal{B}_\varepsilon(E)} \Bigl|P_\lambda(Z\mid X,Y,E)-P_{\rm V}(Z\mid X,Y)\Bigr|\,dZ\\
&+\int_{\mathcal{S}_{\rm V}\setminus\mathcal{B}_\varepsilon(E)} \Bigl|P_\lambda(Z\mid X,Y,E)-P_{\rm V}(Z\mid X,Y)\Bigr|\,dZ\\
&+\int_{\mathcal{S}_{\rm V}^c} \Bigl|P_\lambda(Z\mid X,Y,E)-P_{\rm V}(Z\mid X,Y)\Bigr|\,dZ.
\end{aligned}
\label{equ:Thm.8}
\tag{A4-8}
\]
We then evaluate the three integrals separately.

For $Z\in\mathcal{B}_\varepsilon(E)\subseteq\mathcal{S}_{\rm V}$, substituting~\Cref{equ:Thm.4,equ:Thm.6} into the first term in~\Cref{equ:Thm.8}, we have
\[
\begin{aligned}
\int_{\mathcal{B}_\varepsilon(E)}
\Bigl|\tfrac{P(Z\mid X,Y)}{\mathcal{Z}_\lambda}-\tfrac{P(Z\mid X,Y)}{s_{\rm V}}\Bigr|\,dZ
&=\int_{\mathcal{B}_\varepsilon(E)}
P(Z\mid X,Y)\,\Bigl|\tfrac{1}{\mathcal{Z}_\lambda}-\tfrac{1}{s_{\rm V}}\Bigr|\,dZ\\
&=\Bigl|\tfrac{1}{\mathcal{Z}_\lambda}-\tfrac{1}{s_{\rm V}}\Bigr|
\int_{\mathcal{B}_\varepsilon(E)} P(Z\mid X,Y)\,dZ\\
&=\Bigl|\tfrac{1}{\mathcal{Z}_\lambda}-\tfrac{1}{s_{\rm V}}\Bigr|\ s_{\mathcal{B}}\\
&=\frac{|s_{\rm V}-\mathcal{Z}_\lambda|}{\mathcal{Z}_\lambda\,s_{\rm V}}\ s_{\mathcal{B}}.
\end{aligned}
\label{equ:Thm.9}
\tag{A4-9}
\]
Using $s_{\mathcal{B}}=q\,s_{\rm V}$ gives
\[
\int_{\mathcal{B}_\varepsilon(E)} |P_\lambda(Z\mid X,Y,E)-P_{\rm V}(Z\mid X,Y)|
=\frac{|s_{\rm V}-\mathcal{Z}_\lambda|}{\mathcal{Z}_\lambda}\ q.
\]

For $Z\in\mathcal{S}_{\rm V}\setminus\mathcal{B}_\varepsilon(E)$, substituting~\Cref{equ:Thm.5,equ:Thm.6} into the second term in~\Cref{equ:Thm.8}, we have
\[
\begin{aligned}
\int_{\mathcal{S}_{\rm V}\setminus\mathcal{B}_\varepsilon(E)}
\Bigl|\tfrac{e^{-\lambda}P(Z\mid X,Y)}{\mathcal{Z}_\lambda}-\tfrac{P(Z\mid X,Y)}{s_{\rm V}}\Bigr|\,dZ
&=\int_{\mathcal{S}_{\rm V}\setminus\mathcal{B}_\varepsilon(E)}
P(Z\mid X,Y)\,\Bigl|\tfrac{e^{-\lambda}}{\mathcal{Z}_\lambda}-\tfrac{1}{s_{\rm V}}\Bigr|\,dZ\\
&=\Bigl|\tfrac{e^{-\lambda}}{\mathcal{Z}_\lambda}-\tfrac{1}{s_{\rm V}}\Bigr|
\int_{\mathcal{S}_{\rm V}\setminus\mathcal{B}_\varepsilon(E)} P(Z\mid X,Y)\,dZ\\
&=\Bigl|\tfrac{e^{-\lambda}}{\mathcal{Z}_\lambda}-\tfrac{1}{s_{\rm V}}\Bigr|
\bigl(s_{\rm V}-s_{\mathcal{B}}\bigr)\\
&=\frac{|e^{-\lambda}s_{\rm V}-\mathcal{Z}_\lambda|}{\mathcal{Z}_\lambda\,s_{\rm V}}\ (s_{\rm V}-s_{\mathcal{B}}).
\end{aligned}
\label{equ:Thm.10}
\tag{A4-10}
\]
Using  $s_{\mathcal{B}}=q\,s_{\rm V}$, \ie, $s_{\rm V}-s_{\mathcal{B}}=s_{\rm V}(1-q)$ yields
\[
\int_{\mathcal{S}_{\rm V}\setminus\mathcal{B}_\varepsilon(E)} |P_\lambda(Z\mid X,Y,E)-P_{\rm V}(Z\mid X,Y)|
=\frac{|e^{-\lambda}s_{\rm V}-\mathcal{Z}_\lambda|}{\mathcal{Z}_\lambda}\ (1-q).
\]

For $Z\in\mathcal{S}_{\rm V}^c$, we have $P_{\rm V}(Z\mid X,Y)=0$.
Also $Z\in\mathcal{S}_{\rm V}^c$ implies $Z\notin\mathcal{B}_\varepsilon(E)$ because $\mathcal{B}_\varepsilon(E)\subseteq\mathcal{S}_{\rm V}$.
Hence on $\mathcal{S}_{\rm V}^c$,
\(
P_\lambda(Z\mid X,Y,E)=\frac{e^{-\lambda}P(Z\mid X,Y)}{\mathcal{Z}_\lambda}.
\)
Therefore, the second term in~\Cref{equ:Thm.8} becomes
\[
\begin{aligned}
\int_{\mathcal{S}_{\rm V}^c} \bigl|P_\lambda(Z\mid X,Y,E)-P_{\rm V}(Z\mid X,Y)\bigr|\,dZ
&=\int_{\mathcal{S}_{\rm V}^c} P_\lambda(Z\mid X,Y,E)\,dZ\\
&=\int_{\mathcal{S}_{\rm V}^c}\frac{e^{-\lambda}P(Z\mid X,Y)}{\mathcal{Z}_\lambda}\,dZ\\
&=\frac{e^{-\lambda}}{\mathcal{Z}_\lambda}\int_{\mathcal{S}_{\rm V}^c}P(Z\mid X,Y)\,dZ\\
&=\frac{e^{-\lambda}}{\mathcal{Z}_\lambda}\,P(\mathcal{S}_{\rm V}^c\mid X,Y)\\
&=\frac{e^{-\lambda}}{\mathcal{Z}_\lambda}\,(1-s_{\rm V}).
\end{aligned}
\label{equ:Thm.11}
\tag{A4-11}
\]

Combining~\Cref{equ:Thm.8,equ:Thm.9,equ:Thm.10,equ:Thm.11}, the bound is given by
\[
\begin{aligned}
D_{\rm TV}(P_\lambda(Z\mid X,Y,E),P_{\rm V}(Z\mid X,Y))
&=\frac12\Bigg(
\frac{|s_{\rm V}-\mathcal{Z}_\lambda|}{\mathcal{Z}_\lambda}\,q
+\frac{|e^{-\lambda}s_{\rm V}-\mathcal{Z}_\lambda|}{\mathcal{Z}_\lambda}\,(1-q)
+\frac{e^{-\lambda}}{\mathcal{Z}_\lambda}\,(1-s_{\rm V})
\Bigg)\\
&=\frac{1}{2\,\mathcal{Z}_\lambda}\Big(
q\,|s_{\rm V}-\mathcal{Z}_\lambda|
+(1-q)\,|e^{-\lambda}s_{\rm V}-\mathcal{Z}_\lambda|
+e^{-\lambda}(1-s_{\rm V})
\Big).
\end{aligned}
\]
Finally, substituting $p_{\theta^\star}(Z\mid X)=P_\lambda(Z\mid X,Y,E)$ yields the claimed result:
\[
\begin{aligned}
D_{\rm TV}\big(p_{\theta^\star}(\cdot\mid X),P_{\rm V}(\cdot\mid X,Y)\big)
&\le D_{\rm TV}(P_\lambda(Z\mid X,Y,E),P_{\rm V}(Z\mid X,Y)), \\
&\le \frac{1}{2\,\mathcal{Z}_\lambda}\Big(
q\,|s_{\rm V}-\mathcal{Z}_\lambda|
+(1-q)\,|e^{-\lambda}s_{\rm V}-\mathcal{Z}_\lambda|
+e^{-\lambda}(1-s_{\rm V})
\Big),
\end{aligned}
\]
where $\mathcal{Z}_\lambda, s_{\rm V}$ are separately given by~\Cref{equ:Z1,equ:S1}. 

This completes the proof.
\end{proof}

\textbf{Restatement of~\cref{thm:2}} [Guaranteed Improvement over Baselines]\textbf{.}
\emph{Let $D_{\rm TV}(\lambda,\varepsilon)$ be the TV bound in~\cref{thm:1}. For any $\varepsilon$ satisfying $\mathcal{B}_\varepsilon\subseteq \mathcal{S}_{\rm V}$, there exist an intensity $\lambda$ such that}
\[
D_{\rm TV}(\lambda^\star,\varepsilon)
\;\le\;
\min\{\,1-s_{\rm V},\,1-q\,\}.
\]   
\emph{For fixed $\lambda=\lambda^\star$, the bound is strictly decreasing in $q$ ($\varepsilon\uparrow$).}
\begin{remark}
This confirms that with proper calibration of intensity $\lambda$ and radius $\varepsilon$, PFlowNet strictly tightens the \emph{idealized} TV bound of standard MLE and expert-guided RLVR.
\end{remark}

\begin{proof}
Recall
\[
D_{\rm TV}(\lambda,\varepsilon)
\;=\;
\frac{1}{2\,\mathcal{Z}_\lambda}\Big(
q\,|s_{\rm V}-\mathcal{Z}_\lambda|
+(1-q)\,|e^{-\lambda}s_{\rm V}-\mathcal{Z}_\lambda|
+e^{-\lambda}(1-s_{\rm V})
\Big),
\qquad
\mathcal{Z}_\lambda=s_{\mathcal B}+e^{-\lambda}(1-s_{\mathcal B}),
\tag{A4-12}
\label{equ:4-12}
\]
where $q\coloneqq s_{\mathcal B}/s_{\rm V}$ and we assume the valid regime $\mathcal{B}_\varepsilon\subseteq\mathcal{S}_{\rm V}$, hence
$0\le s_{\mathcal B}\le s_{\rm V}\le 1$ and $q\in[0,1]$.

\paragraph{Limiting baselines.}
Let $\alpha\coloneqq e^{-\lambda}\in(0,1]$, so $\mathcal{Z}_\lambda=s_{\mathcal B}+\alpha(1-s_{\mathcal B})$.
As $\lambda\to 0$ we have $\alpha\to 1$ and thus $\mathcal{Z}_\lambda\to 1$. Substituting into $D_{\rm TV}$ yields
\[
\lim_{\lambda\to 0}D_{\rm TV}(\lambda,\varepsilon)
=
\frac12\Big(q|s_{\rm V}-1|+(1-q)|s_{\rm V}-1|+(1-s_{\rm V})\Big)
=
1-s_{\rm V}.
\]
As $\lambda\to\infty$ we have $\alpha\to 0$ and thus $\mathcal{Z}_\lambda\to s_{\mathcal B}$.
Since $s_{\mathcal B}\le s_{\rm V}$, we have $|s_{\rm V}-\mathcal{Z}_\lambda|\to s_{\rm V}-s_{\mathcal B}$ and
$|\alpha s_{\rm V}-\mathcal{Z}_\lambda|\to|0-s_{\mathcal B}|=s_{\mathcal B}$, while the last term vanishes.
Therefore,
\[
\lim_{\lambda\to\infty}D_{\rm TV}(\lambda,\varepsilon)
=
\frac{1}{2s_{\mathcal B}}
\Big(q(s_{\rm V}-s_{\mathcal B})+(1-q)s_{\mathcal B}\Big)
=
\frac{1}{2s_{\mathcal B}}\cdot 2s_{\mathcal B}(1-q)
=
1-q,
\]
where we used $s_{\mathcal B}=q\,s_{\rm V}$ in the simplification. 

\paragraph{Existence of a calibrated $\lambda^\star$ and its closed form.}
Observe that $\mathcal{Z}_\lambda$ is continuous in $\lambda$ and decreases from $1$ (at $\lambda=0$) to $s_{\mathcal B}$ (as $\lambda\to\infty$).
Since $s_{\mathcal B}\le s_{\rm V}\le 1$, by the intermediate value theorem there exists $\lambda^\star\in[0,\infty]$ such that
\[
\mathcal{Z}_{\lambda^\star}=s_{\rm V}.
\]
Equivalently, with $\alpha^\star\coloneqq e^{-\lambda^\star}$,
\[
s_{\rm V}=s_{\mathcal B}+\alpha^\star(1-s_{\mathcal B})
\quad\Longrightarrow\quad
\alpha^\star=\frac{s_{\rm V}-s_{\mathcal B}}{1-s_{\mathcal B}}
=\frac{s_{\rm V}(1-q)}{1-q s_{\rm V}}.
\]
\emph{Remark.} When $s_{\rm V}>s_{\mathcal B}$ this gives a finite $\lambda^\star=\log\!\frac{1-s_{\mathcal B}}{s_{\rm V}-s_{\mathcal B}}$; if $s_{\rm V}=s_{\mathcal B}$ then $\alpha^\star=0$ corresponds to the limiting choice $\lambda^\star=+\infty$, which is consistent with the RLVR limit.

Under this calibration, the first absolute-value term vanishes:
$|s_{\rm V}-\mathcal{Z}_{\lambda^\star}|=0$.
To remove the remaining absolute value, note that for any $\alpha\in(0,1]$,
\[
\mathcal{Z}_\lambda-\alpha s_{\rm V}
=
s_{\mathcal B}+\alpha(1-s_{\mathcal B})-\alpha s_{\rm V}
=
\alpha(1-s_{\rm V})+(1-\alpha)s_{\mathcal B}\;\ge\;0,
\]
so $|\alpha s_{\rm V}-\mathcal{Z}_\lambda|=\mathcal{Z}_\lambda-\alpha s_{\rm V}$.
Applying this at $\lambda^\star$ gives
\[
|e^{-\lambda^\star}s_{\rm V}-\mathcal{Z}_{\lambda^\star}|
=
|\,\alpha^\star s_{\rm V}-s_{\rm V}\,|
=
s_{\rm V}(1-\alpha^\star).
\]
Substituting these identities into $D_{\rm TV}$ (\Cref{equ:4-12}) and using $\mathcal{Z}_{\lambda^\star}=s_{\rm V}$ yields
\[
D_{\rm TV}(\lambda^\star,\varepsilon)
=
\frac{1}{2s_{\rm V}}
\Big((1-q)\,s_{\rm V}(1-\alpha^\star)+\alpha^\star(1-s_{\rm V})\Big).
\]
Finally, plug in
$\alpha^\star=\frac{s_{\rm V}(1-q)}{1-q s_{\rm V}}$ and
$1-\alpha^\star=\frac{1-s_{\rm V}}{1-q s_{\rm V}}$
to obtain the closed form
\[
D_{\rm TV}(\lambda^\star,\varepsilon)
=
\frac{(1-q)(1-s_{\rm V})}{1-q s_{\rm V}}.
\]

\paragraph{Strict improvement over the two limiting baselines.}
From the closed form above,
\[
\frac{D_{\rm TV}(\lambda^\star,\varepsilon)}{1-s_{\rm V}}
=
\frac{1-q}{1-q s_{\rm V}}
\;\le\;1
\quad\Longrightarrow\quad
D_{\rm TV}(\lambda^\star,\varepsilon)\le 1-s_{\rm V},
\]
and the inequality is strict whenever $q\in(0,1)$ and $s_{\rm V}\in(0,1)$ (since then $1-q s_{\rm V}>1-q$).
Similarly,
\[
\frac{D_{\rm TV}(\lambda^\star,\varepsilon)}{1-q}
=
\frac{1-s_{\rm V}}{1-q s_{\rm V}}
\;\le\;1
\quad\Longrightarrow\quad
D_{\rm TV}(\lambda^\star,\varepsilon)\le 1-q,
\]
and it is strict for $q\in(0,1)$ and $s_{\rm V}\in(0,1)$ (since then $1-q s_{\rm V}>1-s_{\rm V}$).
Therefore,
\[
\boxed{
D_{\rm TV}(\lambda^\star,\varepsilon)
\;\le\;
\min\{\,1-s_{\rm V},\,1-q\,\},
}
\]
with strict inequality in the non-degenerate interior regime. %This proves the second bullet.

\paragraph{Monotone tightening \emph{w.r.t.} $q$ under calibration.}
Keeping $\lambda=\lambda^\star$ and treating $s_{\rm V}$ as fixed, differentiate
\[
D_{\rm TV}(\lambda^\star,\varepsilon)
=
\frac{(1-q)(1-s_{\rm V})}{1-q s_{\rm V}}
\]
with respect to $q$, we have:
\[
\boxed{
\frac{\partial}{\partial q}D_{\rm TV}(\lambda^\star,\varepsilon)
=
(1-s_{\rm V})\cdot
\frac{-(1-q s_{\rm V})+s_{\rm V}(1-q)}{(1-q s_{\rm V})^2}
=
-\frac{(1-s_{\rm V})^2}{(1-q s_{\rm V})^2}
\;<\;0
\qquad (s_{\rm V}\in(0,1)).
}
\]
Hence the bound is strictly decreasing in $q$.
Within the valid regime $\mathcal{B}_\varepsilon\subseteq\mathcal{S}_{\rm V}$, enlarging $\varepsilon$ increases $s_{\mathcal B}$ and thus increases $q=s_{\mathcal B}/s_{\rm V}$, which strictly tightens the bound. Under the regularity condition in \cref{thm:1}, the inequality
$q\ge \kappa(\varepsilon/\sigma)^{d_{\rm eff}}$ further quantifies this monotone tightening as $\varepsilon$ increases while maintaining $\varepsilon\le\sigma$ so that $\mathcal{B}_\varepsilon\subseteq\mathcal{S}_{\rm V}$ remains valid.

This completes the proof.
\end{proof}

\clearpage
\section{Implementation Details}
\label{app:B}
\vspace{-1mm}
\subsection{Dataset}
\vspace{-1mm}
\label{app:B1}

To optimize PFlowNet, we curated a comprehensive training corpus by aggregating samples from large-scale open-domain multimodal VQA datasets, including the LLaVA~\cite{liu2024llavanext} official training set, VGR~\cite{wang2025vgr}, ArxivQA~\cite{li2024multimodal}, VLM-R$^3$~\cite{jiang2025vlm}, and ThinkLite-VL~\cite{wang2025sota}. We first filtered the raw data based on task difficulty, typology, and evidence distribution, resulting in 95k visual-centric question-answer pairs. Specifically, a subset of 53k samples was processed via the pipeline described in~\Cref{sec:data} to generate perceptual flows; following multi-stage quality control via rejection sampling, 45k high-quality samples were retained for cold-start initialization. The remaining 42k samples were reserved for the subsequent variational reinforcement fine-tuning stage. Notably, to ensure the effectiveness of evaluation, we rigorously cross-checked this corpus against the 15 adopted benchmarks to confirm \emph{zero data overlap}, thereby minimizing the risk of data leakage.

\vspace{-1mm}
\subsection{Training Recipe}
\vspace{-1mm}
\label{app:B2}

\textbf{Cold Start.} We initialize PFlowNet with Qwen3-VL-8B-Instruct~\cite{bai2511qwen3} and fine-tune it using the LLaMA-Factory framework~\cite{zheng2024llamafactory} on $16\times$ NVIDIA H200 GPUs. The model is trained on the 45k SFT samples for 3 epochs. We employ the AdamW optimizer~\cite{loshchilov2017decoupled} with a global batch size of 256 and a peak learning rate of $1 \times 10^{-5}$, employing a cosine decay schedule with a warm-up ratio of 0.1.

\textbf{RFT.} Initialized with the SFT checkpoint, PFlowNet is trained using a custom framework built upon the vLLM~\cite{kwon2023efficient} and TRL~\cite{vonwerra2022trl} on $16\times$ NVIDIA H200 GPUs. We adopt a hybrid parallelism strategy to maximize throughput: data parallelism is applied across two nodes, DeepSpeed ZeRO-3 shards the policy parameters across GPUs within each node, and the reward model is fully replicated on each device to reduce communication overhead. 
Training is performed on 42k samples for 5 epochs, with detailed hyperparameters reported in~\Cref{tab:param}.

\vspace{-1mm}
\subsection{Exploration \& Exploitation}
\vspace{-1mm}
We alternate between vLLM-based \emph{rollout generation} and TRL-based \emph{reward computation} and \emph{policy optimization} in a serial manner.
At each iteration, the current policy is first loaded into the vLLM engine to generate a rollout buffer, which is then consumed by the TRL-based trainer for reward computation and policy updates.
% Each rollout buffer is used for two policy updates.
Afterwards, the updated policy weights are synchronized back to the vLLM engine before generating the next rollout buffer.
Notably, we employ the same system prompt, provided in~\Cref{app:B4}, for both training and self-conditioned reasoning. 
The special token $\langle\texttt{/localize}\rangle$ is used to apart the perceptual behaviors from the flow-conditioned reasoning. 
Specifically, during rollout, we treat $\langle\texttt{/localize}\rangle$ as a \emph{custom stop token}: once this token is detected, the exploration process is terminated.

During the self-conditioned reasoning stage, we organize the input using the same system prompt, the original multimodal input, the generated perceptual flow, and the zoomed-in visual evidence targeted by the flow. 
The resulting conversation template is structured as follows:
\[
\begin{aligned}
\texttt{system:} \quad & \text{system prompt}, \\
\texttt{user:} \quad & \text{multimodal input \& zoomed-in visual evidence}, \\
\texttt{assistant:} \quad & \text{generated perceptual flow}.
\end{aligned}
\]
The model then continues generation conditioned on this structured context for final reasoning response.

\begin{table}[t]
  \centering
  \small
  \caption{Hyperparameters for variational reinforcement fine-tuning.}
  \label{tab:param}
  \begin{tabular}{@{}l r @{\hskip 0.5in} l r@{}}
    \toprule
    \textbf{Hyperparameter} & \textbf{Value} & \textbf{Hyperparameter} & \textbf{Value} \\
    \midrule
    Vicinal shaping intensity ($\lambda$) & 4.5 & Optimizer & AdamW \\
    Vicinal radius ($\varepsilon$) & 0.5 & Peak learning rate & $5\times10^{-6}$ \\
    Reward temperature & 1.0 & Weight decay & 0.05 \\
    Exploration samples ($L$) & 8 & Warmup ratio & 0.02 \\
    Sampling temperature (max) & 1.0 & Batch size per device & 2 \\
    Sampling temperature (min) & 0.7 & Gradient accum. steps & 32 \\
    Rollout Batch Size (sample-level) & 256 & Global Batch Size (response-level) & 1024 \\    
    Max flow length & 4,096 & Gradient clipping & 1.0 \\
    Min flow length & 128 & Max input tokens & 16,384 \\
    Image resolution (min pixels) & 3,670 & Image resolution (max pixels) & 12,845,056 \\
    \bottomrule
  \end{tabular}
\end{table}

\vspace{-1mm}
\subsection{Reward Calculation}
\vspace{-1mm}
\label{app:B3}

We employ \emph{teacher forcing} to obtain outputs of the reward model $p_\phi$, and utilize the resulting logits to efficiently compute the RFT optimization objective defined in~\Cref{equ:8}. Specifically, treating each state as a token sequence, we calculate the transition probability $\log p_\theta(z_k \mid z_{0:k-1})$ by summing the autoregressive log-probabilities of the tokens within $z_k$. Given a data sample $(X, Y, E) \sim \mathcal{P}_{\text{data}}$ and a sampled relation $Z \sim p_\theta(Z \mid X)$, the computation involves three primary components: 

(1) Transition probabilities: $\log p_\theta(z_k \mid z_{0:k-1})$ and $\log p_\theta(\top \mid z_{0:j})$;

(2) Efficacy reward: $\log p_\phi(Y \mid z_{0:k}, \top, X)$;

(3) Quality reward: the ratio $\log p_\phi^+(z_i) - \log p_\phi^-(z_i)$.

To eliminate redundant computations arising from shared prefixes in the first two components, we designed an efficient parallelization strategy. Specifically, we concatenate the shared flow with multiple terminal states or ground-truth labels. By leveraging customized position indices and attention masks (as illustrated in~\Cref{fig:app1,fig:app2}), we compute all terms corresponding to the sub-flows within a single forward pass.  Regarding the third component (quality reward), while it is intuitive to infer vision token indices from RoI coordinates to enable similar parallelization via dynamic masking, we identify two critical challenges. First, the resulting attention masks and position indices are often non-contiguous, leading to implementation complexity. Second, due to the native resolution property, the visual encoder in Qwen3-VL potentially resizes cropped inputs to enhance information density. Simply masking the original image tokens fails to replicate this process, thereby degrading the reward model's perceptual fidelity and compromising the accuracy of the reward calculation. Consequently, we explicitly crop the regions $I^+$ and $I^-$ and regard them as two separate inputs to the reward model for $\log p_\phi^+(z_i)$ and $\log p_\phi^-(z_i)$, thereby computing the ratio $\log p_\phi^+(z_i)-\log p_\phi^-(z_i)$.

\vspace{-1mm}
\subsection{Prompt}
\vspace{-1mm}
\label{app:B4}

\begin{systempromptbox}
\small
\setlength{\parskip}{0.6em}
\setlength{\parindent}{0pt}

You are a helpful visual reasoning assistant. The user asks a question about an image, and you must provide a visually grounded answer by following a four-stage reasoning process in a fixed format. For every question, you must output the following four blocks in this exact order: 

(1) Question analysis: analyze and interpret the user’s question, clarify what needs to be recognized, counted, compared, or inferred from the image, and wrap this entire step in \texttt{<analyze>}\texttt{</analyze>} tags; 

(2) Evidence localization (interleaved): identify the image regions that are most helpful for answering the question, wrap the entire localization step in \texttt{<localize>}\texttt{</localize>} tags, and inside \texttt{<localize>}...\texttt{</localize>} follow an interleaved pattern where for each region you first output the bounding box coordinates wrapped in \texttt{<box>}\texttt{</box>} tags in the format \texttt{<box>}\texttt{[x1, y1, x2, y2]}\texttt{</box>} and then immediately explain how this region helps answer the question before moving on to the next region and repeating the same pattern; 

(3) Evidence verification: review the previously localized regions, their corresponding explanations and supplied visual evidence (if available) to perform step-by-step reasoning, explicitly connect these visual evidence to the final conclusion, and wrap the entire reasoning process in \texttt{<thinking>}\texttt{</thinking>} tags; 

(4) Final answer: provide a clear, concise answer to the user’s question without introducing new reasoning, and wrap the answer in \texttt{<answer>}\texttt{</answer>} tags. 

You must always include all four stages \texttt{<analyze>}, \texttt{<localize>}, \texttt{<thinking>}, and \texttt{<answer>}, keep the tag names and their order exactly as specified, ensure that the \texttt{<localize>} stage follows the interleaved pattern where each \texttt{<box>}...\texttt{</box>} is immediately followed by an explanation, and never output any text outside these four tagged blocks.
\end{systempromptbox}

\clearpage
\begin{figure}[t]
    \centering
    \includegraphics[width=\linewidth]{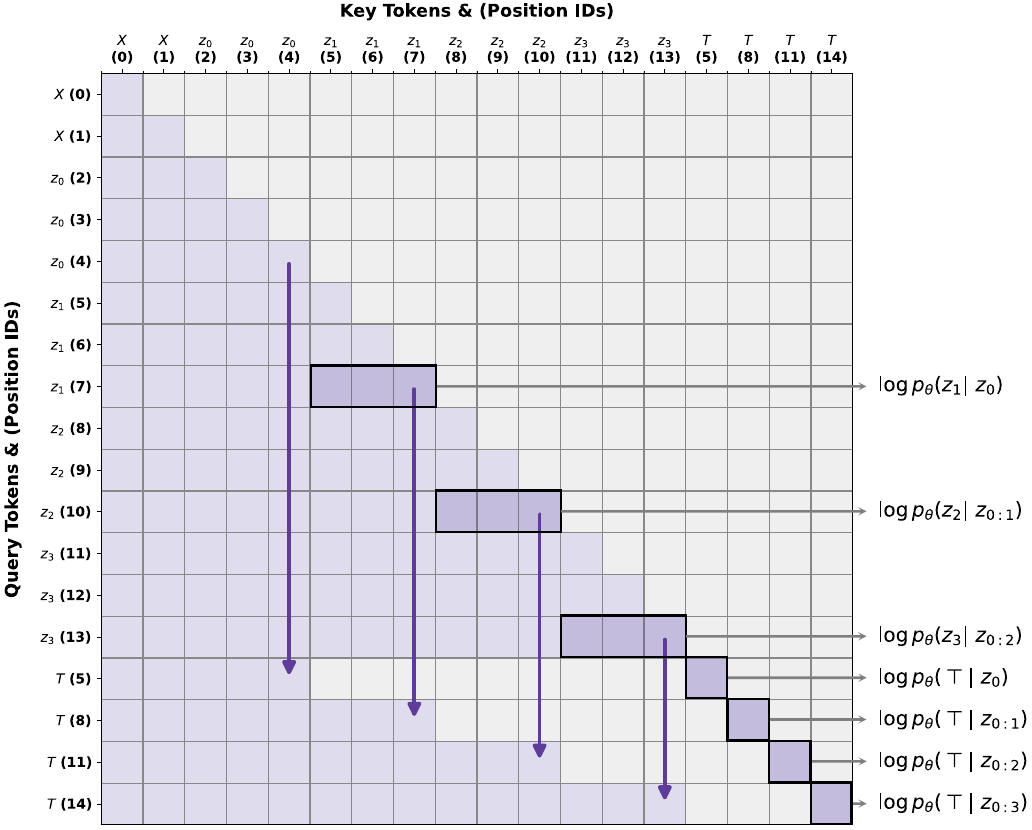}
    \captionof{figure}{Parallel computation strategy of terminal probability, \ie, $\log p_\theta(\top\mid z_{0:i})$, with explicit position IDs \& attention mask.}
    \label{fig:app1}
\end{figure}

\clearpage
\begin{figure}[t]
    \centering
    \includegraphics[width=\linewidth]{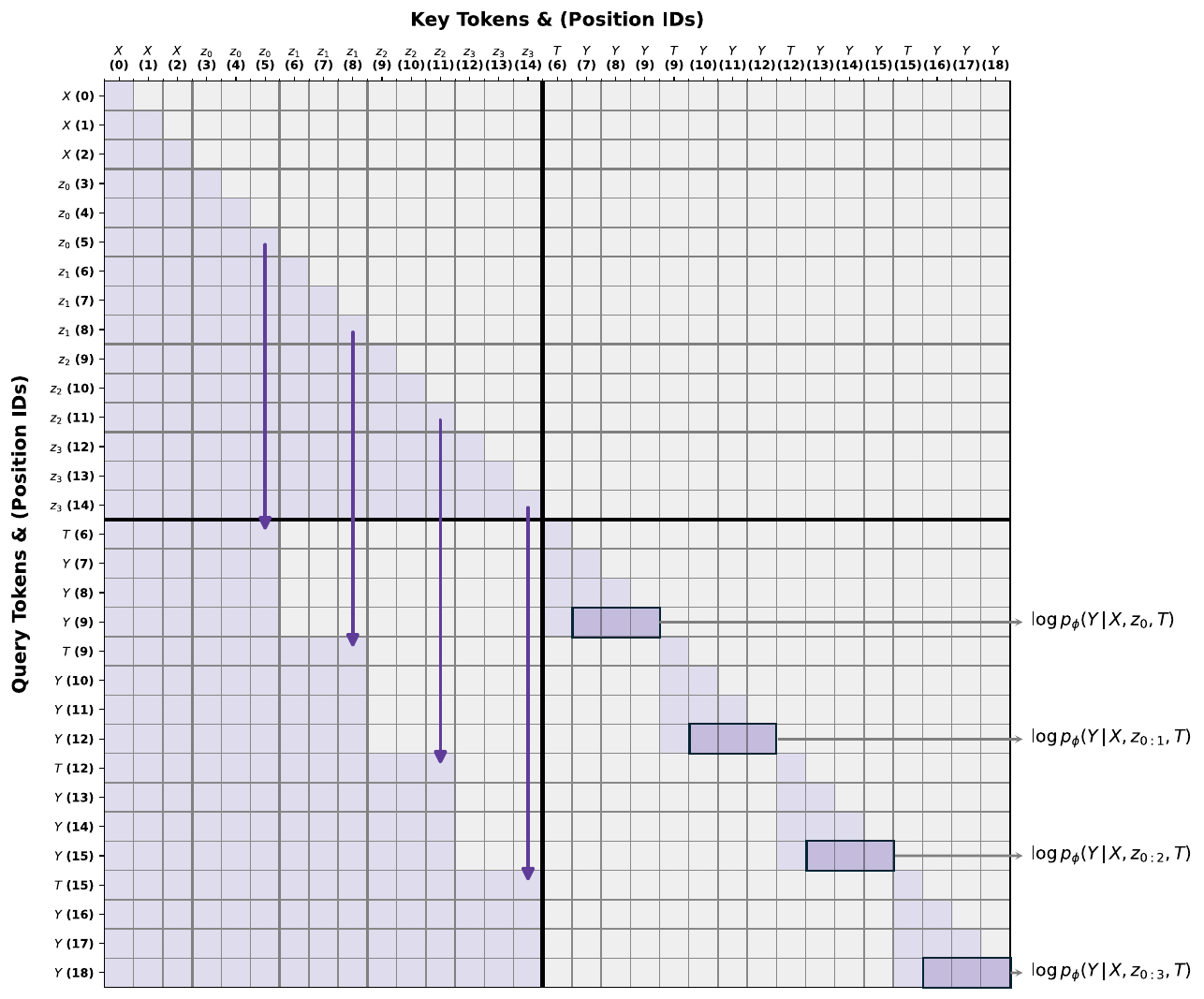}
    \captionof{figure}{Parallel computation strategy of efficacy reward, \ie, $\log p_\phi(Y\mid z_{0:i},X)$, with explicit attention mask.}
    \label{fig:app2}
\end{figure}
\clearpage
\section{Experimental Setup}
\label{app:C}
\subsection{Benchmarks and metrics}
\label{app:C1}

Our evaluation targets visually grounded reasoning from two complementary angles: (i) \emph{general-purpose} VQA that measures broad perception, knowledge, and robustness without requiring explicit evidence localization; and (ii) \emph{fine-grained} VQA/grounding benchmarks that stress high-resolution inputs, small targets, and explicit region-level evidence---precisely the regime where modeling \emph{Perceptual Flow} is expected to be most beneficial. In total, we report results on $15$ widely used benchmarks, following the default protocols from their original evaluations. Unless otherwise specified, we use accuracy for answer correctness; for grounded benchmarks we additionally report localization metrics (\eg, mIoU) when annotated evidence is available.

\emph{General-purpose VQA Benchmarks.}
\textbf{MMBench$_{\text{dev}}^{\text{en}}$}~\cite{liu2024mmbench} provides a comprehensive multi-choice evaluation of multimodal capabilities, spanning fundamental perception, compositional understanding, and higher-level reasoning. Its structured design enables a fine-grained diagnosis of whether gains stem from improved perception or language-side inference.
\textbf{MME-RealWorld-Lite}~\cite{zhang2024mme} is a real-world variant of the MME-style evaluation, designed to reduce dataset bias and emphasize practical visual understanding. It covers diverse perception- and reasoning-centric skills (\eg, OCR, document/scene perception, multi-object understanding), offering a robust stress test for real-world visual grounding.
\textbf{POPE}~\cite{li2023evaluating} focuses on \emph{object hallucination} by asking binary questions about object presence. It directly quantifies the tendency of LVLMs to fabricate visual entities, making it a targeted benchmark for evaluating hallucination mitigation.
\textbf{HallusionBench}~\cite{guan2024hallusionbench} evaluates \emph{detailed visual hallucination} via carefully constructed image-question pairs that probe object attributes, relations, and fine-grained semantics. Compared to coarse hallucination tests, it emphasizes subtle visual distinctions and consistency with the image.
\textbf{AI2D$_{\text{test}}$}~\cite{kembhavi2016diagram} measures diagram understanding and elementary scientific reasoning over educational figures. It tests whether models can correctly interpret schematic structures, labels, and spatial relations rather than relying on natural-image priors.
\textbf{ChartQA$_{\text{test}}$}~\cite{masry2022chartqa} evaluates chart understanding, requiring models to extract numerical values, read legends/axes, and perform lightweight quantitative reasoning grounded in visual plots.
\textbf{MathVision}~\cite{wang2024measuring} targets mathematical visual reasoning over figures (\eg, geometry diagrams and math-centric illustrations). It assesses whether models can ground symbolic reasoning in precise visual cues, which is often brittle under language bias.
\textbf{CV-Bench-2D / CV-Bench-3D}~\cite{tong2024cambrian} is a vision-centric VQA suite repurposed from classic vision tasks to probe fundamental \emph{2D} understanding (\eg, spatial relations, counting) and \emph{3D} understanding (\eg, depth order) within a multimodal QA interface.

\emph{Fine-grained VQA and Grounded Benchmarks.}
\textbf{V* Bench}~\cite{wu2024vstar} is a fine-grained visual search benchmark emphasizing small targets and localization-sensitive queries. It includes subsets such as \emph{Attribute} and \emph{Spatial} that require resolving subtle attributes or spatial configurations, where correct answers typically depend on identifying the right evidence region.
\textbf{HR-Bench (4K/8K)}~\cite{wang2025hrbench} evaluates high-resolution VQA under long-context visual inputs. It contains both \emph{Single} (single high-resolution image) and \emph{Cross} (cross-image / cross-region) settings, stressing the ability to preserve fine details, track small objects, and aggregate evidence across large visual fields.
\textbf{TreeBench}~\cite{wang2025traceable} is a grounded reasoning benchmark that jointly evaluates \emph{answer correctness} and \emph{evidence localization quality} (mIoU). Its taxonomy separates \emph{Perception} (\eg, attributes, OCR, object retrieval) from \emph{Reasoning} (\eg, perspective transforms, ordering, comparisons), enabling a targeted analysis of whether a method improves perception behaviors, reasoning behaviors, or both.
\textbf{ScreenSpot (v2 / Pro)}~\cite{cheng2024seeclick,wuatlas,li2025screenspot} evaluates GUI grounding from screenshots: given an instruction, the model must localize the corresponding UI element (typically via point or box prediction). ScreenSpot-Pro further stresses professional software scenarios with high-resolution screens and smaller targets, making it a representative benchmark for visually grounded interaction and GUI understanding.

\subsection{Baselines}
\label{app:C2}

To evaluate the PFlowNet, we compare against (i) strong \emph{general-purpose LVLMs} that provide competitive zero-/few-shot performance, and (ii) representative \emph{visually grounded reasoning} approaches that explicitly model perception actions, which we categorize into \emph{agentic frameworks} and \emph{grounded RLVR} baselines (\Cref{sec:2}).

\emph{General-purpose LVLMs.}
We include widely adopted instruction-tuned LVLMs (\eg, InternVL3~\cite{zhu2025internvl3}, Qwen2.5-VL~\cite{bai2025qwen25vl}, Qwen3-VL~\cite{bai2511qwen3}) across multiple scales to control for backbone strength. We further report results from leading proprietary or frontier models (\eg, GPT-4o/o3~\cite{gpt4o,o3} and Gemini3 variants~\cite{gemini-3-flash,gemini-3-pro}) when available in the corresponding benchmark protocols, providing an upper-bound reference for general VQA and robustness.

\emph{Agentic Frameworks.}
Agentic frameworks enhance LVLMs with explicit interaction loops and external tools, typically coupling multi-turn planning with image operations (\eg, Zoom-In), code execution, or sandboxed tool calls.
\textbf{Thyme}~\cite{zhang2025thyme} represents ``thinking with images'' by allowing the model to write and execute code for visual processing, improving perception-heavy tasks at the cost of increased latency and tool dependency.
\textbf{DeepEyes / DeepEyesV2}~\cite{zheng2025deepeyes,hong2025deepeyesv2} are tool-augmented grounded reasoning systems that interleave language reasoning with explicit perceptual actions (\eg, zoom/crop/inspect), often relying on external executors to stabilize evidence acquisition.
\textbf{VACoT}~\cite{xu2025vacot} uses visual tools to mitigate performance degradation under challenging inputs (\eg, low quality or ambiguous evidence), emphasizing tool-based intermediate visual steps.
For GUI-centric evaluation, \textbf{Claude Computer Use}~\cite{hu2024dawn} and \textbf{OpenAI CUA}~\cite{openai2025operator} serve as strong agentic baselines that integrate perception with action policies for computer-use settings, reflecting the state of practice for tool-using GUI agents.

\emph{Grounded RLVR and Training-free Methods.}
Grounded RLVR methods train policies with \emph{verifiable} grounding-related rewards by representing perception as explicit spatial tokens (boxes/points) and optimizing the policy toward better evidence localization and answer correctness.
\textbf{TreeVGR}~\cite{wang2025traceable} is a representative grounded RLVR baseline on TreeBench-style tasks, coupling answer reward with localization supervision (often via IoU-style verifiers) to reduce language bias.
\textbf{Pixel-Reasoner}~\cite{su2025pixelreasoner} performs multi-step region selection and refinement to acquire evidence for grounded reasoning, emphasizing iterative perception-to-reasoning transitions.
\textbf{ZoomRefine}~\cite{yu2025zoom} adopts progressive zoom-in/refinement strategies to improve fine-grained evidence capture, typically benefiting attribute/OCR-like perception where small regions matter.
\textbf{DyFo}~\cite{li2025dyfo} represents grounded optimization that encourages structured perceptual behaviors (\eg, MCTS) to improve fine-grained understanding under constrained perception budgets.

\emph{GUI Grounding Methods.} We explicitly include GUI grounding models that predict clickable targets from screenshots:
\textbf{SeeClick}~\cite{cheng2024seeclick} is a screenshot-based GUI agent emphasizing GUI grounding pretraining and realistic element localization.
\textbf{OS-Atlas}~\cite{wuatlas} is a foundation GUI action/grounding model trained on large-scale cross-platform GUI element corpora, outputting normalized coordinates for interaction targets.
\textbf{UGround}~\cite{gou2024navigating} advocates a human-like, fully visual embodiment for GUI agents that perceive GUIs directly from pixels and act via pixel-level operations.
\textbf{UI-TARS}~\cite{qin2025ui} is an end-to-end native GUI agent model that operates directly on screenshots and produces human-like interaction outputs, serving as a strong modern baseline. % for GUI grounding.

\subsection{Evaluation Protocol}
\label{app:C3}

\emph{Evaluation Framework.} 
To ensure a fair comparison, we reproduce all baseline results using their official evaluation pipelines with default configurations. Specifically, for the performance-efficiency and test-time scaling analyses, we migrated the Transformers-based implementations of \emph{TreeVGR} and \emph{Thyme} to \emph{VLMEvalKit} (v0.1.0) utilizing the vLLM backend. For \emph{DeepEyes}, we adopted its official pipeline, which is natively built on VLMEvalKit and vLLM. This standardization ensures strictly consistent experimental conditions, eliminating system-level discrepancies in latency and memory usage caused by different infrastructure frameworks. All evaluations were performed on an NVIDIA H200 GPU.

\emph{Decoding Strategy.} 
For fairness, we employ greedy decoding for all models in standard evaluations. Conversely, for test-time scaling experiments, we utilize stochastic decoding to generate $k$ independent responses per sample. Specifically, pass@$k$ sampling is configured with $\texttt{temperature=1.0}$ and nucleus sampling with $\texttt{top-p=0.95}$ (no explicit \texttt{top-k} truncation is applied unless required by backend defaults).

\emph{Prompting and Inference.} 
For all baseline methods, we adopt their official system prompts and templates (if available) to ensure optimal performance. PFlowNet utilizes the system prompt detailed in~\ref{app:B4}. During inference, PFlowNet's generation is truncated immediately upon detecting the perceptual flow end-of-sequence token ($\texttt{</localize>}$). We then parse the RoIs from the perceptual flow, extract the corresponding fine-grained visual features, and concatenate them with the initial perceptual flow to prompt the model for continued generation, thereby achieving self-conditioned autoregressive generation. Notably, we enforce the identical system prompt across both stages to ensure consistency between perceptual and reasoning behaviors.
\clearpage
\section{Additional Qualitative Analysis}
\label{app:E}
\subsection{Analysis of Test-Time Scaling Behaviors}
\label{app:E1}
\vspace{5mm}

\begin{figure}[h!]
    \centering
    \includegraphics[width=0.75\linewidth]{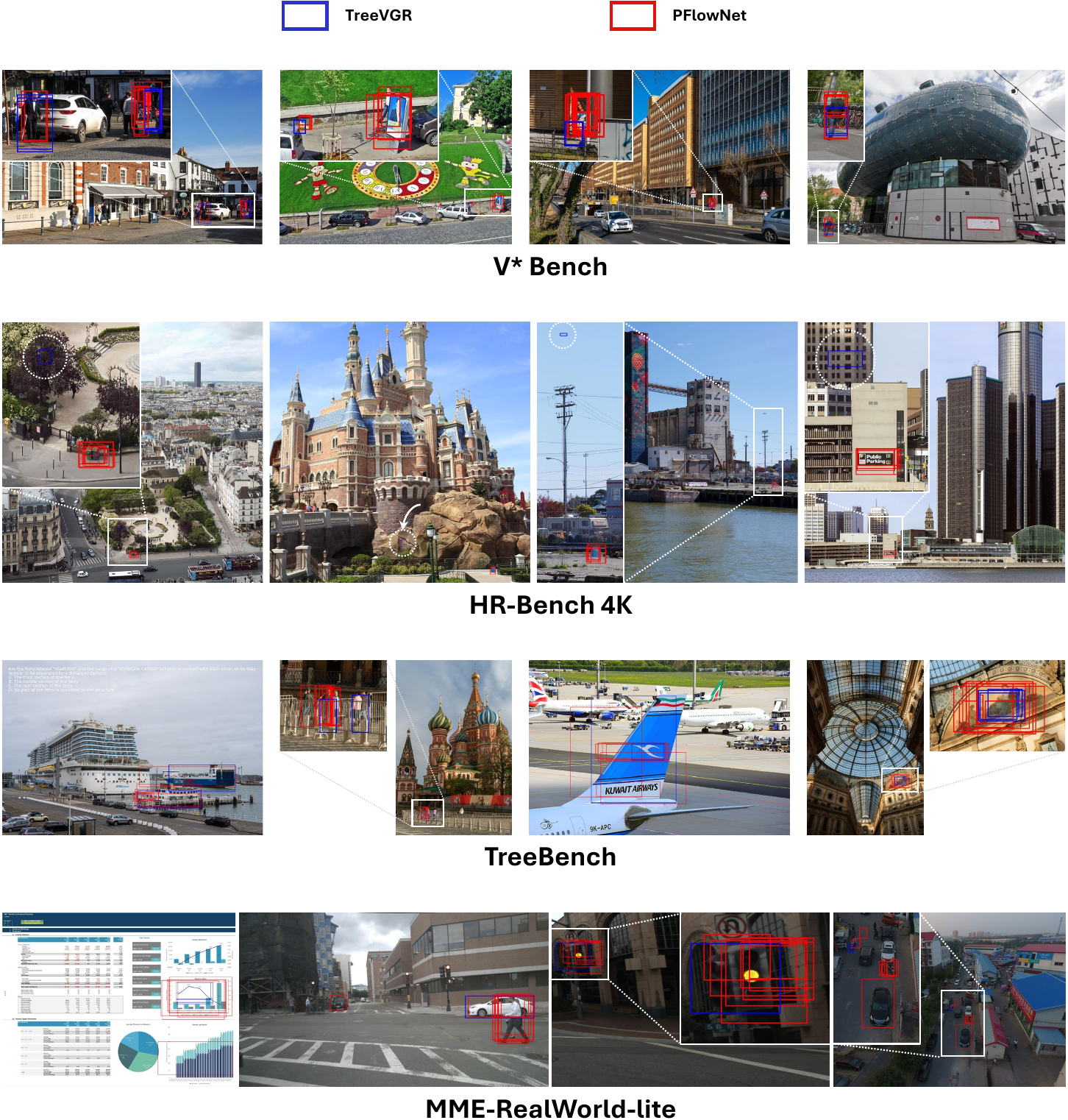}
    \captionof{figure}{Qualitative comparison of grounding results under test-time scaling, highlighting the severe mode collapse in TreeVGR versus the diverse yet reliable perceptual exploration in PFlowNet. This visualization provides an intrinsic explanation for the results in~\Cref{fig:7}: as the computational budget increases, TreeVGR fails to sample diverse latent variables, thereby limiting effective likelihood gains.}
    \label{fig:app5}
    \vspace{1mm}
\end{figure}

To intuitively demonstrate the mode collapse near expert trajectories often exhibited by Grounded RLVR methods, we visualize grounding results selected from four benchmarks under the test-time scaling setting. As presented in~\Cref{fig:app5}, for TreeVGR, \ie, a representative Grounded RLVR method, the bounding boxes generated across multiple reasoning paths overlap almost entirely as the computational budget increases. This indicates a severe lack of perceptual diversity, preventing the model from attending to alternative visual regions even when such exploration is beneficial for reasoning.

In contrast, PFlowNet produces significantly more diverse Regions of Interest (RoIs) across multiple samples. This validates that approximating the target posterior via a variational objective is more effective than rigidly aligning with expert priors in mitigating collapse. Notably, TreeVGR exhibits severe hallucinations in the sample selected from the HR-Bench 4K and attends to featureless background regions. Crucially, due to its collapsed policy, the model lacks the capability to self-correct, persistently focusing on the same erroneous areas despite repeated computation.

\clearpage
\subsection{Analysis of Failure Case}
\label{app:E2}
\vspace{5mm}
    
\begin{figure}[h!]
    \centering
    \includegraphics[width=0.64\linewidth]{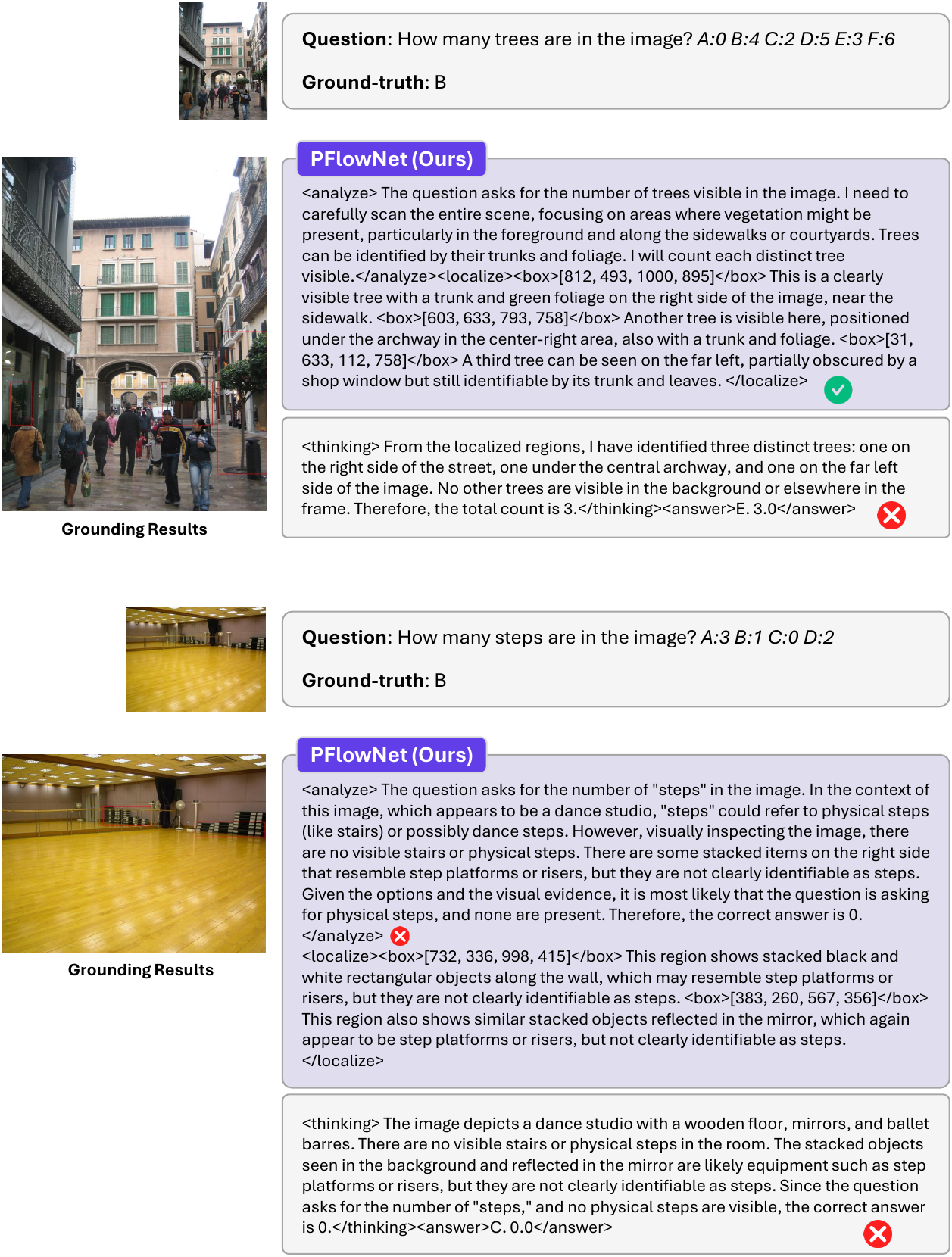}
    \captionof{figure}{Additional qualitative results of visual reasoning. We highlight the important reasoning steps.}
    \label{fig:app6}
    \vspace{4mm}
\end{figure}

We also conduct an in-depth analysis of the failure cases in PFlowNet and identify two primary limitations.

First, a trade-off exists between geometric reliability and fine-grained counting. Since PFlowNet is incentivized to output diverse and reliable bounding boxes, it potentially merges spatially adjacent regions to preserve inter-object context. While it may be beneficial for general visual tasks, this behavior can lead to errors in counting tasks, where the model can be biased by the number of boxes in the perceptual flow. Crucially, as discussed in our ablation study (\Cref{tab:2}), the perceptual flow exerts a strong priming effect on the subsequent reasoning process; consequently, this issue cannot be fully mitigated by simply supplementing fine-grained visual features.

Second, the \emph{planning state} lacks explicit supervision in our current framework, relying solely on passive optimization via the sub-flow level \emph{Efficacy} term in the reward~\Cref{equ:9}. As a result, in challenging scenarios, \eg, OOD scenarios, the model may fail to correctly decompose the necessary evidence. This decomposition failure propagates downstream, inevitably resulting in confusing perceptual behaviors and incorrect reasoning processes.
Addressing these challenges remains a primary focus for our future work.

\clearpage
\subsection{More Examples}
\label{app:E3}
\vspace{10mm}

\begin{figure}[ht]
    \centering
    \includegraphics[width=\linewidth]{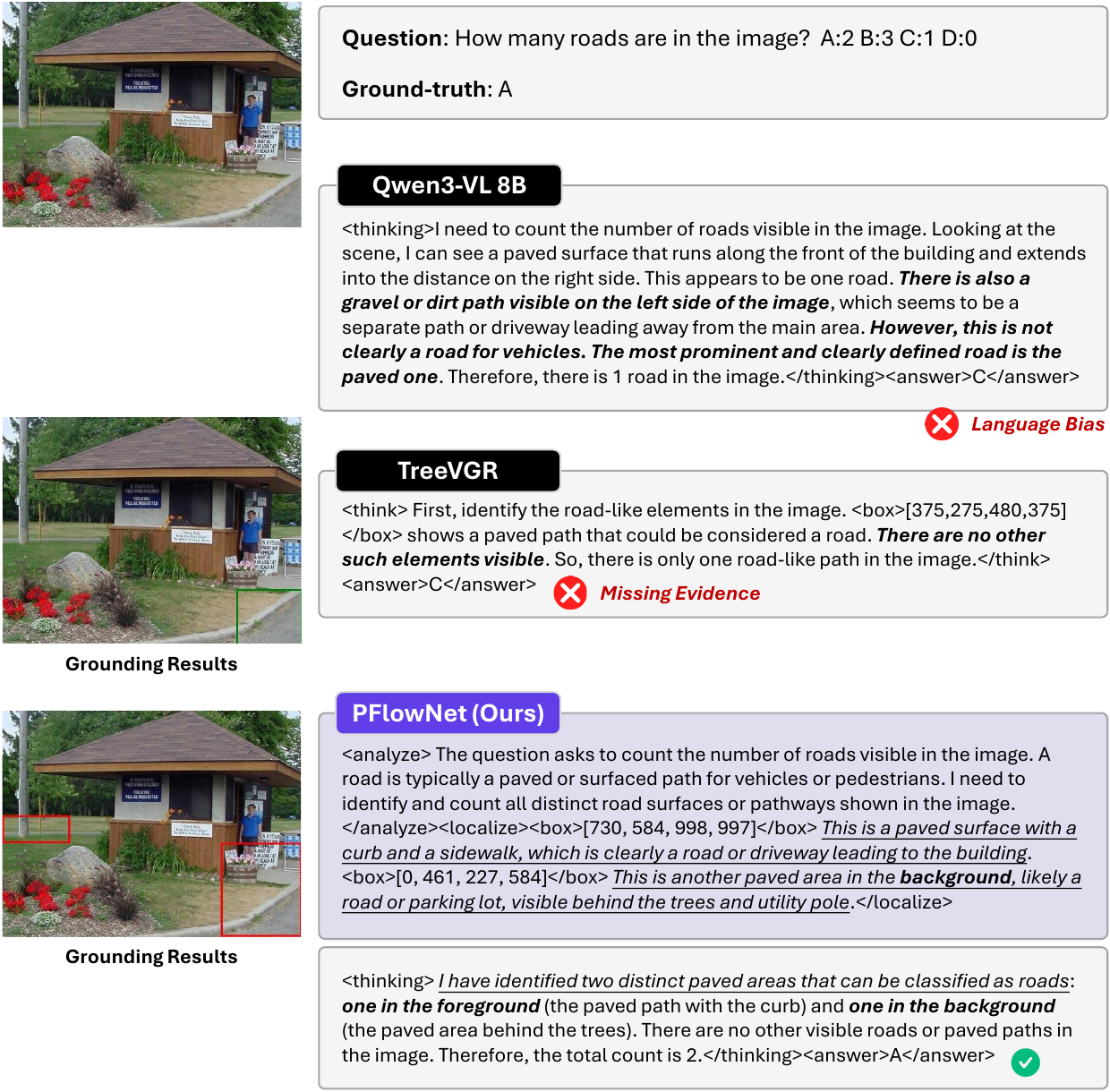}
    \captionof{figure}{Additional qualitative results of visual reasoning. We highlight the important reasoning steps.}
    \label{fig:app3}
\end{figure}

\begin{figure}[h]
    \centering
    \includegraphics[width=\linewidth]{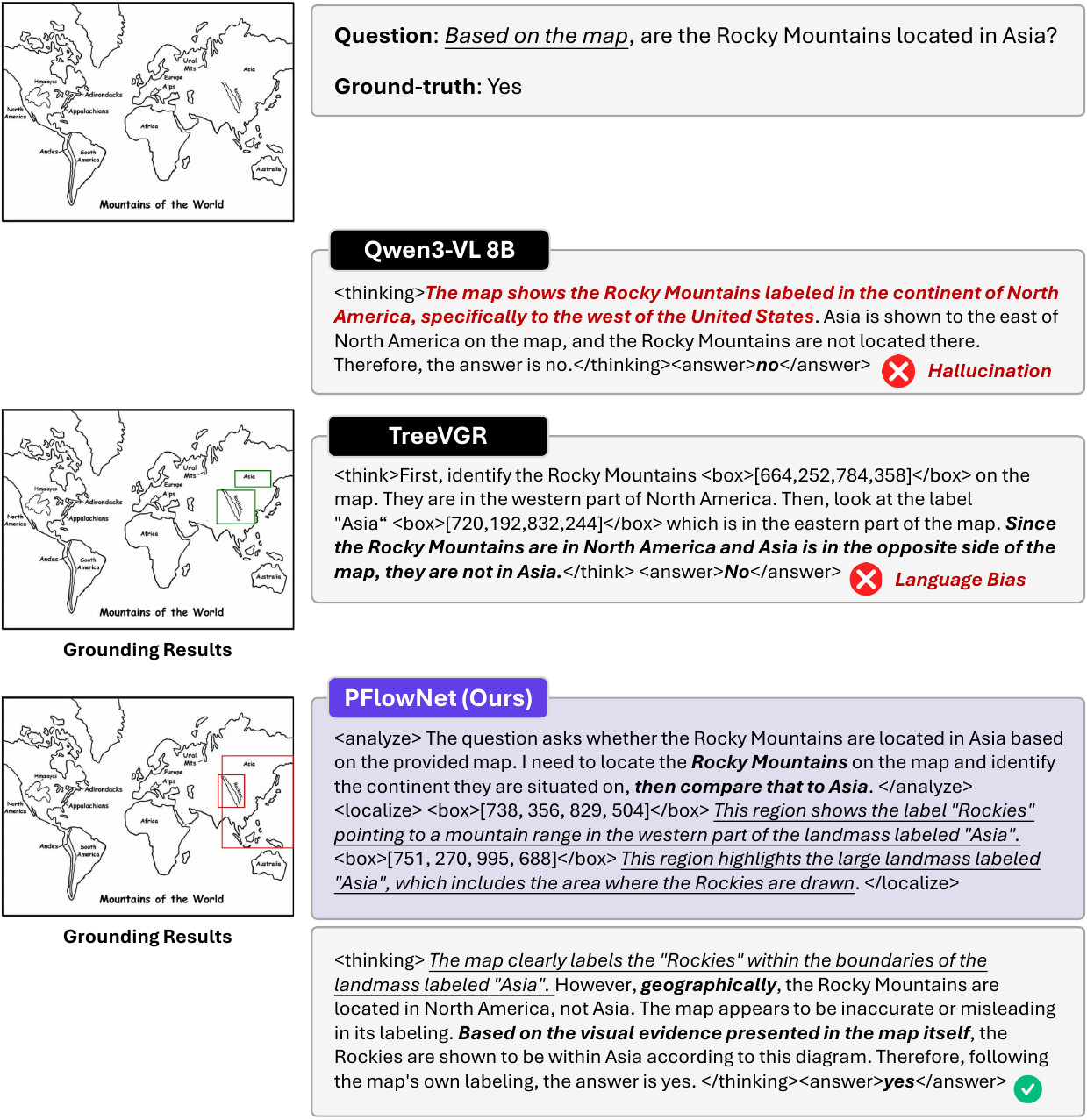}
    \captionof{figure}{Additional qualitative results of visual reasoning. We highlight the important reasoning steps.}
    \label{fig:app4}
\end{figure}

\end{document}